\newcommand{\circled}[1]{\raisebox{.5pt}{\textcircled{\raisebox{-1pt} {#1}}}}
\newcommand{\Boxed}[1]{\tikz[baseline=(char.base)]{\node[draw,inner sep=1pt] (char) {\scriptsize $#1$};}}
\newcommand{\R}{\mathbb{R}}
\renewcommand{\P}{\mathrm{P}}
\newcommand{\proj}{\mathcal{P}}
\newcommand{\p}{\bm{\pi}}
\begin{document}

\title{Graph-based Clustering Revisited: \\ A Relaxation of Kernel $k$-Means Perspective}

\author{\name Wenlong Lyu \email l\_w\_l@seu.edu.cn \\
       \addr School of Computer Science and Engineering \\
       Southeast University\\
       Nanjing, Jiangsu 211189, China
       \AND
       \name Yuheng Jia \email yhjia@seu.edu.cn \\
       \addr Division of Computer Science and Engineering\\
       Southeast University\\
       Nanjing, Jiangsu 211189, China
       \AND
       \name Hui Liu \email h2liu@sfu.edu.hk \\
       \addr Yam Pak Charitable Foundation School of Computing and Information Sciences\\
       Saint Francis University\\
       Hong Kong SAR 
       \AND
       \name Junhui Hou \email jh.hou@cityu.edu.hk \\
       \addr Department of Computer Science\\
       City University of Hong Kong\\
       Hong Kong SAR 
}

\editor{My editor}

\maketitle

\begin{abstract}
    The well-known graph-based clustering methods, including spectral clustering, symmetric non-negative matrix factorization, and doubly stochastic normalization, can be viewed as relaxations of the kernel $k$-means approach. 
    However, we posit that these methods excessively relax their inherent low-rank, nonnegative, doubly stochastic, and orthonormal constraints to ensure numerical feasibility, potentially limiting their clustering efficacy. 
    In this paper, guided by our theoretical analyses, we propose \textbf{Lo}w-\textbf{R}ank \textbf{D}oubly stochastic clustering (\textbf{LoRD}), a model that only relaxes the orthonormal constraint to derive a probabilistic clustering results. 
    Furthermore, we theoretically establish the equivalence between orthogonality and block diagonality under the doubly stochastic constraint. 
    By integrating \textbf{B}lock diagonal regularization into LoRD, expressed as the maximization of the Frobenius norm, we propose \textbf{B-LoRD}, which further enhances the clustering performance. 
    To ensure numerical solvability, we transform the non-convex doubly stochastic constraint into a linear convex constraint through the introduction of a class probability parameter. 
    We further theoretically demonstrate the gradient Lipschitz continuity of our LoRD and B-LoRD enables the proposal of a globally convergent projected gradient descent algorithm for their optimization.
    Extensive experiments validate the effectiveness of our approaches.
    The code is publicly available at \url{https://github.com/lwl-learning/LoRD}.
\end{abstract}
    
\begin{keywords}
    Graph-based clustering, low-rank, kerne $k$-means, block diagonal, doubly stochastic
\end{keywords}

\section{Introduction} \label{sec:intro}

Graph-based clustering \citep{schaeffer2007graph, berahmand2025comprehensive, xue2024comprehensive, kang2021structured, wu2022effective} stands as a foundational technique in data mining and machine learning, aiming to partition data points based on similarity among samples. In this study, we approach graph-based clustering through the lens of kernel $k$-means.

\subsection{Kernel $k$-means and its Variants}
Let $\{x_1, \dots, x_n\}$ be $n$ data points to be grouped into $k$ clusters $G_1, \dots, G_k$, and $S_{ij} = \kappa(x_i, x_j)$ be a symmetric similarity matrix defined by a kernel function $\kappa(x_i, x_j)$, e.g., $\kappa(x_i, x_j) = \exp(-\|x_i - x_j \|^2 / \sigma^2)$ for the Gaussian kernel.
Kernel $k$-means \citep{dhillon2004kernel} seeks to maximize intra-class similarity by partitioning data into clusters $G_1, \dots, G_k$, as expressed by
\begin{equation}
    \max_{G_1, \dots, G_k} \sum_{r=1}^{k} \frac{1}{n_r} \sum_{x_i, x_j \in G_r} S_{ij},
    \label{eq:kmeans_dis}
\end{equation}
where $n_r = |G_r|$ represents the size of $G_r$. To transform Eq. \eqref{eq:kmeans_dis} into matrix form, we introduce the class assignment matrix $V \in \R^{n \times k}$ with $V_{ij} = 1 / \sqrt{n_j}$ if $x_i \in G_j$ and zero otherwise.
Notably, the definition of $V$ aligns with  constraints $V \geq 0, VV^T 1_n = 1_n, V^T V = I_k$, where $1_n$ is an $n$-dimensional vector of ones, $I_k$ is the identity matrix of size $k$, $V \geq 0$ means $\forall i,j, V_{ij} \geq 0$.
Consequently, Eq. \eqref{eq:kmeans_dis} can be equivalently expressed as:
\begin{equation}
    \max_{V \geq 0} \mathrm{Tr}(V^T S V), \; \text{s.t.} \; VV^T 1_n = 1_n, V^T V = I_k.
    \label{eq:kmeans_tr}
\end{equation}
Another equivalent form of Eq. \eqref{eq:kmeans_tr} is written as \citep{ding2005equivalence}:
\begin{equation}
    \min_{V \geq 0} \|S - VV^T \|_F^2, \; \text{s.t.} \; VV^T 1_n = 1_n, V^T V = I_k,
    \label{eq:kmeans_fro}
\end{equation}
where $\| \cdot \|_F$ denotes the Frobenius norm.
In Eq. \eqref{eq:kmeans_tr} and Eq. \eqref{eq:kmeans_fro}, the low-rank constraint $V \in \R^{n \times k}$ and the nonnegative constraint $V \geq 0$ make $V$ class-indicative.
Specifically, $\hat{y_i} = \arg\max_{j} V_{ij}$ can be regarded as an estimation of the cluster index of $x_i$.
Meanwhile, the doubly stochastic constraint $VV^T 1_n = 1_n$ enables $V_{ij}$ to express the probability that $x_i$ belongs to cluster $G_j$, as will be discussed in Theorem \ref{theorem:proba}.
Moreover, the orthonormal constraint $V^T V = I_k$ makes $V$ more discriminatory.
We will demonstrate in Theorem \ref{theorem:blk-diag} that the orthogonality of $V$ is closely related to the block diagonality \citep{lu2018subspace} of $VV^T$.

However, kernel $k$-means poses an NP-hard problem \citep{aloise2009np}. By relaxing the constraints in Eq. \eqref{eq:kmeans_tr} to make it numerically tractable, a series of classic graph-based clustering methods are derived as follows.

\begin{figure}[t]
    \centering
    \resizebox{\textwidth}{!}{
    \begin{tikzpicture}[
        box/.style={draw, thick, rounded corners, align=center, minimum width=3.3cm, minimum height=0.7cm, inner sep=3pt},
        arrow/.style={thick, -{Latex[scale=0.8]}},
        node distance=0.8cm and 4cm,
        font=\scriptsize\bfseries
    ]
    
    \definecolor{myBlue}{RGB}{21, 96, 130}
    \definecolor{myOrange}{RGB}{191, 79, 20}

    \draw[arrow] (-2cm, 2.25cm) -- (-1.2cm, 2.25cm) node [right=0.1cm] {relaxation};
    \draw[arrow, {Latex[scale=0.8]}-{Latex[scale=0.8]}] (-2cm, 2cm) -- (-1.2cm, 2cm) node [right=0.1cm] {equivalence};
    \draw[dashed, arrow] (1.3cm, 2.25cm) -- (2.1cm, 2.25cm) node [right=0.1cm] {approximation with condition};
    \draw[densely dashed, arrow, {Latex[scale=0.8]}-{Latex[scale=0.8]}] (1.3cm, 2cm) -- (2.1cm, 2cm) node [right=0.1cm] {equivalence with condition};
    \node[draw, dashed, text width=9cm, text height=0.5cm, yshift=0.1cm] at (2.5cm, 2cm) {};

    \node[draw, dashed, text width=4.7cm, text height=0.5cm, yshift=0.1cm] at (13.9cm, 2cm) {\shortstack{$\circled{1} \; \circled{2} \; \circled{3} \; \circled{4} \; \circled{5} \; \circled{6}$ \\ recommended reading sequence}};
    
    \node[box, draw=myBlue, label={[xshift=-0.1cm, color=myBlue] kernel $k$-means Eq. \eqref{eq:kmeans_tr}}] (kernel-Tr) {$\max\limits_{V \geq 0} \mathrm{Tr}(V^T S V)$ \\ s.t. $V V^T 1_n = 1_n, V^T V = I_k$};
    
    \node[box, right=of kernel-Tr, xshift=-1cm, label={Spectral Clustering Eq. \eqref{eq:SC}}] (spectral) {$\max\limits_{V^T V = I_k} \mathrm{Tr}(V^T S V)$};
    
    \node[box, draw=myOrange, text width=3.7cm, right=of spectral, label={[color=myOrange, yshift=-0.1cm]B-LoRD Eq. \eqref{eq:B-LoRD}}] (B-LoRD) {$\max\limits_{V \geq 0} \mathrm{Tr}(V^T S V) + \gamma \| V \|_F^2$\\ s.t. $V^T 1_n = \mu, V \mu = 1_n / n$};
    \node[box, draw=myOrange, text width=3.7cm, below=of B-LoRD, label={[xshift=-1.6cm, color=myOrange]LoRD Eq. \eqref{eq:LoRD}}] (LoRD) {$\min\limits_{V \geq 0} \| S - VV^T \|_F^2$\\ s.t. $V^T 1_n = \mu, V \mu = 1_n / n$};
    \node[box, draw=myOrange, text width=3.7cm, below=of LoRD, label={[xshift=-1.6cm, color=myOrange]\shortstack{Intermediate \\ model Eq. \eqref{eq:mdl_1}}}] (model1) { $\min\limits_{V \geq 0} \| S - VV^T \|_F^2$\\ s.t. $V V^T 1_n = 1_n$};
    
    \node[box, left=of LoRD, label={SymNMF Eq. \eqref{eq:SymNMF}}] (symnmf) {$\min\limits_{V \geq 0} \| S - VV^T \|_F^2$};
    \node[box, left=of model1, label={DSN Eq. \eqref{eq:DSN}}] (dsn) {$\min\limits_{Z \geq 0} \| S - Z \|_F^2$\\ s.t. $Z = Z^T, Z 1_n = 1_n$};
    
    \node[box, draw=myBlue, left=of dsn, xshift=1cm, label={[xshift=-1.1cm, color=myBlue] \shortstack{kernel $k$-means \\ Eq. \eqref{eq:kmeans_fro}}}] (kernel-Fro) {$\min\limits_{V \geq 0} \| S - VV^T \|_F^2$ \\ s.t. $V V^T 1_n = 1_n, V^T V = I_k$};

    \node[draw=myOrange, dashed, text width=3.9cm, text height=5.2cm, yshift=0.1cm, label={[color=myOrange]Our proposal}] at (LoRD) {};
    
    \draw[arrow, draw=myBlue, {Latex[scale=0.8]}-{Latex[scale=0.8]}] (kernel-Tr) -- (kernel-Fro) node[right, midway] {$\circled{1}$};
    \draw[arrow, draw=myBlue] (kernel-Tr) -- (spectral) node[anchor=south, midway] {$\circled{2}$ keep $V^T V = I_k$};
    \draw[arrow, draw=myBlue] (kernel-Fro.north east) -- (symnmf.south west) node[anchor=south, midway, sloped] {$\circled{2}$ keep $V \geq 0$};
    \draw[arrow, draw=myBlue] (kernel-Fro) -- (dsn) node[anchor=south, midway] {$\circled{2}$ relax $V^T V = I_k$} node[anchor=north, midway] {and $Z = VV^T$};
    \draw[dashed, arrow, draw=myOrange] (B-LoRD) -- (spectral) node[anchor=south, midway] {$\circled{6}$ relax constraints} node[anchor=north, midway] {and $\gamma \geq -\lambda_{\min}(S)$};
    \draw[arrow, draw=myOrange] (LoRD) -- (symnmf) node[anchor=south, midway] {$\circled{6}$ keep $V \geq 0$};
    \draw[arrow, draw=myOrange] (model1.north west) -- (symnmf.south east) node[anchor=south, midway, sloped] {$\circled{6}$ keep $V \geq 0$};
    \draw[arrow, draw=myOrange] (model1) -- (dsn) node[anchor=south, midway] {$\circled{6}$ relax $Z = VV^T$};
    \draw[densely dashed, arrow, {Latex[scale=0.8]}-{Latex[scale=0.8]}, draw=myOrange] (model1) -- (LoRD) node[anchor=west, midway] {\shortstack{constraint reduction \\ (Theorem \ref{theorem:partition-space} and \ref{theorem:proba})}} node[anchor=east, midway] {$\circled{4}$};
    \draw[dashed, arrow, draw=myOrange] (B-LoRD) -- (LoRD) node[anchor=west, midway] {\shortstack{block diagonality \\ (Theorem \ref{theorem:blk-diag})}} node[anchor=east, midway] {$\circled{5}$};

    \draw[arrow, draw=myBlue] (kernel-Fro.south) -- ([yshift=-0.6cm] kernel-Fro.south) -- ([xshift=5cm, yshift=-0.6cm] dsn.south) node[anchor=south, midway] {$\circled{3}$ relax $V^T V = I_k$} -- ([yshift=-0.6cm] model1.south) -- (model1.south);
    
    \draw[dashed, arrow, draw=myBlue] ([xshift=-0.3cm] kernel-Tr.north east) -- ([xshift=-0.3cm, yshift=0.6cm] kernel-Tr.north east) -- ([xshift=10cm, yshift=0.6cm] kernel-Tr.north east) node[anchor=south, midway] {$\circled{6}$ replace $V^T V = I_k$ with regularization $\|V \|_F^2$} -- ([xshift=0.5cm, yshift=0.6cm] B-LoRD.north west) -- ([xshift=0.5cm] B-LoRD.north west);
    \end{tikzpicture}
    }
    \caption{Schematic diagram of graph-based clustering methods. 
    $\circled{1}$: The models in Eq. \eqref{eq:kmeans_tr} and Eq. \eqref{eq:kmeans_fro} are equivalent.
    $\circled{2}$: SC, SymNMF and DSN are overly relaxed kernel $k$-means.
    $\circled{3}$: The model in Eq. \eqref{eq:mdl_1} only relaxes the least important orthonormal constraint $V^T V = I_k$.
    $\circled{4}$: The non-convex constraint $VV^T 1_n = 1$ can be reduced to a convex constraint $V^T 1_n = \mu, V\mu = 1_n /n$ by incorporating the class probability parameter $\mu$ (Theorems \ref{theorem:partition-space} and \ref{theorem:proba}), making our LoRD numerically solvable and interpretable in terms of probability.
    $\circled{5}$: The $k$-block diagonality of $VV^T$ can be adjusted by tuning $\gamma \in [-\lambda_{\max}(S), -\lambda_{\min}(S)]$ in B-LoRD.
    $\circled{6}$: kernel $k$-means, SC, SymNMF, and DSC can be seen as a relaxation or approximation of our LoRD and B-LoRD.
    }
    \label{fig:schematic}
\end{figure}
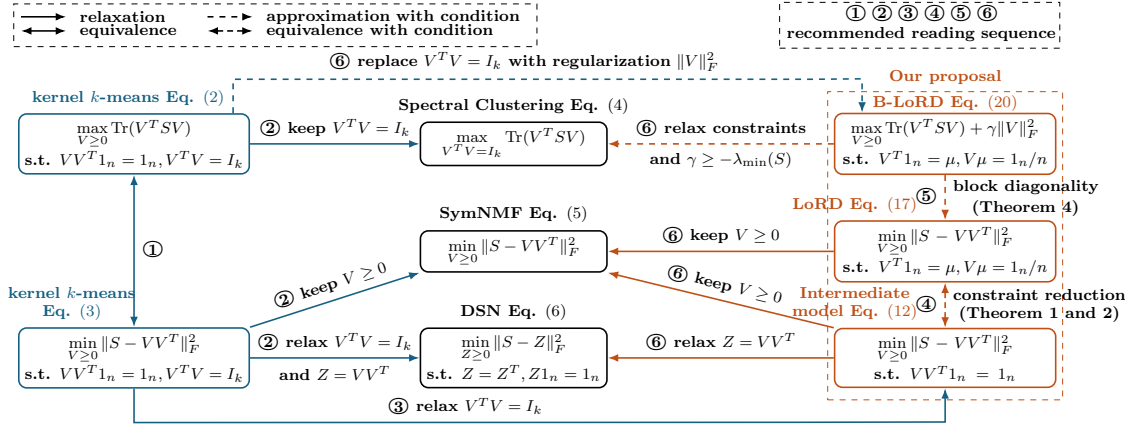

\textit{1) Spectral clustering (SC) \citep{von2007tutorial}.} 
SC only retains the orthonormal constraint in Eq. \eqref{eq:kmeans_tr}, i.e.,
\begin{equation}
    \max_{V^T V = I_k} \mathrm{Tr}(V^T S V).
    \label{eq:SC}
\end{equation}
The optimum of Eq. \eqref{eq:SC} is given by the eigenvectors of $S$ corresponding to the largest $k$ eigenvalues.
However, since the nonnegative constraint $V \geq 0$ is relaxed, $V$ cannot directly provide the clustering result.
Thus, SC requires post-processing to obtain the clustering result, such as performing $k$-means on $V$.

\textit{2) Symmetric non-negative matrix factorization (SymNMF) \citep{kuang2012symmetric, kuang2015symnmf}.}
In contrast to SC, SymNMF only retains the nonnegative constraint in Eq. \eqref{eq:kmeans_fro}, i.e.:
\begin{equation}
    \min_{V \geq 0} \|S - VV^T \|_F^2.
    \label{eq:SymNMF}
\end{equation}
SymNMF can obtain clustering results without post-processing.
However, the probabilistic interpretability and discriminability are compromised because the doubly stochastic constraint is relaxed.

\textit{3) Doubly stochastic normalization (DSN) \citep{zass2005unifying, zass2006doubly}.}
DSN relaxes the orthonormal constraint $V^T V = I_k$ and the low-rank constraint by parameterizing $Z = VV^T$ in Eq. \eqref{eq:kmeans_fro} to solve the following convex problem:
\begin{equation}
    \min_{Z \geq 0} \|S - Z \|_F^2, \; \text{s.t.} \; Z = Z^T, Z1_n = 1_n.
    \label{eq:DSN}
\end{equation}
The probabilistic interpretability of $Z$ is analyzed in \citep{zass2005unifying}.
However, the low-rank structure in $Z$ is relaxed, resulting in the need for post-processing to extract the clustering result from $Z$, such as performing SC on it.

In conclusion, SC, SymNMF, and DSN are essentially relaxations of the kernel $k$-means, and their relationships are schematically drawn in Fig. \ref{fig:schematic}.
However, the necessity for relaxation to ensure numerical tractability may limit the overall clustering performance of these methods.

Among the constraints in Eq. \eqref{eq:kmeans_tr} and Eq. \eqref{eq:kmeans_fro}, the orthonormal constraint $V^T V = I_k$ and the doubly stochastic constraint $VV^T 1_n = 1_n$ are the most difficult to handle in optimization.
From the perspective of clustering, the orthonormal constraint is considered the least crucial \citep{zass2005unifying, zass2006doubly}, because removing $V^T V = I_k$ only transforms the original problem from hard clustering to soft clustering.
In this paper, to handle the doubly stochastic constraint, we prove in Theorem \ref{theorem:partition-space} and Theorem \ref{theorem:proba} that $VV^T 1_n = 1_n$ can be reduced to a linear convex constraint $V^T 1_n = \mu, V\mu = 1_n / n$, where $\mu$ is a user-specified parameter associated with class prior probability.
This observation enables the optimization problem to be solved efficiently.
Moreover, we establish the connection between the orthogonality of $V$ and the block diagonality of $VV^T$ in Theorem \ref{theorem:blk-diag}, so that the relaxed orthonormal constraint can be remedied by adding a block diagonal regularization of $VV^T$, as introduced in the subsequent subsection.

\subsection{Block Diagonal Structure}
An ideal clustering structure for a similarity matrix $S \in \R^{n \times n}$ with $n$ data points is one that has exactly $k$ connected components, where $k$ is the number of clusters, and each connected component corresponds to a cluster.
Such an $S$ can be expressed as a $k$-block diagonal matrix \citep{feng2014robust, lu2018subspace} as follows:
\begin{equation}
    S = \begin{bmatrix}
        S_1 & 0 & \cdots & 0 \\
        0 & S_2 & \cdots & 0 \\
        \vdots & \vdots & \ddots & \vdots \\
        0 & 0 & \cdots & S_k \\
    \end{bmatrix},
    \; \text{where} \; S_i \in \R^{n_i \times n_i}.
    \label{eq:blk-diag}
\end{equation}
According to the spectral graph theorem \citep{von2007tutorial}, the number of connected components of $S$ is equal to the multiplicity $k$ of the eigenvalue $0$ of the Laplacian matrix $L_{S} = \mathrm{Diag}(S 1_n) - S$, where $\mathrm{Diag}(z)$ is a diagonal matrix with $z$ as its diagonal elements.
Building on this insight, the $k$-block diagonal structure of $S$ can be achieved by constraining $\mathrm{rank}(L_S) = n - k$ \citep{wang2016structured}.
However, the constraint is difficult to handle directly in optimization.
The most common approach is to relax it to a regularization $\|S \|_{\Boxed{k}}$ using Ky Fan's theorem \citep{wang2016structured, xie2017implicit, nie2014clustering}, i.e.,
\begin{equation}
    \|S \|_{\Boxed{k}} := \sum_{i=n-k+1}^{n} \lambda_i(L_S) 
    = \min_{\substack{0 \preceq W \preceq I_n \\ \mathrm{Tr}(W) = k}} \langle L_S, W \rangle,
\end{equation}
where $\lambda_i(L_S)$ is the $i$-th largest eigenvalue of $L_S$, and $0 \preceq W \preceq I_n$ means that $0 \leq \lambda_{\min}(W) \leq \lambda_{\max}(W) \leq 1$.
Despite this relaxation, $\|S \|_{\Boxed{k}}$ requires an auxiliary variable $W$ to be alternatively optimized. 
Therefore, learning a $k$-block diagonal structure remains an optimization challenge.

When combining the doubly stochastic constraint $Z = Z^T$, $Z 1_n = 1_n$ in DSN, the block diagonality can also be boosted in different ways, possibly more easily than $\|Z \|_{\Boxed{k}}$.
For example, motivated by the Davis-Kahan theorem, two constraints $\sigma_{k}(Z) \geq c_1$ and $\sigma_{k+1}(Z) \leq c_2$ are introduced in \citep{park2017learning}, where $\sigma_{k}(Z)$ is the $k$-th largest singular value of $Z$, and $c_1, c_2 \in [0, 1]$ are hyper-parameters.
When $c_1$ is close to one, $\lambda_{n - k + 1}(L_{Z}) = 1 - \lambda_{k}(Z) \leq 1 - c_1^2$ becomes close to zero.
Thus, $\sigma_{k}(Z) \geq c_1$ can be seen as a relaxation of the $k$-block diagonal constraint $\mathrm{rank}(L_Z) \leq n - k$.
More recently, \citep{julien2022Learning} noted that if a matrix $Z$ is both doubly stochastic and idempotent (i.e., $Z^2 = Z$), then $Z$ is block diagonal.
Thus, the idempotent condition $Z^2 = Z$ is added as a constraint.
However, two common issues exist in the above methods: 
1) High computational complexity ($\mathcal{O}(n^3)$), as the low-rank structure of $Z$ is relaxed;
2) They only focus on enhancing block diagonality, as the regularization coefficient is nonnegative.

In this paper, we demonstrate in Theorem \ref{theorem:blk-diag} that when further combining the low-rank structure ($Z = VV^T$), the block diagonality of $VV^T$ can be enhanced (resp. weakened) by maximizing (resp. minimizing) $\|V \|_F^2$.

\subsection{Contributions} \label{sec:contribution}

The main contributions of this paper are summarized as follows.

\begin{enumerate}
    \item We propose \textbf{LoRD}, a \textbf{Lo}w-\textbf{R}ank \textbf{D}oubly stochastic clustering model through a systematic literature review (Fig. \ref{fig:schematic}), which only relaxes the least important orthonormal constraint $V^T V = I_k$ in Eq. \eqref{eq:kmeans_fro}.
Removing $V^T V = I_k$ from Eq. \eqref{eq:kmeans_fro} is equivalent to transforming hard clustering into probabilistic clustering (Theorem \ref{theorem:proba}).
Moreover, to ensure numerical solvability, we demonstrate that the quadratic non-convex stochastic constraint $VV^T 1_n = 1_n / n$ can be equivalently represented as a linear convex constraint $V^T 1_n = \mu, V \mu = 1_n/n$ (Theorem \ref{theorem:partition-space}), where $\mu$ is a user-specified parameter associated with class prior probability.

\item To further learn the $k$-block diagonal structure, we theoretically show that minimizing $\|VV^T \|_{\Boxed{k}}$ is equivalent to maximizing $\|V \|_F^2$ under the doubly stochastic constraint (Theorem \ref{theorem:blk-diag}).
Accordingly, we propose a low-rank block diagonal doubly stochastic clustering model, namely \textbf{B-LoRD}, which is formulated as a quadratic optimization problem with linear convex constraints.
Unlike existing methods, B-LoRD can enhance or weaken the block diagonal structure by tuning the hyper-parameter $\gamma$ from $-\lambda_{\max}(S)$ to $-\lambda_{\min}(S)$.

\item We propose an efficient projected gradient descent algorithm to solve LoRD and B-LoRD, with $\mathcal{O}(n^2 k)$ complexity per iteration, which can be reduced to $\mathcal{O}(n \log(n) k)$ by exploiting the sparsity of $S$.
This complexity is more efficient than existing DSN-based methods with $\mathcal{O}(n^3)$ complexity.
In addition, we theoretically prove that the objective functions of LoRD and B-LoRD are gradient Lipschitz continuous (Theorem \ref{theorem:lipschitz}), which enables the automatic setting of the step size of descent, and ensures their global convergence (Lemma \ref{theorem:convergence}).

\end{enumerate}

The remainder of this paper is organized as follows. Sec. \ref{sec:RW} briefly reviews existing graph-based clustering methods highly relevant to our methods. 
In Sec. \ref{sec:models}, we propose \textbf{Lo}w-\textbf{R}ank \textbf{D}oubly stochastic clustering (LoRD) and \textbf{B}lock diagonality regularized LoRD (B-LoRD), which are then numerically solved by an efficient yet effective projected gradient descent algorithm in Sec. \ref{sec:algorithm}.
In Sec. \ref{sec:experiments}, we experimentally evaluate the performance of our LoRD and B-LoRD on both synthetic and real-world datasets.
Finally, we conclude this paper in Sec. \ref{sec:conclusion}.

\section{Prior Graph-based Clustering Methods}\label{sec:RW}
In addition to the variants of kernel $k$-means and block diagonal structures enhanced methods in Sec. \ref{sec:intro}, in this section, we further briefly review prior graph-based clustering methods that are very relevant to our work.
We also refer readers to \citet{berahmand2025comprehensive, xue2024comprehensive} for a comprehensive review.   

Semi-definite programming (SDP) \citep{peng2007approximating, kulis2007fast} provides a convex relaxation of kernel $k$-means, formulated as:
\begin{equation}
    \max_{Z} \langle S, Z \rangle, \quad \text{s.t.} \quad Z \succeq 0, Z \geq 0, Z= Z^T, Z 1_n = 1_n, \mathrm{Tr}(Z) = k,
\end{equation}
where $Z \succeq 0$ denotes $Z$ is semi-positive defined.
The gap between SDP and kernel $k$-means lies in the idempotency constraint $Z = Z^2$ \citep{kulis2007fast}, which is relaxed in SDP.
Owing to its convexity, SDP enjoys well-established statistical guarantees \citep{giraud2019partial, chen2021cutoff}.
However, SDP suffers from its high complexity (i.e., $\mathcal{O}(n^{3.5})$ per iteration \citep{sun2020sdpnal+}), limiting its applicability.

To reduce the complexity of SDP, \citet{zhuang2024statistically} proposed NLR, which leverages a low-rank factorization $Z = UU^T$ and directly optimizes over $U \in \R^{n \times r}$ with $r \geq k$.
The NLR is formulated as:
\begin{equation}
    \max_{U \geq 0} \mathrm{Tr}(U^T S U), \; \text{s.t.} \; \|U \|_F^2 = k, UU^T 1_n = 1_n.
    \label{eq:stat_kmeans}
\end{equation}
Compared with Eq. \eqref{eq:kmeans_tr}, NLR can be viewed as a variant of kernel $k$-means where the orthogonality constraint $V^T V = I_k$ with $V \in \R^{n \times k}$ is relaxed to $\|V\|_F^2 = k$ and extended to $V \in \R^{n \times r}$.
Benefited from the low-rank structure, the per-iteration complexity of NLR is reduced to $\mathcal{O}(n^2 r t)$, where $t$ denotes the number of primal descent steps.
The primary optimization challenge of NLR arises from the doubly stochastic constraint $UU^T 1_n = 1_n$, which is quadratic and non-convex.
Consequently, the algorithm in \citet{zhuang2024statistically} requires careful tuning of both the step size and regularization coefficient, and the constraint $UU^T 1_n = 1_n$ is not strictly enforced.

The doubly stochastic constraint can instead be handled more effectively using the Majorization–Minimization (MM) framework.
For instance, the graph-based clustering method DCD \citep{yang2012clustering, yang2016low}  imposes both low-rank and doubly stochastic structures, formulated as:
\begin{equation}
    \min_{W \geq 0} D_{KL}(S || W \mathrm{Diag}^{-1}(W^T 1_n) W^T), \quad \text{s.t.} \quad W 1_k = 1_n,
\end{equation}
where $D_{KL}(\cdot | \cdot)$ denotes the KL divergence, and the learned similarity matrix $W \mathrm{Diag}^{-1}$ $(W^T 1_n) W^T$ is both low-rank and doubly stochastic.
The complexity per iteration of DCD is $\mathcal{O}(nkq)$, where $q$ denotes the sparsity of the input similarity matrix $S$ (typically $q = \mathcal{O}(\log n)$).

In contrast to DCD, we introduce a class prior probability vector $\mu \in \R^k$, which reduces the doubly stochastic constraint $VV^T 1_n = 1_n$ to the linear and convex constraints $U^T 1_n = \mu, U \mu = 1_n / n$.
By exploiting the Lipschitz continuity of the gradients in our formulation, we design a projected gradient descent algorithm that achieves the same $\mathcal{O}(nkq)$ complexity for sparse $S$.

\section{Proposed Methods} \label{sec:models}
\subsection{LoRD: Graph-Based Probabilistic Clustering}
In contrast to the rigid partitions sought by kernel $k$-means, probabilistic clustering \citep{zass2005unifying} aims to determine the probability that $x_i$ belongs to a typical cluster, i.e., $\P(y_i = j | x_i), j = 1,\dots, k$.
However, the orthonormal constraint $V^T V = I_k$ in kernel $k$-means forces $V$ to be a hard clustering result, i.e., each row of $V$ has only one non-zero element. To this end, we relax $V^T V = I_k$ in Eq. \eqref{eq:kmeans_fro} to obtain a soft clustering result, i.e.,
\begin{equation}
    \min_{V \in \R^{n \times k}} \|S - VV^T \|_F^2, \; \text{s.t.} \; V \geq 0, VV^T 1_n = 1_n.
    \label{eq:mdl_1}
\end{equation}
Unlike SC, SymNMF, and DSN, Eq. \eqref{eq:mdl_1} solely relaxes the least crucial orthogonality constraint, which is necessary to obtain probabilistic clustering.
However, Eq. \eqref{eq:mdl_1} is difficult to optimize due to the non-convex quadratic constraint $VV^T 1_n = 1_n$.
To address this, we first express the constraint space of Eq. \eqref{eq:mdl_1} as $\Omega$:
\begin{equation}
    \Omega := \{V \in \R^{n \times k} \; | \; V \geq 0, V V^T 1_n = 1_n / n \},
\end{equation}
where we replace $VV^T 1_n = 1_n$ with $VV^T 1_n = 1_n / n$, which is equivalent to a scalar multiplication of $V$.
To reduce the quadratic constraint in $\Omega$, we denote $\mu = V^T 1_n$, so that $VV^T 1_n = 1_n / n$ is equivalently written as $V \mu = 1_n / n$.
In other words, we construct a subspace of $\Omega$ determined by $\mu$:
\begin{equation}
    \!\!\Omega(\mu) \!:=\! \{V \!\in\! \R^{n \times k} | V \geq 0, V^T 1_n \!=\! \mu, V \mu \!=\! 1_n / n \}.
\end{equation}
To ensure $\Omega(\mu)$ is a subspace of $\Omega$, we must have $\mu \geq 0$ and $\|\mu \|_2^2 = 1_n^T VV^T 1_n = 1$, indicating that $\mu$ should lie on the space of $\mathbb{S}_{+}^{k} = \{\mu \in \R^{k} \; | \; \mu \geq 0, \|\mu \|_2^2 = 1\}$, i.e., the nonnegative unit sphere embedded in $\R^{k}$.
The relationship between $\Omega(\mu)$ and $\Omega$ is formally stated in Theorem \ref{theorem:partition-space} below:
\begin{theorem} \label{theorem:partition-space}
    When $\mu$ is varied over $\mathbb{S}_{+}^{k}$, the family of space $\Omega(\mu)$ is a partition of $\Omega$, i.e.:
    \begin{itemize}
        \item $\forall \mu \in \mathbb{S}_{+}^{k}$, $\Omega(\mu)$ is non-empty, i.e., $\Omega(\mu) \ne \emptyset$.
        \item When $\mu$ is varied over $\mathbb{S}_{+}^{k}$, $\Omega$ is the union of $\Omega(\mu)$, i.e., $\Omega = \bigcup_{\mu \in \mathbb{S}_{+}^{k}} \Omega(\mu)$.
        \item $\forall \mu, \nu \in \mathbb{S}_{+}^{k}$ where $\mu \ne \nu$, the intersection of $\Omega(\mu)$ and $\Omega(\nu)$ is empty, i.e., $\Omega(\mu) \cap \Omega(\nu) = \emptyset$.
    \end{itemize}
\end{theorem}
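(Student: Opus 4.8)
The plan is to verify the three partition properties directly from the defining linear constraints, using the map $V \mapsto V^T 1_n$ as the mechanism that assigns each $V \in \Omega$ its index $\mu$. Everything reduces to short algebraic substitutions, so I would organize the proof as one paragraph per bullet.

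For non-emptiness, I would exhibit an explicit witness rather than argue abstractly. Given $\mu \in \mathbb{S}_{+}^{k}$, the natural candidate is the rank-one matrix $V = \tfrac{1}{n} 1_n \mu^T$: it is nonnegative since $1_n, \mu \geq 0$; it satisfies $V^T 1_n = \tfrac{1}{n}\mu(1_n^T 1_n) = \mu$ because $1_n^T 1_n = n$; and it satisfies $V\mu = \tfrac{1}{n}1_n(\mu^T \mu) = \tfrac{1}{n}1_n \|\mu\|_2^2 = 1_n/n$ precisely because $\|\mu\|_2^2 = 1$. Hence $V \in \Omega(\mu)$, so $\Omega(\mu) \neq \emptyset$.

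For the union $\Omega = \bigcup_{\mu} \Omega(\mu)$, I would prove both inclusions. The direction $\bigcup_\mu \Omega(\mu) \subseteq \Omega$ is immediate: for $V \in \Omega(\mu)$ one substitutes $V^T 1_n = \mu$ into $VV^T 1_n = V(V^T 1_n) = V\mu = 1_n/n$, so $V \in \Omega$. The reverse inclusion is the only place needing a small observation. Given $V \in \Omega$, set $\mu := V^T 1_n$; then $\mu \geq 0$ is inherited from $V \geq 0$, and the sphere condition is forced by the quadratic constraint through $\|\mu\|_2^2 = (V^T 1_n)^T (V^T 1_n) = 1_n^T (VV^T 1_n) = 1_n^T (1_n/n) = 1$, so $\mu \in \mathbb{S}_{+}^{k}$. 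The remaining constraint $V\mu = VV^T 1_n = 1_n/n$ then places $V$ in $\Omega(\mu)$.

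Disjointness follows because the index is uniquely recovered from $V$: if $V \in \Omega(\mu) \cap \Omega(\nu)$ then $\mu = V^T 1_n = \nu$, contradicting $\mu \neq \nu$. No step is genuinely difficult here; the one point deserving care is recognizing that the normalization $\|\mu\|_2^2 = 1$ defining $\mathbb{S}_{+}^{k}$ is not an extra assumption but exactly the trace of the constraint $VV^T 1_n = 1_n/n$ under the identity $\|V^T 1_n\|_2^2 = 1_n^T VV^T 1_n$, which is what makes $\mathbb{S}_{+}^{k}$ the correct index set for the partition.
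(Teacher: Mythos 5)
Your proposal is correct and follows essentially the same route as the paper's proof: the same rank-one witness $\tfrac{1}{n}1_n\mu^T$ for non-emptiness, the same index map $V \mapsto V^T 1_n$ (with $\|V^T 1_n\|_2^2 = 1_n^T V V^T 1_n = 1$ forcing membership in $\mathbb{S}_{+}^{k}$) for the union, and the same uniqueness-of-index argument for disjointness. Your write-up is, if anything, slightly more explicit in verifying the constraints than the paper's version.
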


\begin{proof}
    See Appendix \ref{prooftheorem1}.
\end{proof}

\begin{wrapfigure}{r}{0.3\textwidth}
    \centering
    \vspace{-12pt}
    \begin{tikzpicture}[scale=0.65]
        \definecolor{myBlue}{RGB}{21, 96, 130}
        \definecolor{myOrange}{RGB}{191, 79, 20}

        \draw[thick, color=myBlue, fill=myBlue!5] (0, 0) -- (2, 1) -- (2, -1.5) -- (-1.5, -1.5) -- (-2.5, 1) node [right=0.3]{$\Omega$} -- (0, 0); 

        \draw[thick, color=myOrange] (0, 0) -- (2, 0);
        \draw[thick, color=myOrange] (0, 0) -- (2, -1.5);
        \draw[thick, color=myOrange] (0, 0) -- (0.5, -1.5);
        \draw[thick, color=myOrange] (0, 0) -- (-1.5, -1.5);
        \draw[thick, color=myOrange] (0, 0) -- (-1.9, -0.5);

        \node[color=myOrange, font=\footnotesize] (mu) at (3.2, -0.5) {$\Omega(\mu)$};
        \draw[-latex] (4/3, -1) to[out=60, in=180] (mu.180);
    \end{tikzpicture}
\end{wrapfigure}
For a better understanding of Theorem \ref{theorem:partition-space}, the relationship between $\Omega$ and $\Omega(\mu)$ is schematically illustrated in the inset figure: any $V \in \Omega$ (non-convex, blue face) lies on an $\Omega(\mu)$ (convex, orange segment) that is determined by a certain $\mu \in \mathbb{S}_{+}^{k}$.
Therefore, it is natural to ask: \textit{What is the physical meaning of $\mu$, and which $\mu$ should we expect?}
The answers to these questions are given in Theorem \ref{theorem:proba} below.
\begin{theorem} \label{theorem:proba}
    Let $\P(c_j)$ be the prior probability of the $j$-th class, and $\P(x_i)$ the prior probability of $x_i$, assumed to be uniform, i.e., $\P(x_i) = 1 /n$.
    When $\mu = [\sqrt{\P(c_1)}, \dots, \sqrt{\P(c_k)}]^T$, any $V \in \Omega(\mu)$ can be expressed as:
    \begin{equation}
        V_{ij} = \frac{\P(y_i = j | x_i) \P(x_i)}{\sqrt{\P(c_j)}}.
    \end{equation}
    Thus, $V_{ij}$ is associated with the conditional probability of $x_i$ belonging to the $j$-th class $\P(y_i = j | x_i)$, indicating that $V$ corresponds to a probabilistic clustering result.

    Furthermore, the pairwise probability matrix can be recovered by $Z = n^2 V \mathrm{Diag}(\mu \odot \mu) V^T$, such that 
    \begin{equation}
        Z_{ij} = \P(y_i = y_j | x_i, x_j).
    \end{equation}
    In other words, $Z_{ij}$ describes the conditional probability of $x_i$ and $x_j$ belonging to the same class.
\end{theorem}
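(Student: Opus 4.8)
The plan is to establish the probabilistic interpretation by reverse-engineering the candidate formula: rather than deriving it from scratch, I would \emph{define} a conditional distribution out of the entries of $V$ and then verify that the two linear constraints defining $\Omega(\mu)$ are precisely the probability axioms. Concretely, given any $V \in \Omega(\mu)$ with $\mu_j = \sqrt{\P(c_j)}$, I would set $q_{ij} := n\mu_j V_{ij}$ and claim $q_{ij} = \P(y_i = j \mid x_i)$. Rearranging this definition and substituting $\P(x_i) = 1/n$ and $\mu_j = \sqrt{\P(c_j)}$ immediately recovers the stated expression $V_{ij} = \P(y_i=j\mid x_i)\P(x_i)/\sqrt{\P(c_j)}$, so the content of the first claim reduces to showing that each row $q_{i\cdot}$ is a genuine probability distribution over the $k$ classes.

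First I would check nonnegativity: $q_{ij} = n\mu_j V_{ij} \geq 0$ follows from $V \geq 0$ and $\mu \geq 0$ on $\mathbb{S}_{+}^{k}$. Next I would check normalization over the class index, $\sum_{j} q_{ij} = n\sum_j \mu_j V_{ij} = n\,(V\mu)_i = n\cdot(1/n) = 1$, which is exactly where the constraint $V\mu = 1_n/n$ is consumed; hence every row of $q$ is a valid conditional law. To confirm that the specific choice $\mu_j = \sqrt{\P(c_j)}$ is the consistent one, I would verify the induced class marginal, $\sum_i \P(y_i=j\mid x_i)\P(x_i) = \sum_i q_{ij}/n = \mu_j \sum_i V_{ij} = \mu_j\,(V^T 1_n)_j = \mu_j^2 = \P(c_j)$, which is precisely where the remaining constraint $V^T 1_n = \mu$ enters. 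This cleanly exhibits the two linear constraints of $\Omega(\mu)$ as encoding, respectively, normalization of the posterior and consistency of the prescribed class prior.

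For the pairwise statement I would compute the entries of $Z = n^2 V\,\mathrm{Diag}(\mu\odot\mu)\,V^T$ directly: $Z_{ij} = n^2\sum_r \mu_r^2 V_{ir}V_{jr} = \sum_r (n\mu_r V_{ir})(n\mu_r V_{jr}) = \sum_r \P(y_i=r\mid x_i)\,\P(y_j=r\mid x_j)$, invoking the first part. Assuming the cluster labels are conditionally independent given the data, i.e. $y_i \perp y_j \mid (x_i,x_j)$ with each label depending only on its own point, this sum equals $\sum_r \P(y_i=r,\,y_j=r\mid x_i,x_j) = \P(y_i=y_j\mid x_i,x_j)$, which is the desired identity.

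The calculations themselves are routine; the main obstacle is conceptual rather than technical. The crux is recognizing the correct normalization $q_{ij}=n\mu_j V_{ij}$ that turns the abstract linear constraints into probability axioms, and being explicit about the conditional-independence assumption required in the pairwise identity, since without it the sum $\sum_r \P(y_i=r\mid x_i)\P(y_j=r\mid x_j)$ would not collapse to $\P(y_i=y_j\mid x_i,x_j)$.
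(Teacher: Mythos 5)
Your proof is correct, and for the first claim it runs in the opposite (and arguably the more faithful) direction to the paper's. The paper starts from the probabilistic model: it takes the posteriors $\P(y_i = j \mid x_i)$ as given, defines $V_{ij}$ by the stated formula, and verifies $V \in \Omega(\mu)$ --- using normalization of the posterior to get $(V\mu)_i = 1/n$, and Bayes' rule $\P(y_i = j \mid x_i)\P(x_i) = \P(x_i \mid y_i = j)\P(c_j)$ together with $\sum_i \P(x_i \mid y_i = j) = 1$ to get $(V^T 1_n)_j = \mu_j$. That establishes the inclusion $\{\text{probabilistically parameterized } V\} \subseteq \Omega(\mu)$. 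You instead take an arbitrary $V \in \Omega(\mu)$, define $q_{ij} := n\mu_j V_{ij}$, and verify that the constraints of $\Omega(\mu)$ are precisely the probability axioms: nonnegativity from $V \geq 0$, row normalization from $V\mu = 1_n/n$, and class-prior consistency from $V^T 1_n = \mu$. This proves $\Omega(\mu) \subseteq \{\text{probabilistically parameterized } V\}$, which is the literal content of ``any $V \in \Omega(\mu)$ can be expressed as \dots''; so your argument proves exactly the stated direction (surjectivity of the probabilistic parameterization) where the paper proves the converse (that the parameterization lands inside $\Omega(\mu)$). Together the two inclusions give the full correspondence, and your route has the added merit of consuming only the linear constraints, with no appeal to Bayes' rule. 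One shared caveat, harmless but worth noting, is that recovering the displayed formula requires $\mu_j > 0$ (else the $j$-th column of $V$ vanishes and the ratio is $0/0$). For the pairwise claim, your computation and your explicitly stated conditional-independence assumption ($\P(y_i \mid x_i, x_j) = \P(y_i \mid x_i)$ and independence of $y_i, y_j$ given the data) coincide with the paper's proof essentially verbatim.
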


\begin{proof}
    See Appendix \ref{prooftheorem2}.
\end{proof}

In Theorem \ref{theorem:proba}, the $1 / n$ in the assumption $\P(x_i) = 1_n /n$ is drawn from $VV^T 1_n = 1_n / n$, indicating that if $\P(x_1),\dots, \P(x_n)$ are known, it may make sense to replace the doubly stochastic constraint with $VV^T 1_n = [\P(x_1), \dots, \P(x_n)]^T$.
More importantly, Theorem \ref{theorem:partition-space} and Theorem \ref{theorem:proba} state that when $\P(c_1), \dots, \P(c_k)$ are known, we expect the learned $V$ to lie on the $\Omega(\mu)$, where $\mu = [\sqrt{\P(c_1)}, \dots, \sqrt{\P(c_k)}]^T$, and \textbf{the constraint $V \in \Omega$ is equivalently reduced to $V \in \Omega(\mu)$.}
Building on this insight, we propose a low-rank doubly stochastic clustering (LoRD) model, which is formulated as 
\begin{equation}
    \min_{V \in \Omega(\mu)} \|S - VV^T \|_F^2.
    \label{eq:LoRD}
\end{equation}
Compared to Eq. \eqref{eq:mdl_1}, Eq. \eqref{eq:LoRD} is numerically solvable because $\Omega(\mu)$ is linear and convex.
In practice, as the class prior probability is generally unknown, we can simply set $\mu = [1 / \sqrt{k}, \dots, 1 / \sqrt{k}]^T$.
Our experiments in Sec. \ref{sec:robust} show that the proposed model is robust to the value of $\mu$.
\begin{remark}
    For LoRD in Eq. \eqref{eq:LoRD}, the optimization space $\Omega(\mu)$ only relaxes the least important orthonormal constraint $V^T V = I_k$ of kernel $k$-means in Eq. \eqref{eq:kmeans_fro}.
    For numerical solvability, we further reduce $\Omega$ to $\Omega(\mu)$ (Theorem \ref{theorem:partition-space}), where $\mu$ is a user-specified parameter associated with the prior probability of the class (Theorem \ref{theorem:proba}).
    As a result, LoRD can learn a probabilistic clustering result (Theorem \ref{theorem:proba}).
\end{remark}

\subsection{B-LoRD: Adjusting $k$-Block Diagonality of $VV^T$}
Although LoRD achieves probabilistic clustering, it remains unclear how to control the distribution of $\P(y_i = j | x_i)$ (sharp or uniform), which is closely related to the orthogonality of $V$:
when $V$ is fully orthogonal, we have the sharpest clustering probability $\P(y_i = j | x_i) = 1$ if $x_i \in G_j$ and zero otherwise;
when $V$ is least orthogonal, we have the uniform clustering probability $\P(y_i = j | x_i) = 1 / k, j=1,\dots, k$.
Interestingly, we find that the orthogonality of $V$ is equivalent to the $k$-block diagonality of $VV^T$ under the doubly stochastic constraint, as described in the following theorem:
\begin{theorem} \label{theorem:blk-diag}
    For any $V \in \Omega$ (which naturally includes any $V \in \Omega(\mu)$), the following equality holds:
    \begin{equation}
        \|VV^T \|_{\Boxed{k}} = \frac{k}{n} - \|V \|_F^2.
    \end{equation}
    Specifically, the least $k$-block diagonal case ($\|VV^T \|_{\Boxed{k}}$ is maximized) occurs when $V \in \{1_n \mu^T / n \;|\; \mu \in \mathbb{S}_{+}^{k}\}$, and the fully $k$-block diagonal case ($\|VV^T \|_{\Boxed{k}}$ is minimized to zero) occurs when $V$ is orthogonal.
\end{theorem}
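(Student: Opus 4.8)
The plan is to diagonalize the Laplacian of $Z := VV^T$ and simply read off the sum of its $k$ smallest eigenvalues. The decisive simplification comes from the constraint defining $\Omega$: since $V\in\Omega$ gives $Z 1_n = VV^T 1_n = 1_n/n$, the degree matrix collapses to a scalar multiple of the identity, $\mathrm{Diag}(Z 1_n) = \frac{1}{n} I_n$, so that
\[
    L_Z = \mathrm{Diag}(Z 1_n) - Z = \tfrac{1}{n} I_n - VV^T .
\]
First I would record that $L_Z$ is a genuine graph Laplacian with nonnegative weights $Z_{ij}\ge 0$, hence $L_Z \succeq 0$; this follows from $x^T L_Z x = \frac{1}{2}\sum_{i,j} Z_{ij}(x_i - x_j)^2 \ge 0$, and it will be the fact that pins down the ordering of the spectrum.

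Next I would pass to the spectrum. Writing $\lambda_1,\dots,\lambda_k \ge 0$ for the eigenvalues of the $k\times k$ matrix $V^T V$, the matrix $VV^T$ has eigenvalues $\lambda_1,\dots,\lambda_k$ together with $0$ repeated $n-k$ times, so $L_Z = \frac{1}{n}I_n - VV^T$ has eigenvalues $\frac{1}{n}-\lambda_i$ for $i=1,\dots,k$ and $\frac{1}{n}$ with multiplicity $n-k$. Because each $\frac{1}{n}-\lambda_i \le \frac{1}{n}$, the $k$ smallest eigenvalues of $L_Z$ are exactly $\{\frac{1}{n}-\lambda_i\}_{i=1}^k$, so by the definition of $\|\cdot\|_{\Boxed{k}}$,
\[
    \|VV^T\|_{\Boxed{k}} = \sum_{i=1}^k \Big(\tfrac{1}{n}-\lambda_i\Big) = \tfrac{k}{n} - \sum_{i=1}^k \lambda_i = \tfrac{k}{n} - \mathrm{Tr}(V^T V) = \tfrac{k}{n} - \|V\|_F^2 ,
\]
which is the claimed equality. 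Here the PSD property matters twice: $L_Z\succeq 0$ forces $\lambda_i \le \frac{1}{n}$, so every term $\frac{1}{n}-\lambda_i$ is nonnegative and the $k$ smallest eigenvalues really are the ones coming from the nonzero spectrum of $VV^T$.

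Finally, the two extremal cases reduce to bounding $\|V\|_F^2 = \sum_{i=1}^k \lambda_i$ over $\Omega$. For the upper bound, $\lambda_i \le \frac{1}{n}$ gives $\|V\|_F^2 \le \frac{k}{n}$, hence $\|VV^T\|_{\Boxed{k}}\ge 0$, with equality (the fully $k$-block-diagonal case) exactly when every $\lambda_i = \frac{1}{n}$, i.e.\ $V^T V = \frac{1}{n} I_k$ — the orthogonal $V$; I would note such $V$ indeed lie in $\Omega$ (e.g.\ the scaled hard-assignment matrices $V_{ij} = (n n_j)^{-1/2}$ for $x_i\in G_j$ and $0$ otherwise), and that nonnegativity plus orthogonality forces the columns to have disjoint supports, so $VV^T$ is genuinely block diagonal. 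For the lower bound, Cauchy--Schwarz applied columnwise gives $\sum_i V_{ij}^2 \ge \frac{1}{n}\big(\sum_i V_{ij}\big)^2 = \frac{1}{n}\mu_j^2$, and summing over $j$ with $\|\mu\|_2^2 = 1_n^T VV^T 1_n = 1$ yields $\|V\|_F^2 \ge \frac{1}{n}$; equality forces each column of $V$ to be constant, hence $V = 1_n\mu^T/n$, which is the least block-diagonal case and attains the maximum $\|VV^T\|_{\Boxed{k}} = \frac{k-1}{n}$. The main obstacle I anticipate is purely bookkeeping: making the eigenvalue-ordering step airtight when some $\lambda_i$ vanish (so that several eigenvalues of $L_Z$ tie at $\frac{1}{n}$), and confirming attainability of both extremes inside the nonconvex set $\Omega$.
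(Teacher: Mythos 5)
Your derivation of the core identity is essentially the paper's own proof: both of you use $VV^T 1_n = 1_n/n$ to collapse the degree matrix, write $L_{VV^T} = \tfrac{1}{n}I_n - VV^T$, read off the spectrum as $\tfrac{1}{n}$ with multiplicity $n-k$ together with $\tfrac{1}{n}-\sigma_i^2(V)$ for $i=1,\dots,k$, and sum the $k$ smallest eigenvalues to get $\tfrac{k}{n}-\|V\|_F^2$. You are in fact more careful than the paper on the ordering step (the paper silently assumes the shifted eigenvalues are the smallest; you justify it via $\lambda_i(V^TV)\ge 0$ and note the tie case when some $\lambda_i=0$), and your observation that $L_{VV^T}\succeq 0$ because $VV^T$ is entrywise nonnegative is a clean way to get $\lambda_i \le \tfrac1n$.

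Where you genuinely diverge is the treatment of the two extremal cases, and your route is the more complete one. For the least block-diagonal case, the paper reuses its projection machinery: by Lemma \ref{lemma:proj}, $1_n\mu^T/n = \proj_{\Omega_0(\mu)}(0_{n\times k})$, which is nonnegative and hence is $\arg\min_{V\in\Omega(\mu)}\|V\|_F^2$, with value $1/n$ independent of $\mu$; you instead prove the bound $\|V\|_F^2 \ge \tfrac1n$ directly by columnwise Cauchy--Schwarz together with $\|V^T 1_n\|_2^2 = 1_n^T VV^T 1_n = 1$, and your equality analysis (constant columns) actually establishes that $V = 1_n\mu^T/n$ is the \emph{only} maximizer of $\|VV^T\|_{\Boxed{k}}$, a characterization the paper asserts but does not argue. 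For the fully block-diagonal case, the paper only exhibits the hard-assignment example $V_{ij} = 1/\sqrt{n n_j}$ attaining $\|V\|_F^2 = k/n$; you prove in addition that zero is a lower bound for $\|VV^T\|_{\Boxed{k}}$ on all of $\Omega$ (via $\lambda_i \le \tfrac1n$ from $L_{VV^T}\succeq 0$), that equality forces $V^TV = \tfrac1n I_k$, and that nonnegativity plus orthogonality forces disjoint column supports, so $VV^T$ is literally $k$-block diagonal. In short: identical core argument, but your extremal analysis is self-contained and proves the "occurs exactly when" directions, at the cost of not reusing Lemma \ref{lemma:proj} the way the paper does.
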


\begin{proof}
    See Appendix \ref{prooftheorem4}.
\end{proof}

Besides, the objective function in Eq. \eqref{eq:LoRD} is equivalent to 
\begin{equation}
     \|S - VV^T \|_F^2 = \|S \|_F^2 -2\mathrm{Tr}(V^T SV) + \|V V^T\|_F^2,
\end{equation}
where $\|S \|_F^2$ can be treated as a constant, and the role of minimizing $\|VV^T \|_F^2$ under the constraint $V \in \Omega(\mu)$ is to weaken the block diagonality of $VV^T$.
To see this, we have $\|VV^T \|_F^2 = \sum_{j=1}^{k} \sigma_j^4(V) \geq \frac{1}{n^2}$, where the lower bound is achieved when $V = 1_n \mu^T \!/ n$ with $\sigma(V) = [\frac{1}{\sqrt{n}}, 0, \dots, 0]^T\!$.

Motivated by the above analysis, we propose a low-rank block diagonal doubly stochastic clustering (B-LoRD) model (replace $\|VV^T \|_F^2$ in Eq. \eqref{eq:LoRD} with $-2\gamma \|V \|_F^2$):
\begin{equation}
    \max_{V \in \Omega(\mu)} \mathrm{Tr}(V^T SV) + \gamma \|V \|_F^2,
    \label{eq:B-LoRD}
\end{equation}
where $\gamma$ is a hyper-parameter that controls the block diagonality of $VV^T$.
Specifically, the objective function of Eq. \eqref{eq:B-LoRD} can be written as $\mathrm{Tr}(V^T (S + \gamma I_n) V)$, indicating that the value of $\gamma$ should lie in the range $[-\lambda_{\max}(S), -\lambda_{\min}(S)]$.
When $\gamma \leq -\lambda_{\max}(S)$, Eq. \eqref{eq:B-LoRD} becomes a convex optimization problem, and its global optimum is trivial: $V = 1_n \mu^T / n$.
When $\gamma = -\lambda_{\min}(S)$, we observe that the learned $V$ is almost orthogonal (see Fig. \ref{fig:S_gamma} for details), indicating that $\gamma$ is sufficiently large.
Note that $\gamma$ can be negative, which means the block diagonality of $VV^T$ is weakened.

\begin{remark}
    In the proposed LoRD, we reduce the orthogonal constraint $V^TV = I_k$ to ensure numerical solvability.
    In B-LoRD, the orthogonality of $V$ is controlable by adjusting $\gamma \in [-\lambda_{\max}(S), -\lambda_{\min}(S)]$.
\end{remark}


\subsection{Relation to Other Graph-Based Clustering Methods}

\indent 

\textbf{Kernel $k$-means.} Our model in Eq. \eqref{eq:mdl_1}, which is equivalent to LoRD in Eq. \eqref{eq:LoRD}, assumes the class prior probability $\mu$, relaxes only the least important orthogonality constraint $V^T V = I_k$ in Eq. \eqref{eq:kmeans_fro}.
Moreover, our B-LoRD in Eq. \eqref{eq:B-LoRD} replaces the orthonormal constraint $V^T V = I_k$ in Eq. \eqref{eq:kmeans_tr} with a regularization term $\gamma \|V \|_F^2$. Therefore, when $\gamma$ is sufficiently large (e.g., $\gamma \geq -\lambda_{\min}(S)$), our LoRD and B-LoRD can be regarded as relaxations and approximations of kernel $k$-means, respectively.

\textbf{SC.} SC can be interpreted as a relaxation of our B-LoRD in Eq. \eqref{eq:B-LoRD} when $\gamma$ is sufficiently large and the constraint $V \in \Omega(\mu)$ is relaxed. As a consequence, SC requires a post-processing step to obtain the final clustering result.

\textbf{SymNMF.} SymNMF can be thought of as a relaxation of Eq. \eqref{eq:mdl_1} and Eq. \eqref{eq:LoRD}, where the constraints $V \in \Omega$ and $V \in \Omega(\mu)$ are relaxed to $V \geq 0$. This relaxation leads to the loss of both probabilistic interpretability and discriminative capability.
As analyzed in Sec. \ref{sec:clustering}, minimizing the objective function of SymNMF does not significantly improve clustering performance.  In contrast, our LoRD and B-LoRD demonstrate meaningful performance gains.

\textbf{DSN.} Building upon our model in Eq. \eqref{eq:mdl_1}, DSN further parameterizes $Z = VV^T$ to obtain a convex optimization problem. However, this comes at the cost of requiring post-processing to extract the clustering results and incurs higher computational complexity, typically $\mathcal{O}(n^3)$, due to the relaxation of the low-rank structure. In comparison, our LoRD and B-LoRD achieve lower computational complexity of $\mathcal{O}(n^2 k)$.


\textbf{GWL.} Additionally,  our B-LoRD in Eq. \eqref{eq:B-LoRD} can be reformulated as a Gromov-Wasserstein learning problem \citep{chowdhury2021generalized}, which is an optimal transport \citep{montesuma2024recent} based approach to clustering. A detailed analysis of this connection is provided in the following subsection.

\subsection{B-LoRD VS. Gromov-Wasserstein Learning in Optimal Transport} \label{sec:OT}
Optimal transport (OT) \citep{montesuma2024recent} has received considerable attention in the machine learning community, as it learns a transport plan $P$ over the joint probability space:
\begin{equation}
    \Pi(\alpha, \beta) := \{P \in \R^{n \times k} \; | \; P \geq 0, P1_k = \alpha, P^T 1_n = \beta \},
    \label{eq:def_Pi}
\end{equation}
where $\alpha\geq 0$ and $\beta\geq 0$ are marginal probabilities satisfying $\alpha^T 1_n = \beta^T 1_k = 1$.

Gromov-Wasserstein learning (GWL) \citep{xu2019scalable, chowdhury2021generalized, van2024distributional} is an OT-based approach to graph partitioning, which solves 
\begin{equation}
    \min_{P \in \Pi(\alpha, \beta)} \sum_{i, j = 1}^{n} \sum_{a, b = 1}^{k} \ell(S_{ij}, C_{ab}) P_{ia} P_{jb},
\end{equation}
where $S \in \R^{n \times n}$ is the source graph describing the similarities between samples, $C \in \R^{k \times k}$ is the target graph describing the similarities between clusters, and $\ell$ is a loss function.

Let $D = \mathrm{Diag}(\mu)$, for any $V \in \Omega(\mu)$ such that $V \geq 0, V \mu = 1_n / n, V^T 1_n = \mu$, we have $P = VD \in \Pi(1_n / n, \mu \odot \mu)$, where $\odot$ denotes the Hadamard product.
By parameterizing $V = P D^{-1}$, our B-LoRD becomes
\begin{equation}
    \max_{P \in \Pi(1_n / n, \mu \odot \mu)} \mathrm{Tr}(D^{-1} P^T S P D^{-1}) + \gamma \|P D^{-1} \|_F^2.
    \label{eq:OT-B-LoRD}
\end{equation}
Interestingly, Eq. \eqref{eq:OT-B-LoRD} is mathematically similar to the GWL.
To demonstrate this, Eq. \eqref{eq:OT-B-LoRD} can be transformed into
\begin{equation}
    \min_{P \in \Pi(1_n / n, \mu \odot \mu)} -\sum_{i,j=1}^{n} \sum_{a=1}^{k} (S + \gamma I_n)_{ij} \mu_a^{-2} P_{ia} P_{ja},
\end{equation}
where $D^{-1}$ is regarded as the target graph in which each cluster is only similar to itself, and the loss function is $\ell(S_{ij}, D_{aa}) = - (S + \gamma I_n)_{ij} \mu_a^{-2}$.

\section{Numerical Optimization} \label{sec:algorithm}
\subsection{Optimization Framework}
We propose a projected gradient descent method to solve Eq. \eqref{eq:LoRD} and Eq. \eqref{eq:B-LoRD}.
Let $f_1(V)$ and $-f_2(V)$ be the objective functions of Eq. \eqref{eq:LoRD} and Eq. \eqref{eq:B-LoRD}, respectively.
Note that we transform Eq. \eqref{eq:B-LoRD} into the problem of minimizing $f_2(V)$ for a consistent description with Eq. \eqref{eq:LoRD}.
The gradients of $f_1(V)$ and $f_2(V)$ are
\begin{equation}
    \nabla_1(V) = 4 (VV^T - S) V \quad \text{and} \quad \nabla_2(V) = -2(SV + \gamma V),
    \label{eq:gradient}
\end{equation}
respectively, and they have the following property. 
\begin{theorem} \label{theorem:lipschitz}
    For $V \in \Omega$ (naturally includes $V \in \Omega(\mu)$), $\nabla_1$ and $\nabla_2$ are Lipschitz continuous, where the Lipschitz constant $L_1$ and $L_2$ are:
    \begin{equation}
        L_1 = 4\left(3 / n + \|S \|_{\mathrm{op}} \right) \quad \text{and} \quad  L_2 = 2\left\|S + \gamma I_n \right\|_{\mathrm{op}},
    \label{eq:lipschitz}
    \end{equation}
    where $\|\cdot \|_{\mathrm{op}}$ denotes the operator norm, i.e., the largest singular value of a matrix.
\end{theorem}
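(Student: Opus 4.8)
The plan is to treat the two gradients separately, since $\nabla_2$ is affine in $V$ while $\nabla_1$ contains a cubic term whose Lipschitz estimate must exploit a uniform bound on $\|V\|_{\mathrm{op}}$ over $\Omega$. For $\nabla_2$, I would simply observe that $\nabla_2(V) = -2(S+\gamma I_n)V$ is linear in $V$, so for any $V,W$,
\[
\|\nabla_2(V)-\nabla_2(W)\|_F = 2\|(S+\gamma I_n)(V-W)\|_F \le 2\|S+\gamma I_n\|_{\mathrm{op}}\|V-W\|_F,
\]
using the submultiplicative bound $\|AB\|_F \le \|A\|_{\mathrm{op}}\|B\|_F$. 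This already gives $L_2 = 2\|S+\gamma I_n\|_{\mathrm{op}}$ with no further work, and it holds on all of $\R^{n\times k}$.

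For $\nabla_1(V) = 4VV^TV - 4SV$, I would split off the linear term $-4SV$, which contributes a Lipschitz constant $4\|S\|_{\mathrm{op}}$ exactly as above, and concentrate on the cubic term $g(V)=4VV^TV$. The key algebraic step is the telescoping identity
\[
VV^TV - WW^TW = (V-W)V^TV + W(V-W)^TV + WW^T(V-W),
\]
to which I would apply $\|AB\|_F\le\|A\|_{\mathrm{op}}\|B\|_F$ and $\|AB\|_F\le\|A\|_F\|B\|_{\mathrm{op}}$ summand by summand, together with $\|V^TV\|_{\mathrm{op}}=\|V\|_{\mathrm{op}}^2$. This yields
\[
\|g(V)-g(W)\|_F \le 4\bigl(\|V\|_{\mathrm{op}}^2 + \|V\|_{\mathrm{op}}\|W\|_{\mathrm{op}} + \|W\|_{\mathrm{op}}^2\bigr)\|V-W\|_F,
\]
so everything reduces to a uniform bound on $\|V\|_{\mathrm{op}}$ for $V\in\Omega$.

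The crux is the claim that $\|V\|_{\mathrm{op}}^2 \le 1/n$ for every $V\in\Omega$. I would prove this by examining $M:=VV^T$, which is symmetric positive semidefinite, entrywise nonnegative because $V\ge 0$, and satisfies $M1_n = 1_n/n$ by the defining constraint of $\Omega$; hence all row sums of $M$ equal $1/n$. Since the spectral radius of a nonnegative matrix is bounded by its maximum absolute row sum, $\lambda_{\max}(M)\le 1/n$, and because $\|V\|_{\mathrm{op}}^2 = \lambda_{\max}(VV^T)$ we obtain $\|V\|_{\mathrm{op}}^2\le 1/n$. Substituting $\|V\|_{\mathrm{op}},\|W\|_{\mathrm{op}}\le 1/\sqrt{n}$ into the cubic bound gives factor $4\cdot 3\cdot(1/n)=12/n$, and adding the linear part yields $L_1 = 12/n + 4\|S\|_{\mathrm{op}} = 4(3/n+\|S\|_{\mathrm{op}})$, as claimed.

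The main obstacle is not the telescoping calculus, which is routine, but the uniform operator-norm bound on $\Omega$: it is precisely here that nonnegativity and the (reduced) doubly stochastic constraint must be used together, and the clean constant $1/n$ emerges only because the row sums of $VV^T$ are all exactly $1/n$. I would also verify that only $V\ge 0$ and $VV^T1_n = 1_n/n$ are invoked, so the estimate is valid on all of $\Omega$ and hence on each $\Omega(\mu)$, matching the statement.
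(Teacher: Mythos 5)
Your proof is correct and follows essentially the same route as the paper's: the same split of $\nabla_1$ into the linear term $-4SV$ and the cubic term, the same telescoping decomposition of $VV^TV - WW^TW$ bounded term by term via operator norms, and the same key fact that the constraints defining $\Omega$ force $\|V\|_{\mathrm{op}}^2 = \|VV^T\|_{\mathrm{op}} \le 1/n$. If anything, you are slightly more careful than the paper, which asserts $\|VV^T\|_{\mathrm{op}} = 1/n$ directly from $VV^T 1_n = 1_n/n$ without spelling out the nonnegativity-plus-row-sum argument that you make explicit.
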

\begin{proof}
    See Appendix \ref{prooftheorem5}.
\end{proof}
In the remainder of this paper, we omit the subscripts of $f, \nabla$, and $L$ if they are clear in context.
Since $\nabla$ is Lipschitz continuous, the step size of the projected descent can be automatically set to $1 / L$, leading to the update formula to solve Eq. \eqref{eq:LoRD} and Eq. \eqref{eq:B-LoRD} at the $t$-th iteration as
\begin{equation}
V^{t+1} = \proj_{\Omega(\mu)}(V^{t} - \nabla(V^{t}) / L),
\label{eq:pgd_update}
\end{equation}
where $\proj_{\Omega(\mu)}$ is the orthogonal projector onto $\Omega(\mu)$:
\begin{equation}
    \proj_{\Omega(\mu)}(U) = \mathop{\arg\min}_{V \in \Omega(\mu)} \|V - U \|_F^2,
    \label{eq:def_proj}
\end{equation}
which can be computed using the Dykstra algorithm \citep{boyle1986method} introduced in the next subsection.
Note that $\proj_{\Omega(\mu)}(U)$ is well defined, that is, the optimization problem in Eq. \eqref{eq:def_proj} has a unique optimum since $\Omega(\mu)$ is convex.

Moreover, to initialize $V^{0} \in \Omega(\mu)$, we employ the Sinkhorn-Knopp algorithm \citep{sinkhorn1964relationship}, such that for any $U \in \R^{n \times k}$, $\text{Sinkhorn}(U, \mu) \in \Omega(\mu)$.
The explanation of the Sinkhorn-Knopp algorithm is provided in Sec. \ref{sec:init_alg}.

Alg. \ref{alg:main} summarizes the overall projected gradient descent algorithm for solving LoRD in Eq. \eqref{eq:LoRD} and B-LoRD in Eq. \eqref{eq:B-LoRD}, where we repeat Eq. \eqref{eq:pgd_update} until either the maximum iteration count is reached or $\|V^{t+1} - V^{t}\|_F / \|V \|_F \leq \delta_v$.
In Alg. \ref{alg:main}, $\text{rand}(n, k)$ returns a random $n\times k$ matrix with entries in $[0, 1]$.
In B-LoRD, instead of tuning the hyper-parameter $\gamma \in [-\lambda_{\max}(S), -\lambda_{\min}(S)]$, we use $\tau \in [0, 1]$ to calculate $\gamma = -\lambda_{\max}(S) + \tau (\lambda_{\max}(S) - \lambda_{\min}(S))$ for convenience.
\begin{algorithm}[t]
    \caption{Projected Gradient Descent Algorithm for Solving LoRD in Eq. \eqref{eq:LoRD} and B-LoRD in Eq. \eqref{eq:B-LoRD}}
    \label{alg:main}
    \KwIn{$S \in \R^{n \times n}$, $\mu \in \mathbb{S}_{+}^{k}$, $\tau \in [0, 1]$, $t_{\max} = 4000$, $\delta_v = 10^{-4}$}
    Initialize $V^{0} = \text{Sinkhorn}(\text{rand}(n, k), \mu)$ , $t=0$ \Comment*[r]{See Alg.\ref{alg:sinkhorn} for details}
    $\gamma = -\lambda_{\max}(S) + \tau (\lambda_{\max}(S) - \lambda_{\min}(S))$ \\
    Calculate Lipschitz constant $L$ according to Eq. \eqref{eq:lipschitz} \\
    
    \Repeat{$t = t_{\max}$ {\bfseries or} $\|V^{t+1} - V^{t} \|_F / \|V^t \|_F \leq \delta_v$}{
        $V^{t+1} = \proj_{\Omega(\mu)}(V^{t} - \nabla(V^t) / L)$ \Comment*[r]{See Alg.\ref{alg:Dykstra} for details}
        $t = t + 1$ \\
    }
\end{algorithm}

\subsection{Projection onto $\Omega(\mu)$} \label{sec:proj_alg}
In Alg. \ref{alg:main}, the projection onto $\Omega(\mu)$ is a crucial step, but the closed-form expression for $\proj_{\Omega(\mu)}(U)$ is difficult to derive.
To this end, we adopt the Dykstra algorithm \citep{boyle1986method} to compute $\proj_{\Omega(\mu)}(U)$, which is a powerful tool for solving projections onto the intersection of convex sets, provided that the projection onto each convex set can be easily computed.
Indeed, $\Omega(\mu)$ can be seen as the intersection of two convex sets: $\R_{+}^{n \times k} := \{V \in \R^{n \times k} | V \geq 0 \}$ and $\Omega_0(\mu) := \{V \in \R^{n \times k} | V^T 1_n = \mu, V \mu = 1_n/n \}$, each of these admits a closed-form projection operator, summarized in Lemma \ref{lemma:proj}.
\begin{lemma} \label{lemma:proj}
    For any $U \in \R^{n \times k}$, we have:
    \begin{equation}
        \proj_{\R_{+}^{n \times k}}(U) = \max(U, 0)
        \quad \mathrm{and} \quad
        \proj_{\Omega_0(\mu)}(U) = U + \tfrac{1_n^T U \mu + 1}{n} 1_n \mu^T - \tfrac{1_n 1_n^T}{n} U - U \mu \mu^T.
    \end{equation}
\end{lemma}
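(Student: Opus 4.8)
The plan is to treat the two projections separately, exactly as the Dykstra splitting $\Omega(\mu) = \R_{+}^{n \times k} \cap \Omega_0(\mu)$ suggests. For $\proj_{\R_{+}^{n \times k}}$ the argument is a one-liner: the objective $\|V - U\|_F^2 = \sum_{i,j}(V_{ij} - U_{ij})^2$ separates entrywise, and the constraint $V_{ij} \geq 0$ is also entrywise, so the problem decouples into $nk$ independent scalar programs $\min_{v \geq 0}(v - U_{ij})^2$, each minimized by $v = \max(U_{ij}, 0)$. Stacking these gives $\proj_{\R_{+}^{n \times k}}(U) = \max(U, 0)$ with nothing further to check.

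For $\proj_{\Omega_0(\mu)}$ I would first observe that $\Omega_0(\mu)$ is a closed affine set, nonempty by Theorem \ref{theorem:partition-space}, so its projection exists and is unique and is characterized variationally by two conditions: feasibility $\hat V \in \Omega_0(\mu)$, and orthogonality of the residual $\hat V - U$ to the direction space $T := \{W \in \R^{n \times k} : W^T 1_n = 0,\ W \mu = 0\}$ parallel to $\Omega_0(\mu)$. Rather than solve a Lagrange system from scratch, I would simply verify that the stated closed form meets both conditions; uniqueness of the projection then finishes the proof.

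For feasibility I would compute $V \mu$ and $V^T 1_n$ directly, the only nontrivial input being $\mu^T \mu = \|\mu\|_2^2 = 1$ since $\mu \in \mathbb{S}_{+}^{k}$. Writing $c := 1_n^T U \mu$, the correction terms collapse so that $V \mu = \tfrac{c+1}{n} 1_n - \tfrac{c}{n} 1_n = 1_n / n$ and, symmetrically, $V^T 1_n = (c+1)\mu - c\mu = \mu$. For orthogonality I would take an arbitrary $W \in T$ and show each of the three correction matrices $\tfrac{c+1}{n} 1_n \mu^T$, $\tfrac{1_n 1_n^T}{n} U$, and $U \mu \mu^T$ has zero Frobenius inner product with $W$: the first equals $\tfrac{c+1}{n} 1_n^T W \mu = 0$ via $W \mu = 0$; the second equals $\tfrac{1}{n}(1_n^T W)(U^T 1_n) = 0$ via $1_n^T W = 0$; and the third equals $(U \mu)^T (W \mu) = 0$ again via $W \mu = 0$. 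Hence $\langle V - U, W \rangle = 0$ for every $W \in T$.

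The one conceptual point worth flagging is why I verify rather than derive. The stationarity condition $V = U - 1_n \lambda^T - \eta \mu^T$ produces a Lagrange system that is rank deficient, because the two constraint families share the single scalar compatibility $1_n^T V \mu = (V^T 1_n)^T \mu = 1$, so the multipliers $(\lambda, \eta)$ are not unique even though $V$ is. A direct elimination therefore stalls on a free parameter, whereas checking the closed form against the variational characterization bypasses this redundancy cleanly. This mild degeneracy is the only real subtlety; every remaining step is routine linear algebra resting on $\|\mu\|_2 = 1$.
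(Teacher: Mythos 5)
Your proof is correct, but it takes a genuinely different route from the paper's. The paper \emph{derives} the formula: it forms the Lagrangian of the constrained least-squares problem, obtains the stationarity condition $V = U - 1_n\alpha^T - \beta\mu^T$, and then solves the resulting block linear system for the multipliers $(\alpha,\beta)$ via an LDU decomposition, confronting head-on exactly the degeneracy you flagged --- $\mathrm{rank}(I_k - \mu\mu^T) = k-1$ with null space spanned by $\mu$, so the multipliers contain a free scalar $c$ which the paper must then show cancels in the expression for $V$. You instead \emph{verify} the stated closed form against the variational characterization of projection onto a nonempty closed affine set: feasibility ($\hat V^T 1_n = \mu$, $\hat V\mu = 1_n/n$, both checks hinging only on $\|\mu\|_2^2 = 1$) plus orthogonality of the residual to the parallel subspace $T = \{W : W^T 1_n = 0,\ W\mu = 0\}$, with uniqueness closing the argument. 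Your three inner-product computations are right, your identification of $T$ as the parallel subspace is right, and your diagnosis of the multiplier non-uniqueness (the single redundancy $1_n^T V\mu = 1$ shared by the two constraint families) is precisely the phenomenon the paper's linear system exhibits. What each approach buys: yours is shorter, avoids the rank-deficient linear algebra entirely, and also supplies the (admittedly trivial) entrywise argument for $\proj_{\R_{+}^{n\times k}}(U) = \max(U,0)$, which the paper omits; the paper's Lagrangian route is constructive --- it explains where the formula comes from rather than presupposing it --- which is the natural trade-off of derivation versus verification.
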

\begin{proof}
    See Appendix \ref{prooflemma6}.
\end{proof}
Based on this, we employ a modified Dykstra algorithm in Alg. \ref{alg:Dykstra}, where $b_{\max}$ and $\delta_d$ are predefined maximum iteration count and convergence tolerance, respectively, and $\min(V)$ represents the minimal element of $V$.
In Alg. \ref{alg:Dykstra}, we use an adaptive step strategy \citep{Combettes2009ProximalSM} to accelerate convergence: as the iteration count $b$ grows from $0$ to $\infty$, the step size $\beta$ grows from $1$ to $2$.
\begin{algorithm}[h]
    \caption{Modified Dykstra Algorithm for Solving $\proj_{\Omega(\mu)}(U)$}
    \label{alg:Dykstra}
    \KwIn{$U \in \R^{n \times k}$, $\mu \in \mathbb{S}_{+}^{k}$, $b_{\max} = 1000$, $\delta_d = 10^{-5}$}
    \KwOut{$V$}
    Initialize $V = U, \;Z = \text{zeros}(n, k), \; b=0, \; \alpha_b = 1$ \\
    
    \Repeat{$b = b_{\max}$ {\bfseries or} $-\min(V) \leq \delta_d \min(\max(\mu), 1 / (n \min(\mu)))$}{
        $b = b + 1$ \\
        $\alpha_{b+1} = \frac{1}{2} \left(1 + \sqrt{4 \alpha_{b}^2 + 1} \right)$ \\
        $\beta = 1 + (1 - \alpha_{b}) / \alpha_{b+1}$ \\
        $Y = (1 - \beta) V + \beta \max(V - Z, 0)$ \\
        $Z = Y - V + Z$ \\
        $V = (1 - \beta) Y + \beta \left[ Y + \tfrac{1_n^T Y \mu}{n} 1_n \mu^T - \tfrac{1_n 1_n^T}{n} Y - Y \mu \mu^T \right]$ \\
    }
    $V = \max(V, 0)$
\end{algorithm}

\subsection{Initialization Method} \label{sec:init_alg}

Given a strictly positive matrix $U \in \R^{n \times k}$, the Sinkhorn-Knopp algorithm \citep{sinkhorn1964relationship} seeks two diagonal matrices $D_r, D_l$ such that $D_r U D_l \in \Pi(\alpha, \beta)$ (the definition of $\Pi(\alpha, \beta)$ is given in \eqref{eq:def_Pi}).
Motivated by the relationship between $\Omega(\mu)$ and $\Pi(1_n / n, \mu \odot \mu)$, we first apply the Sinkhorn-Knopp algorithm to generate a random $P \in \Pi(1_n / n, \mu \odot \mu)$, and then normalize $V^{0} = P \mathrm{Diag}^{-1}(\mu)$ to obtain a random $V^{0} \in \Omega(\mu)$.
The pseudocode in the MATLAB syntax is provided in Alg. \ref{alg:sinkhorn}.

\begin{algorithm}[t]
    \caption{Sinkhorn-Knopp Algorithm for Normalizing $U \in \R^{n \times k}$ into $\Omega(\mu)$}
    \label{alg:sinkhorn}
    \KwIn{$U \in \R^{n \times k}$, $\mu \in \mathbb{S}_{+}^{k}$, $s_{\max} = 1000$, $\delta_s = 10^{-16}$}
    \KwOut{$V$}
    Initialize $\ell = 1_n, \; s=0$ \\
    $P = \max(U \odot \mu^T, 10^{-20})$ \Comment*[r]{Ensure $P$ is strict positive.}
    \Repeat{$s = s_{\max}$ \textbf{or} $\max(\max(|(P^T \ell) \odot r - \mu \odot \mu|), \max(|(P r) \odot \ell - 1_n / n|)) \leq \delta_s$}{
        $s = s + 1$ \\
        $r = (\mu \odot \mu) \oslash (P^T\ell)$ \\
        $\ell = (1_n / n) \oslash (P r)$ \\
    }
    $P = \mathrm{Diag}(l) \odot P \mathrm{Diag}(r)$ \\
    $V = P \mathrm{Diag}^{-1}(\mu)$ \\
\end{algorithm}

\subsection{Why not the Dykstra Algorithm for Initialization?}
\label{sec:add_init}
One may wonder why the Dykstra algorithm is not used to initialize $V^{0}$, since it is already available for projecting any $U \in \R^{n \times k}$ onto $\Omega(\mu)$. 
The key reason is that the objective functions of LoRD and B-LoRD are non-convex, which makes the quality of initialization crucial. 
To mitigate the risk of converging to poor stationary points, we employ multiple initializations $V^{0} \in \Omega(\mu)$. 
Ideally, these initializations would be drawn uniformly from $\Omega(\mu)$.
However, sampling uniformly from the set of doubly stochastic matrices remains an open problem \citep{cappellini2009random}.

Empirically, we found that Dykstra-based initializations often converge to poor stationary points, characterized by high objective values and weak clustering performance. 
In contrast, Sinkhorn–Knopp initializations are more effective at avoiding such outcomes.
This phenomenon can be explained by a geometric argument. 
Consider the simplified case $n=2$, $k=1$, where $\Omega(\mu)$ reduces to the two-dimensional simplex $\Delta_{2} = \{v \in \R^{2} \mid v \geq 0, v(1) + v(2) = 0.5\}$.

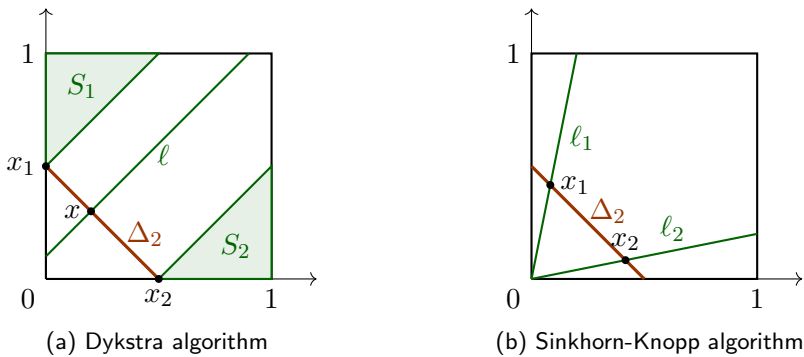
\begin{figure}[htbp]
    \centering
    \definecolor{edgeColor}{RGB}{153, 51, 0}
    \definecolor{lineColor}{RGB}{0, 100, 0}

    \subfloat[Dykstra algorithm]{
        \begin{tikzpicture}[scale=3]
            \draw[->] (0, 0) -- (1.2, 0); 
            \draw[->] (0, 0) -- (0, 1.2);
            \draw[thick] (0, 0) node [below left]{$0$} -- (0, 1) node [left]{$1$} -- (1, 1) -- (1, 0) node [below]{$1$} -- (0, 0);
        
            \draw[very thick, color=edgeColor] (0, 0.5) -- (0.5, 0);
            \draw[thick, color=lineColor] (0, 0.1) -- (0.9, 1);
            \draw[thick, color=lineColor, fill=lineColor!10] (0, 0.5) -- (0.5, 1) -- (0, 1) -- (0, 0.5);
            \draw[thick, color=lineColor, fill=lineColor!10] (0.5, 0) -- (1, 0.5) -- (1, 0) -- (0.5, 0);
        
            \fill (0, 0.5) circle (0.5pt) node [left]{$x_1$};
            \fill (0.5, 0) circle (0.5pt) node [below]{$x_2$};
            \fill (0.2, 0.3) circle (0.5pt) node [left]{$x$};
            \fill (0, 1) node [below right=0.2, color=lineColor]{$S_1$};
            \fill (1, 0) node [above left=0.2, color=lineColor]{$S_2$};
            \fill (0.45, 0.55) node [right, color=lineColor]{$\ell$};
            \fill (0.3, 0.2) node [right=1pt, color=edgeColor]{$\Delta_2$};
        \end{tikzpicture}
        \label{fig:init_Dykstra}
        }
    \hspace{2cm}
    \subfloat[Sinkhorn-Knopp algorithm]{
        \begin{tikzpicture}[scale=3]
            \draw[->] (0, 0) -- (1.2, 0); 
            \draw[->] (0, 0) -- (0, 1.2);
            \draw[thick] (0, 0) node [below left]{$0$} -- (0, 1) node [left]{$1$} -- (1, 1) -- (1, 0) node [below]{$1$} -- (0, 0);
        
            \draw[very thick, color=edgeColor] (0, 0.5) -- (0.5, 0);
            \draw[thick, color=lineColor] (0, 0) -- (1, 0.2);
            \draw[thick, color=lineColor] (0, 0) -- (0.2, 1);
        
            \fill (1/12, 5/12) circle (0.5pt) node [right]{$x_1$};
            \fill (5/12, 1/12) circle (0.5pt) node [above]{$x_2$};
        
            \fill (1/8, 5/8) node [right, color=lineColor]{$\ell_1$};
            \fill (5/8, 1/8) node [above, color=lineColor]{$\ell_2$};
            \fill (0.2, 0.3) node [right=1pt, color=edgeColor]{$\Delta_2$};
        \end{tikzpicture}
        \label{fig:init_Sinkhorn}
    }
    \caption{The schematic diagram of the initialization method.}
    \label{fig:init}
\end{figure}

In Fig. \ref{fig:init_Dykstra}, let $x$ be an interior point of $\Delta_{2}$, i.e., $x > 0$ with $x(1)+x(2)=0.5$, and let $x_1=(0,0.5)$ and $x_2=(0.5,0)$ denote boundary points.
When initialization is based on the Dykstra algorithm, the probability density of sampling an interior point $x$ is proportional to the length of the line segment $\ell$ passing through $x$ with slope $1$, whereas the probability density of sampling boundary points $x_1$ and $x_2$ is proportional to the area of the regions $S_1$ and $S_2$, respectively.
Consequently, Dykstra-based initialization is biased toward boundary points.

In contrast, Fig. \ref{fig:init_Sinkhorn} illustrates that Sinkhorn–Knopp initialization produces a probability density at $x_1, x_2 \in \Delta_2$ proportional to the lengths of line segments $\ell_1$ and $\ell_2$ passing through $(0,0)$, thereby generating samples that are closer to uniform over $\Delta_2$.

\subsection{Convergence Analysis}
The convergence of Alg. \ref{alg:main} is described as follows.
\begin{lemma} \label{theorem:convergence}
    \emph{\citep[Theorem 10.15]{Beck2017FirstOrderMI}.}
    Suppose $f(V)$ is gradient $L$-Lipschitz continuous, and $\Omega(\mu) \subseteq \R^{n \times k}$ is closed, convex and nonempty.
    Let $V^{\ast}$ be a global optimum of Eq. \eqref{eq:LoRD} or Eq. \eqref{eq:B-LoRD}.
    At the $t$-th iteration of Alg. \ref{alg:main}, the following inequality holds:
    \begin{equation}
        \min_{0 \leq i \leq t} \|V^{i+1} - V^{i} \|_F \leq \sqrt{\frac{2}{L} \frac{f(V^{0}) - f(V^{\ast})}{t + 1}}.
    \end{equation}
\end{lemma}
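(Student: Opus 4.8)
The plan is to treat this as the standard convergence analysis of projected gradient descent under gradient Lipschitz continuity, which rests on two ingredients: a per-iteration sufficient-decrease inequality, followed by a telescoping (summation) argument. Before invoking these, I would verify that the hypotheses hold in our setting: $\Omega(\mu)$ is nonempty by Theorem~\ref{theorem:partition-space}, and it is closed and convex because it is cut out by the linear equalities $V^{T}1_n = \mu$ and $V\mu = 1_n/n$ together with the nonnegativity $V \geq 0$; the objective $f$ (either $f_1$ or $f_2$) is gradient $L$-Lipschitz by Theorem~\ref{theorem:lipschitz}. With these in hand, the bound is a consequence of first-order geometry alone.

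For the sufficient-decrease step I would combine two inequalities. First, the descent lemma, which follows from the $L$-Lipschitz continuity of $\nabla$: for all $V, W$,
\[
f(W) \leq f(V) + \langle \nabla(V), W - V\rangle + \tfrac{L}{2}\|W - V\|_F^2 .
\]
Second, the variational (obtuse-angle) characterization of the Euclidean projection onto the convex set $\Omega(\mu)$: writing $Y^t = V^t - \nabla(V^t)/L$ so that $V^{t+1} = \proj_{\Omega(\mu)}(Y^t)$, we have $\langle Y^t - V^{t+1}, W - V^{t+1}\rangle \leq 0$ for every $W \in \Omega(\mu)$. Taking the feasible point $W = V^t$ and rearranging yields $\langle \nabla(V^t), V^{t+1} - V^t\rangle \leq -L\|V^{t+1} - V^t\|_F^2$. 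Substituting this into the descent lemma with $V = V^t$, $W = V^{t+1}$ gives the key per-iteration estimate
\[
f(V^{t+1}) \leq f(V^t) - \tfrac{L}{2}\|V^{t+1} - V^t\|_F^2 ,
\]
equivalently $\tfrac{L}{2}\|V^{t+1} - V^t\|_F^2 \leq f(V^t) - f(V^{t+1})$.

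I would then telescope. Summing the last inequality over $i = 0,\dots,t$ collapses the right-hand side to $f(V^0) - f(V^{t+1})$, which is bounded above by $f(V^0) - f(V^{\ast})$ since $V^{\ast}$ is a global minimizer. Because there are $t+1$ summands, the smallest is at most the average, so $(t+1)\min_{0 \leq i \leq t}\|V^{i+1} - V^i\|_F^2 \leq \sum_{i=0}^{t}\|V^{i+1} - V^i\|_F^2 \leq \tfrac{2}{L}\big(f(V^0) - f(V^{\ast})\big)$; taking square roots delivers the claimed bound. The one step requiring care — and the main technical obstacle — is obtaining the sharp constant $\tfrac{L}{2}$ in the sufficient-decrease inequality: using only the fact that $V^{t+1}$ minimizes the quadratic surrogate over $\Omega(\mu)$ (i.e. plugging $W = V^t$ into the surrogate) yields merely $f(V^{t+1}) \leq f(V^t)$, that is, monotonicity with no quantitative gap. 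The factor $\tfrac{L}{2}$ genuinely requires the projection's obtuse-angle inequality, which upgrades $\langle \nabla(V^t), V^{t+1}-V^t\rangle \leq 0$ to $\leq -L\|V^{t+1}-V^t\|_F^2$, and this is precisely where the convexity of $\Omega(\mu)$ is used.
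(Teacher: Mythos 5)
Your proposal is correct and follows essentially the same route as the paper's proof: both combine the descent lemma with the obtuse-angle (variational) inequality of the projection at the feasible point $V^{t}$ to obtain the sufficient decrease $f(V^{t+1}) \leq f(V^{t}) - \tfrac{L}{2}\|V^{t+1}-V^{t}\|_F^2$, then telescope and bound the minimum by the average. The only cosmetic difference is the order of algebra—you bound $\langle \nabla(V^{t}), V^{t+1}-V^{t}\rangle$ first and then substitute, while the paper isolates the projection inner product inside the rewritten descent bound—but these are the same argument.
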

\begin{proof}
    See Appendix \ref{prooflemma7}.
\end{proof}
Lemma \ref{theorem:convergence} states the convergence condition $\tfrac{\|V^{t+1} - V^{t} \|_F}{\|V^t \|_F}$ $\leq \delta_v$ is always satisfied when $t$ is sufficiently large.

\subsection{Complexity Analysis} \label{sec:complexity}
Under the general setting of graph-based clustering, i.e., $S$ is an $n \times n$ matrix without any special structure, the computation of $\nabla(V)$ incurs a complexity of $\mathcal{O}(n^2 k)$.
To solve $\proj_{\Omega(\mu)}(U)$, the calculations of $Y, Z, V$ in Alg. \ref{alg:Dykstra} require $\mathcal{O}(n k)$ complexity per iteration.
Consequently, the overall complexity of Alg. \ref{alg:Dykstra} is $\mathcal{O}(n k b_{\mathrm{avg}})$, where $b_{\mathrm{avg}}$ is the average number of iterations of Alg. \ref{alg:Dykstra}.
In our experiments, $b_{\mathrm{avg}}$ is typically around $50$ in most cases  (see Appendix \ref{sec:add_bavg}).

To avoid $\mathcal{O}(n^2)$ complexity in practice, $S$ is commonly constructed as a sparse $q$-nearest neighbor ($q$-NN) graph \citep{hou2022progressive,park2017learning, wang2016structured}, with $q$ set to $\lfloor \log_2(n) \rfloor + 1$ as recommended by \citep{von2007tutorial}.
Under this configuration, the computation of $\nabla(V)$ requires  only $\mathcal{O}(n \log(n) k)$ complexity. The memory requirements for storing $S$ and $V$ are $\mathcal{O}(n \log n)$ and $\mathcal{O}(nk)$, respectively. Therefore, Alg. \ref{alg:main} has a per-iteration time complexity of $\mathcal{O}(n k (\log n + b_{\mathrm{avg}}))$   and a total memory complexity of $\mathcal{O}(n(\log n + k))$.
Moreover, Alg. \ref{alg:main} exclusively involves matrix multiplication operations, which ensures strong GPU compatibility and scalability for large-scale datasets.


Compared to the DSN-based block diagonal enhancement methods \citep{wang2016structured, park2017learning, julien2022Learning}, which exhibit $\mathcal{O}(n^3)$ complexity, our Alg. \ref{alg:main} is significantly more efficient, benefiting from the low-rank structure of $VV^T$.

\section{Experiments} \label{sec:experiments}
\subsection{Experimental Settings} \label{sec:exp_set}

\indent

\textbf{Datasets.}
We adopted 12 datasets as described in Table \ref{tab:datasets}. 
As our methods require the input of a prior class probability $\mu$, we divide the datasets into six class-balanced datasets and six class-imbalanced datasets with varying imbalance rates (IBR) defined in Eq. \eqref{eq:IBR} for better analysis.
\begin{table}[htbp]
    \centering
    \tabcolsep=2mm
    \caption{Descriptions of Datasets}
    \begin{scriptsize}
    \begin{tabular}{cccccc}
    \toprule
    Code    & Dataset     & Dimension         & \# Sample ($n$) & \# Cluster ($k$) & IBR \\
    \midrule
    D1      & \href{http://www.cad.zju.edu.cn/home/dengcai/Data/Yale/Yale_32x32.mat}{YaleB}     & $32 \times 32$    & $165$     & $15$  & $0$ \\
    D2      & \href{http://www.cad.zju.edu.cn/home/dengcai/Data/ORL/ORL_32x32.mat}{ORL}         & $32 \times 32$    & $400$     & $40$  & $0$ \\
    D3      & \href{https://archive.ics.uci.edu/static/public/139/synthetic+control+chart+time+series.zip}{CHART}       & $60$              & $600$     & $6$   & $0$ \\
    D4      & \href{http://www.cad.zju.edu.cn/home/dengcai/Data/USPS/USPS.mat}{USPS-1000}   & $16 \times 16$    & $1000$    & $10$  & $0$ \\
    D5      & \href{http://www.cad.zju.edu.cn/home/dengcai/Data/MLData.html}{Isolet}      & $617$             & $7797$    & $26$  & $0$ \\
    D6      & \href{http://www.cad.zju.edu.cn/home/dengcai/Data/COIL20/COIL100.mat}{COIL100}     & $32 \times 32$    & $7200$    & $100$  & $0$ \\
    D7      & \href{https://archive.ics.uci.edu/dataset/178/semeion+handwritten+digit}{Semeion}     & $16 \times 16$    & $1593$    & $10$  & $3 \times 10^{-5}$ \\
    D8      & \href{http://www.cad.zju.edu.cn/home/dengcai/Data/MNIST/Orig.mat}{MNIST}       & $28 \times 28$    & $70000$   & $10$  & $0.0006$ \\
    D9      & \href{http://www.cad.zju.edu.cn/home/dengcai/Data/MNIST/2k2k.mat}{MNIST-2000}  & $28 \times 28$    & $2000$    & $10$  & $0.0014$ \\
    D10     & \href{https://archive.ics.uci.edu/dataset/109/wine}{Wine}        & $13$              & $178$     & $3$   & $0.0114$ \\
    D11     & \href{https://archive.ics.uci.edu/dataset/110/yeast}{Yeast}       & $8$               & $1484$    & $10$  & $0.2503$ \\
    D12     & \href{https://archive.ics.uci.edu/dataset/39/ecoli}{Ecoli}       & $7$               & $336$     & $8$   & $0.2704$ \\
    \bottomrule
    \end{tabular}
    \end{scriptsize}
    \label{tab:datasets}
\end{table}

\textbf{Methods under comparison.}
We compared the proposed methods with the following five types of methods:

\noindent Kernel $k$-means-based methods:
\begin{itemize}
    \item Kernel $k$-means (KKM) \citep{dhillon2004kernel}: Approximately solves the kernel $k$-means problem in Eq. \eqref{eq:kmeans_tr} with multiple random restarts.
    \item Global kernel $k$-means (GKKM) \citep{tzortzis2009global}: A deterministic algorithm for solving KKM that uses an incremental approach to obtain clustering results, making it more likely to avoid poor local minima and to find a near-optimal solution.
\end{itemize}
Spectral clustering (SC)-based methods:
\begin{itemize}
    \item Spectral clustering (SC) (Alg. 3 in \citep{von2007tutorial}): A relaxation of kernel $k$-means that only keeps the orthogonality constraint.
    \item Normalized Cut (NCut) \citep{shi2000normalized}: A variant of SC that transforms graph partitioning into the problem of solving the eigenvectors of the normalized graph Laplacian matrix to achieve optimal segmentation by minimizing inter-class similarity and maximizing intra-class similarity.
    \item Spectral rotation (SR) \citep{huang2013spectral}: An improvement over SC. Instead of post-processing via $k$-means, SR imposes an additional orthonormal constraint to better approximate the optimal continuous solution.
    \item Discrete and balanced spectral clustering (DBSC) \citep{wang2023discrete}: An improvement over SC, which jointly learns the spectral factor and clustering result, with an adjustable balance rate for clusters.
    \item Direct spectral clustering (DirectSC) \citep{nie2024novel}: An improvement over SC, which adaptively learns an improved similarity graph as well as the corresponding spectral factor from an initial low-quality similarity graph. Both the learned similarity graph and spectral factor can be used to directly obtain the final clustering result.
\end{itemize}
SymNMF-based methods:
\begin{itemize}
    \item SymNMF \citep{kuang2012symmetric, kuang2015symnmf}: A relaxation of kernel $k$-means that only keeps the nonnegative constraint. The multiplicative update algorithm described in \citep{long2007relational} is applied to solve Eq. \eqref{eq:SymNMF}.
    \item PHALS \citep{hou2022progressive}: An efficient algorithm for solving SymNMF.
    \item Self-supervised SymNMF (S$^{3}$NMF) \citep{jia2021self}: Progressively boosts the quality from an initial low-quality similarity matrix by combining multiple class assignment matrices.
    \item NLR \citep{zhuang2024statistically}: A non-convex Burer-Monteiro factorization approach for solving the (kernel) $k$-means problem.
\end{itemize}
Doubly stochastic normalization (DSN)-based methods:
\begin{itemize}
    \item Doubly stochastic normalization (DSN) \citep{zass2006doubly}: A relaxation of kernel $k$-means that relaxes the orthogonality constraint and the low-rank structure.
    \item Structured doubly stochastic clustering (SDS) \citep{wang2016structured}: A DSN method with an enhanced block diagonal structure by incorporating the block diagonal regularization $\|Z \|_{\Boxed{k}}$.
    \item DvD \citep{park2017learning}: A DSN method with an enhanced block diagonal structure based on the Davis-Kahan theorem.
    \item DSNI \citep{julien2022Learning}: A DSN method with an enhanced block diagonal structure by incorporating a idempotent regularization of $Z$.
    \item Doubly stochastic distance clustering (DSDC) \citep{he2023doubly}: A scalable method that replaces the doubly stochastic similarity matrix with a doubly stochastic Euclidean matrix.
\end{itemize}
Other graph-based clustering methods:
\begin{itemize}
    \item SDS \citep{peng2007approximating, kulis2007fast}: A convex relaxation of kernel $k$-means that relaxes the idempotency constraint and the low-rank structure.
    \item DCD \citep{yang2012clustering, yang2016low}: Replaces the $L_2$-norm with KL divergence to measure the discrepancy between the input similarity matrix and the learned similarity matrix.
\end{itemize}

\textbf{Construction of $S$.}
We constructed the similarity matrix $S$ for each dataset using the $q$-nearest neighbors ($q$-NN) graph weighted with the self-tuning method \citep{zelnik2004self}.
Let $x_i \in N_q(x_j)$ represent the sample $x_i$ that belongs to the $q$-NN of $x_j$, then $S$ is defined as:
\begin{equation}
    S_{ij} = \begin{cases}
        \exp\left(-\frac{\|x_i - x_j \|_2^2}{\sigma_i \sigma_j} \right), & \text{if } x_i \in N_q(x_j) \text{ or } x_j \in N_q(x_i) \\
        0, & \text{otherwise}
    \end{cases},
\end{equation}
where $\sigma_i$ was set to the Euclidean distance between $x_i$ and its $7$-th nearest neighbor, and $q$ is chosen as $\lfloor \log_2(n) \rfloor + 1$ as suggested by \citep{von2007tutorial}.

Additionally, in LoRD, we normalized $S \leftarrow S / 1_n^T S 1_n$ because $\forall V \in \Omega, 1_n^T VV^T 1_n = 1$; in KKM and GKKM, we used a fully connected graph (i.e., set $q = n$) because they require $S$ to be positive semidefinite.

\textbf{Initialization method.}
For GKKM, SDS, SC, DirectSC, DSN, SDS, DSNI, and DSDC methods, no random initialization is required, and they only take the constructed $S$ (GKKM, SC, DirectSC, DSN, SDS, and DSNI) or $X$ (DSDC) as input.
For KKM, DCD, NLR and S$^{3}$NMF, we adopted initialization methods provided in the original papers.
For the other methods, we first generated $V \in \R^{n \times k}$ with elements uniformly sampled in the range $[0, 1]$, and
\begin{itemize}
    \item In LoRD and B-LoRD, we used $V^{0} = \mathrm{Sinkhorn}(V)$ (described in Alg. \ref{alg:sinkhorn}) to normalize $V$ onto $\Omega(\mu)$.
    \item In SymNMF and PHALS, we normalized $V^{0} = \frac{\sqrt{\langle S, VV^T \rangle}}{\|VV^T \|_F} V$.
    \item In SR and DBSC, we binarized $V_{ij}^{0} = 1$ if $j = \arg\max_{\hat{j}} V_{i\hat{j}}$, and $0$ otherwise, for each $i$-th row of $V^{0}$.
\end{itemize}

\textbf{Result selection.}
For each method that requires random initialization, we ran $50$ initializations and reported the result corresponding to the minimal or maximal objective function value.
The SDP, DSN, SDS, DvD, and DSNI methods require SC as post-processing to obtain the clustering result, so we ran SC $50$ times and reported the average performance for these methods.

\textbf{Hyperparameters tuning.}
For a fair comparison, we adopted the hyper-parameters tuning methods for DCD, DBSC, DirectSC, S$^{3}$NMF, SDS, DvD, DSNI, and DSDC provided in their original papers.
For KKM, GKKM, SC, SR, SymNMF, PHALS, DSN, and LoRD, there are no hyper-parameters to tune.
For NLR, we carefully tuned the hyper-parameter $\alpha$ and $\beta$ to satisfy the constraint $VV^T 1_n = 1_n$ and guarantee convergence.
For the proposed B-LoRD, we tuned $\tau$ over $\{0.01, 0.02, \dots, 1 \}$ to calculate $\gamma$. 

For simplicity, we present a subset of results here.
We refer readers to the \textit{Appendix} for detailed synthetic experiments in Appendix \ref{sec:add_synthetic}, hyperparameter analysis in Appendix \ref{sec:add_hp}, and convergence analysis in Appendix \ref{sec:add_convergence}.

\subsection{Clustering Results} \label{sec:clustering}
Table \ref{tab:clustering_ACC} shows the clustering performance in terms of ACC for all methods. 
We also refer readers to Appendix \ref{sec:add_clustering_result} for the results in terms of other metrics, including NMI, PUR, and F1.
\begin{table*}[t]
    \centering
    \tabcolsep=1.3mm
    \caption{Comparisons of clustering performance in terms of ACC. The last column refers to the average ACC of the nine datasets, excluding Isolet, COIL100, and MNIST. The best and second-best results are highlighted in \textbf{bold} and \underline{underlined}, respectively. 
    }
    \label{tab:clustering_ACC}
    \begin{scriptsize}
    \begin{tabular}{c|cccccccccccc|c}
    \toprule
    & D1 & D2 & D3 & D4 & D5 & D6 & D7 & D8 & D9 & D10 & D11 & D12 & Avg.  \\
    \midrule
    KKM      & $0.485$ &	$0.588$ & $0.835$ & $0.494$ & $0.547$ & $0.473$ & $0.572$ & $0.657$ & $0.606$ & $0.944$ & $0.297$ & $0.569$ & $0.599$ \\
    GKKM     & $0.339$ &	$0.488$ & $0.568$ & $0.507$ & \underline{$0.609$} & $0.238$ & $0.536$ & $-$ & $0.595$ & $0.944$ & $0.317$ & $0.640$ & $0.548$ \\
    SDP      & $0.504$ &	$0.665$ & $0.680$ & $0.539$ & $-$ & $-$ & $0.670$ & $-$ & $0.669$ & $0.916$ & $0.330$ & $0.538$ & $0.603$ \\
    DCD      & $0.461$ &	$0.645$ & $0.570$ & $0.562$ & $0.534$ & $0.562$ & $0.564$ & $0.694$ & $0.680$ & $0.955$ & $0.326$ & $0.574$ & $0.593$ \\
    SC   & $0.466$ &	$0.640$ & $0.568$ & $0.542$ & $0.542$ & \underline{$0.589$} & $0.665$ & $0.682$ & $0.682$ & $0.949$ & $0.333$ & $0.533$ & $0.598$ \\
    SR   & $0.467$ &	$0.638$ & $0.568$ & $0.535$ & $0.530$ & $0.547$ & $0.553$ & $0.670$ & $0.677$ & $0.949$ & $0.371$ & $0.610$ & $0.596$ \\
    NCut   & $0.467$ &	$0.650$ & $0.568$ & $0.521$ & $0.531$ & $0.511$ & $0.554$ & $0.675$ & $0.676$ & $0.944$ & $0.359$ & $0.577$ & $0.591$ \\
    DBSC     & $0.473$ &	\underline{$0.658$} & $0.842$ & $0.552$ & $0.542$ & $0.545$ & $0.618$ & $-$ & $0.625$ & $0.944$ & $0.364$ & $0.539$ & $0.624$ \\
    DirectSC     & $0.429$ &	$0.598$ & $0.645$ & $0.464$ & $0.432$ & $-$ & $0.458$ & $-$ & $0.451$ & $0.899$ & $0.359$ & $0.631$ & $0.548$ \\
    SymNMF   & $0.473$ &	$0.645$ & $0.842$ & $0.549$ & $0.537$ & $0.493$ & $0.615$ & $0.525$ & $0.641$ & $0.916$ & $0.334$ & $0.524$ & $0.615$ \\
    PHALS  & $0.473$ &	$0.645$ & $0.800$ & $0.520$ & $0.546$ & $0.527$ & $0.630$ & $0.610$ & $0.644$ & $0.927$ & $0.362$ & $0.518$ & $0.613$ \\
    S$^{3}$NMF & $0.475$ &	$0.630$ & $0.810$ & \underline{$0.623$} & $-$ & $-$ & $0.704$ & $-$ & $0.664$ & $0.935$ & $0.348$ & $0.578$ & $0.641$ \\
    NLR  & $0.491$ & $0.649$ & $0.645$ & $0.418$ & $0.352$ & $-$ & $0.497$ & $-$ & $0.457$ & $0.938$ & $0.364$ & $0.649$ & $0.568$ \\
    DSN   & $0.469$ & $0.640$ & $0.568$ & $0.547$ & $0.540$ & $0.588$ & $0.664$ & $-$ & $0.681$ & $0.949$ & $0.328$ & $0.534$ & $0.598$ \\
    SDS \ & \underline{$0.503$} & \underline{$0.658$} & $0.847$ & $0.574$ & $-$ & $-$ & $0.693$ & $-$ & $0.666$ & \underline{$0.961$} & \underline{$0.387$} & \underline{$0.711$} & \underline{$0.667$} \\
    DvD  & $0.442$ & $0.601$ & $0.608$ & $0.510$ & $-$ & $-$ & $0.517$ & $-$ & $0.451$ & \pmb{$0.966$} & $0.367$ & $0.616$ & $0.564$ \\
    DSNI  & $0.465$ & $0.632$ & $0.563$ & $0.610$ & $-$ & $-$  & $0.670$ & $-$ & \underline{$0.695$} & $0.899$ & $0.325$ & $0.589$& $0.605$ \\
    DSDC  & $0.405$ & $0.549$ & $0.602$ & $0.478$ & $0.561$ & $0.518$ & $0.619$ & $0.557$ & $0.579$ & $0.888$ & $0.324$ & $0.550$ & $0.555$ \\
    \midrule
    LoRD (ours) & $0.467$ & $0.655$ & \underline{$0.878$} & $0.606$ & $0.593$ & $0.496$ & \underline{$0.755$} & \underline{$0.943$} & $0.657$ & $0.944$ & $0.303$ & $0.455$ & $0.636$ \\
    B-LoRD (ours) & \pmb{$0.515$} & \pmb{$0.685$} & \pmb{$0.905$} & \pmb{$0.740$} & \pmb{$0.644$} & \pmb{$0.647$} & \pmb{$0.783$} & \pmb{$0.964$} & \pmb{$0.745$} & $0.955$ & \pmb{$0.412$} & \pmb{$0.741$} & \pmb{$0.720$} \\
    \bottomrule
    \end{tabular}
    \end{scriptsize}
\end{table*}
From Tbl. \ref{tab:clustering_ACC} (the results of NMI, PUR and F1 in Appendix \ref{sec:add_clustering_result} exhibited similar results), we can observe that
\begin{itemize}
    \item Our B-LoRD significantly outperforms the compared methods, achieving the highest ACC values in most cases ($11/12$). Compare to the second best SDS method, our B-LoRD improve $0.053$ ACC in average.
    \item Our LoRD surpasses the hyperparameter-free methods (KKM, SC, SR, SymNMF, PHALS, and DSC) and remains competitive with block-diagonality-enhanced methods (NLR, DvD, and DSNI). 
    \item On balanced datasets, our B-LoRD always achieves the highest ACC values, benefiting from its inherent advantage ($\mu_0 = \mu^{\ast}$) in this case.
    \item On imbalanced datasets, our B-LoRD continues to outperform other methods. For example, on the Yeast and Ecoli dataset—characterized by the highest imbalance ratio (IBR)—B-LoRD achieves the highest ACC, PUR, and F1 values, demonstrating its robustness under data imbalance. Additional results can be found in Appendix \ref{sec:add_hp}.
    \item The block-diagonality-enhanced methods generally outperform others, especially our B-LoRD and SDS, both of which employ $k$-block diagonal regularization (Eq. \eqref{eq:blk-diag}). Compared to SDS, our B-LoRD leverages a low-rank doubly stochastic matrix to simplify computation, thereby improving computational efficiency. 
\end{itemize}

\subsection{Correlation between Objective Function Value and ACC}
Fig. \ref{fig:loss_ACC_part} and Table \ref{tab:R_square} show the correlation between the objective function values of models and their ACCs (full results are available in Appendix \ref{sec:add_clustering_result}).

The strength of this correlation was quantitatively measured by the coefficient of determination ($\mathrm{R}^2$).
From these results, we observe that the objective function values of KKM, LoRD, and B-LoRD are highly correlated with the clustering performance, while SR, SymNMF, and PHALS are not.
This may be because SR, SymNMF, and PHALS relax the doubly stochastic constraint, making $V$ unable to represent clusters partition well.
Compared to KKM,  LoRD and B-LoRD further reduce the optimization space by specifying the class prior probability $\mu$, which likely explains why the $\mathrm{R}^2$s of LoRD and B-LoRD are higher than that of KKM.

A directly benfit of the strong correlation is that the final clustering result from multiple initializations can be selected according to its objective function value.

\begin{table*}[t]
    \centering
    \tabcolsep=1.4mm
    \caption{The R$^2$ between the objective function value and the ACC learned by $50$ initializations. The values higher than $0.5$ are highlighted by \textbf{bold}.}
    \label{tab:R_square}
    \begin{scriptsize}
    \begin{tabular}{c|cccccccccccc|c}
    \toprule
    $\mathrm{R}^{2}$ & D1 & D2 & D3 & D4 & D5 & D6 & D7 & D8 & D9 & D10 & D11 & D12 & Avg.  \\
    \midrule
    KKM  & $0.08$ & \pmb{$0.53$} & \pmb{$0.75$} & $0.23$ & $0.32$ & \pmb{$0.62$} & $0.24$ & $0.47$ & $0.31$ & \pmb{$0.96$} & $0.12$ & $\!\!-0.02\;\,$ & $0.38 (3)$ \\
    SR & $0.00$ & $0.33$ & $0.35$ & \pmb{$0.94$} & $0.02$ & \pmb{$0.72$} & $\!\!-0.83\;\,$ & NaN & $0.31$ & NaN & $0.01$ & $0.27$ & $\!\!\!0.21\;\;\,$ \\
    SymNMF & $0.37$ & $0.21$ & $0.21$ & $0.18$ & $0.00$ & $0.09$ & $0.30$ & $\!\!-0.02\;\,$ & $0.27$ & \pmb{$0.75$} & $0.00$ & $0.06$ & $\!\!\!0.20\;\;\,$ \\
    PHALS & $0.02$ & $0.28$ & $0.47$ & $\!\!-0.01\;\,$ & $0.07$ & $0.09$ & $0.07$ & $\!\!-0.16\;\,$ & $0.12$ & \pmb{$0.75$} & $0.07$ & $\!\!-0.13\;\,$ & $\!\!\!0.14\;\;\,$ \\
    LoRD (ours) & $0.01$ & $0.02$ & \pmb{$0.71$} & $0.40$ & \pmb{$0.53$} & \pmb{$0.62$} & \pmb{$0.75$} & \pmb{$0.91$} & $0.42$ & \pmb{$1.00$} & $0.24$ & $0.09$ & $0.48 (2)$ \\
    B-LoRD (ours) & $0.14$ & $0.31$ & \pmb{$0.52$} & $0.37$ & \pmb{$0.64$} & \pmb{$0.60$} & \pmb{$0.80$} & \pmb{$0.91$} & \pmb{$0.51$} & \pmb{$1.00$} & $0.45$ & $0.00$ & $0.49 (1)$ \\
    \bottomrule
    \end{tabular}
    \end{scriptsize}
\end{table*}

\begin{figure}[t]
    \centering
    \includegraphics[width=0.8\linewidth]{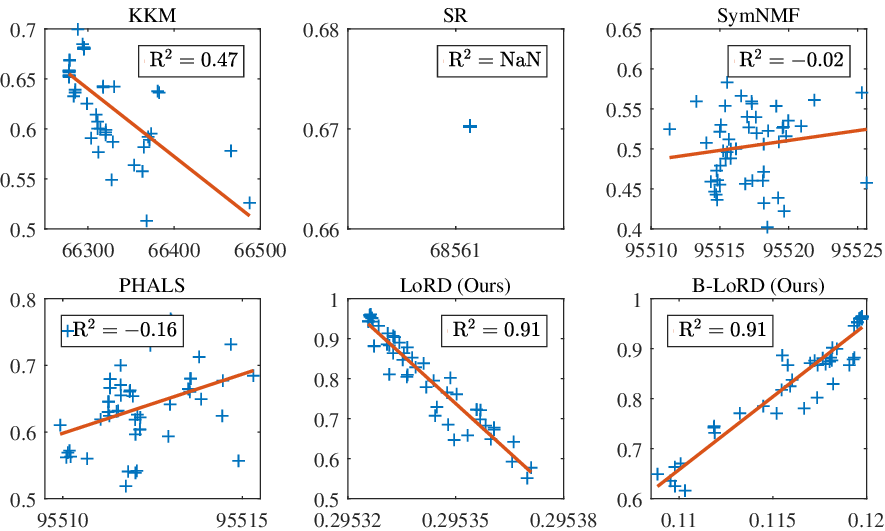}
    \captionof{figure}{The relationship between the objective function value ($x$-axis) and the clustering ACC ($y$-axis) on the MNIST dataset, zoom in for details.}
    \label{fig:loss_ACC_part}
\end{figure}

\subsection{Analysis of Hyperparameters} \label{sec:add_hp}
\begin{figure}[t]
    \centering
    \includegraphics[width=\linewidth]{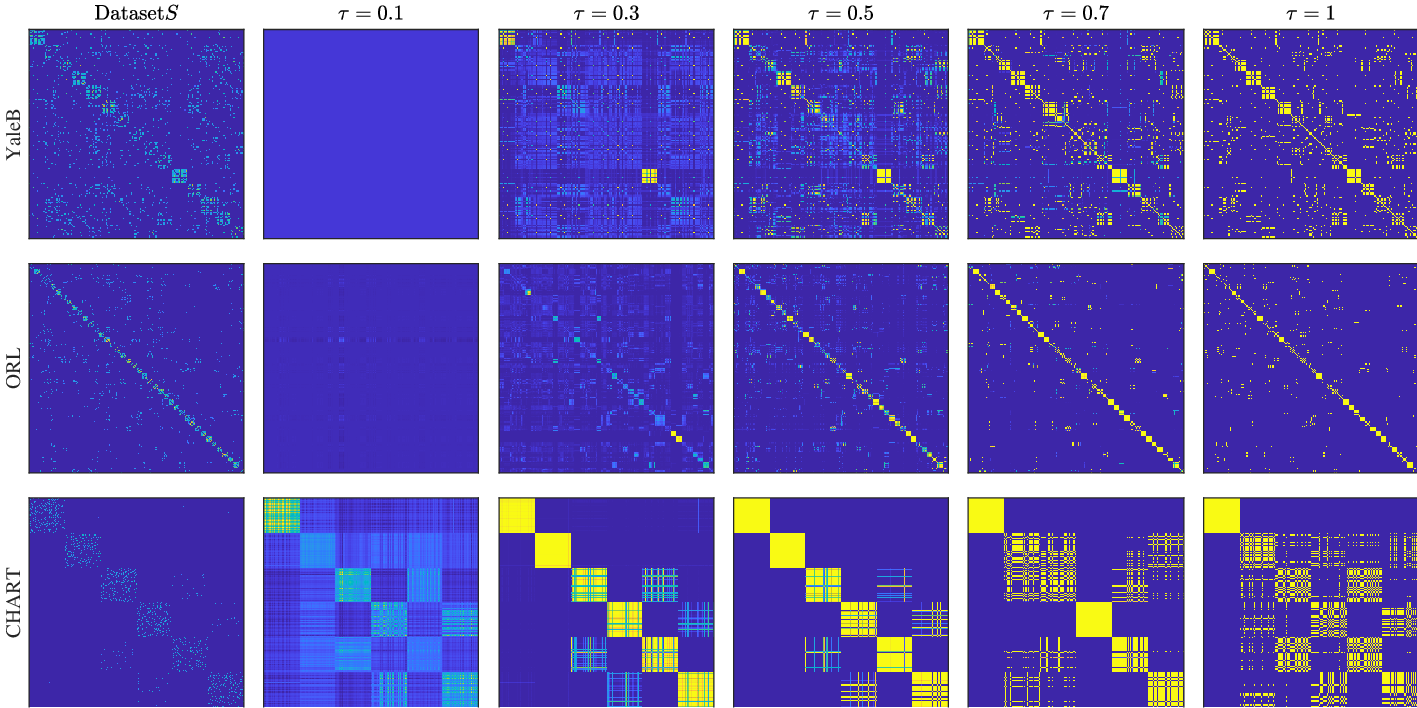}
    \caption{The visualization of learned $VV^T$ (normalized to $[0, 1]$) with different $\tau$ of B-LoRD on the YaleB, ORL, and CHART datasets, zoom in for details.}
    \label{fig:S_gamma}
\end{figure}

\textbf{How does $\tau$ control the block diagonality of $VV^T$?}
To investigate how $\tau$ (used in the computation of $\gamma$) controls the block diagonality of $VV^T$ learned by B-LoRD, we visualize $VV^T$ with different $\tau$ in Fig. \ref{fig:S_gamma}, from which it can be observed that:
\begin{itemize}
    \item The block diagonality of $VV^T$ increases with larger values of $\tau$, although the rate of increase varies across datasets. This variation may be attributed to specific properties of the similarity matrix $S$.
    \item When $\tau$ is sufficiently small (e.g., $\tau = 0.1$), the learned $V \approx 1_n \mu^T / n$ and $VV^T \approx 1_n 1_n^T / n^2$ exhibit no block diagonality. Theoretically, this trivial solution always arises when $\tau = 0$.
    \item When $\tau$ is large, the learned $VV^T$ demonstrates strong block diagonality. For instance, with $\tau = 0.7$ on the ORL dataset, the block structure becomes prominent. Notably, when $\tau = 1$, the learned $VV^T$ is nearly  block diagonal with each row of $V$ containing only a single non-zero element.
\end{itemize}

\textbf{The influence of $\tau$ on clutering accuracy.} 
The result is shown in Fig. \ref{fig:hp}, where $\tau_0 = \frac{\lambda_{\max}(S)}{\lambda_{\max}(S) - \lambda_{\min}(S)}$.
When $\tau > \tau_0$, $\gamma > 0$ and the block diagonality is enhanced; when $\tau < \tau_0$, $\gamma < 0$ and the block diagonality is weakened.
Moreover, we visualized the learned $Z = n^2 V \mathrm{Diag}(\mu \odot \mu) V^T$ of LoRD and B-LoRD on each dataset in Fig. \ref{fig:learned_S}, where the result of B-LoRD corresponds to the best $\tau$ achieving the highest ACC. 
\begin{figure}[h]
    \centering
    \includegraphics[width=\linewidth]{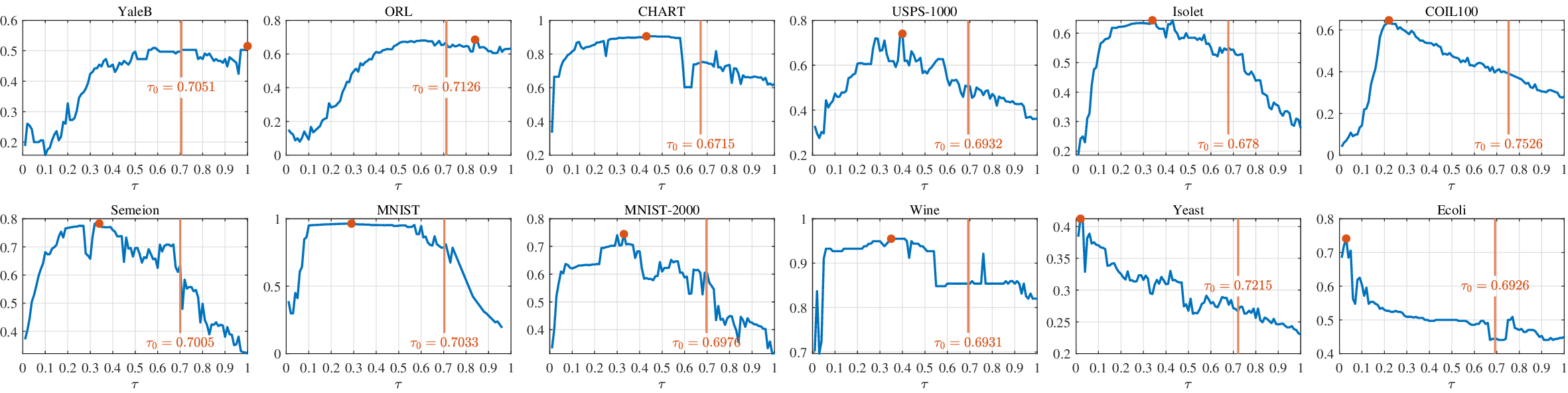}
    \caption{Values of ACC ($y$-axis) of B-LoRD with different values of $\tau$ ($x$-axis).}
    \label{fig:hp}
\end{figure}
\begin{figure}[h]
    \centering
    \includegraphics[width=\linewidth]{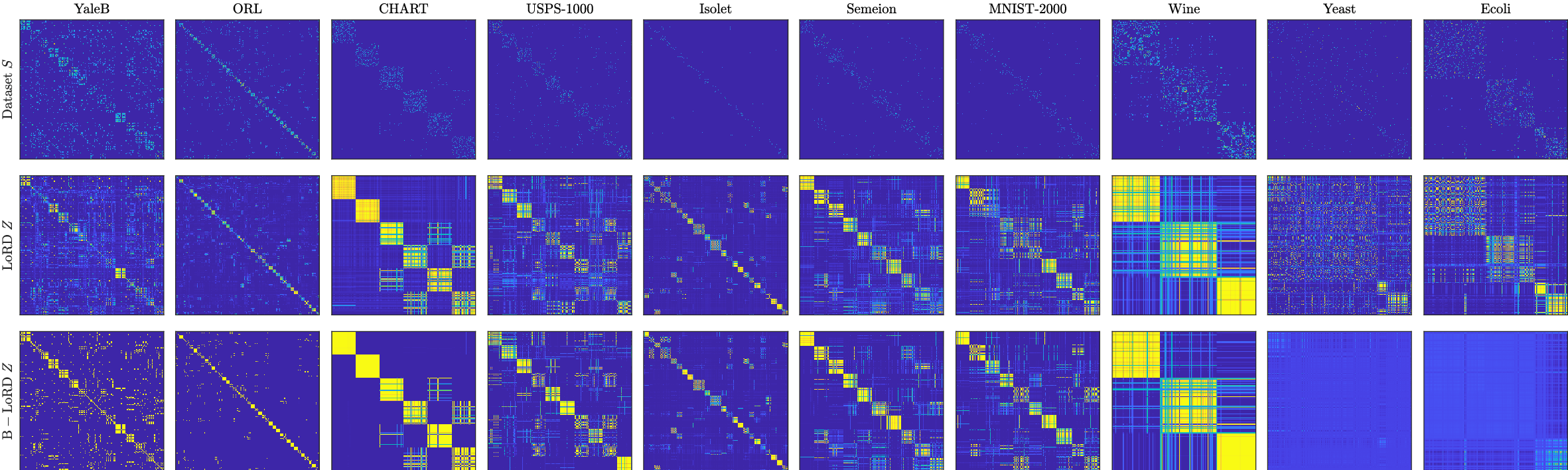}
    \caption{The visualization of learned $Z$ (normalized to $[0, 1]$) of LoRD and B-LoRD on each datasets.}
    \label{fig:learned_S}
\end{figure}
From Fig. \ref{fig:hp} and Fig. \ref{fig:learned_S}, it can be seen that
\begin{itemize}
    \item When the dataset is balanced, the optimal $\tau$ (corresponding to the highest ACC) is generally high, and the learned $Z$ exhibits high block-diagonality, especially on the YaleB, ORL, and CHART datasets. 
    \item When the dataset is imbalanced, B-LoRD cannot find a suitable uniform partition, making small $\tau$ values perform well, especially on the Yeast and Ecoli datasets.
\end{itemize}

\textbf{Practical adaptive selection strategy for $\tau$.}
As shown in Fig. \ref{fig:hp}, the value of ACC is sensitive to the choice of $\tau$, necessitating an adaptive hyperparameter selection strategy. Here we provide two practical guidelines for determining $\tau$:
\begin{itemize}
    \item \textbf{Based on sample size $n$:} Empirical observations suggest that $\tau$ tends to decrease as the sample size $n$ increases. This leads to the first approximation strategy: $\hat{\tau}_1 = \min\{2n^{-0.24}, 1\}$.
    \item \textbf{Based on $n$ and the block-diagonality of $S$:} Additionally, $\tau$ decreases with lower block-diagonality of $S$. 
    We quantify block-diagonality using the metric $b = \sum_{i=1}^{k} \lambda_i(L_S) / \mathrm{Tr}(L_S)$, where $L_S = \mathrm{Diag}(S\mathbf{1}_n) - S$ is the Laplacian of $S$. This metric can be efficiently computed when $S$ is sparse. By combining $b$ and $n$, we propose the second approximation: $\hat{\tau}_2 = \min\{ 0.34 \exp(50 b - 0.03 \log n), 1 \}$.
\end{itemize}
The values of $\hat{\tau}_1$, $\hat{\tau}_2$, and the optimal $\tau^{\ast}$ for each dataset are presented in Table \ref{tab:approx_tau_value}, where MAE stands for Mean Absolute Error. The corresponding clustering ACC values of B-LoRD for $\hat{\tau}_1$, $\hat{\tau}_2$, and $\tau^{\ast}$ are reported in Table \ref{tab:approx_tau_ACC}. From Table \ref{tab:approx_tau_value} and Table \ref{tab:approx_tau_ACC}, it can be observed that when using $\hat{\tau}_1$ and $\hat{\tau}_2$, the MAE values were $0.16$ and $0.13$, respectively, while the average ACC decreases by only $0.05$ and $0.045$ accordingly.
These results validate the effectiveness of the proposed adaptive strategies.
\begin{table*}[htbp]
    \centering
    \tabcolsep=1.6mm
    \caption{The values of $\hat{\tau}_1$, $\hat{\tau}_2$ and $\tau^{\ast}$.}
    \label{tab:approx_tau_value}
    \begin{scriptsize}
    \begin{tabular}{c|cccccccccccc|c}
    \toprule
     & D1 & D2 & D3 & D4 & D5 & D6 & D7 & D8 & D9 & D10 & D11 & D12 & MAE  \\
    \midrule
    $\hat{\tau}_1$  & $0.59$ & $0.47$ & $0.43$ & $0.38$ & $0.23$ & $0.24$ & $0.34$ & $0.14$ & $0.32$ & $0.58$ & $0.35$ & $0.50$ & $0.16$ \\
    $\hat{\tau}_2$  & $0.83$ & $0.95$ & $0.28$ & $0.29$ & $0.26$ & $0.26$ & $0.28$ & $0.24$ & $0.27$ & $0.30$ & $0.28$ & $0.30$ & $0.13$ \\
    $\tau^{\ast}$  & $0.83$ & $0.62$ & $0.44$ & $0.28$ & $0.34$ & $0.22$ & $0.42$ & $0.34$ & $0.38$ & $0.43$ & $0.04$ & $0.03$ & $0$ \\
    \bottomrule
    \end{tabular}
    \end{scriptsize}
\end{table*}

\begin{table*}[htbp]
    \centering
    \tabcolsep=1.4mm
    \caption{The clustering ACC values of B-LoRD corresponding to $\hat{\tau}_1$, $\hat{\tau}_2$ and $\tau^{\ast}$.}
    \label{tab:approx_tau_ACC}
    \begin{scriptsize}
    \begin{tabular}{c|cccccccccccc|c}
    \toprule
    ACC & D1 & D2 & D3 & D4 & D5 & D6 & D7 & D8 & D9 & D10 & D11 & D12 & Avg.  \\
    \midrule
    $\hat{\tau}_1$  & $0.509$ & $0.637$ & $0.905$ & $0.633$ & $0.627$ & $0.629$ & $0.782$ & $0.954$ & $0.716$ & $0.848$ & $0.304$ & $0.500$ & $0.670$ \\
    $\hat{\tau}_2$  & $0.485$ & $0.628$ & $0.890$ & $0.719$ & $0.633$ & $0.608$ & $0.693$ & $0.962$ & $0.702$ & $0.949$ & $0.321$ & $0.509$ & $0.675$ \\
    $\tau^{\ast}$  & $0.515$ & $0.685$ & $0.905$ & $0.740$ & $0.644$ & $0.647$ & $0.783$ & $0.964$ & $0.745$ & $0.955$ & $0.412$ & $0.741$ & $0.720$ \\
    \bottomrule
    \end{tabular}
    \end{scriptsize}
\end{table*}

\subsection{Robustness to Data Imbalance} \label{sec:robust}
In our experiments, we used $\mu_0 = [1 / \sqrt{k}, \dots, 1 / \sqrt{k}]^T$ because $\mu^{\ast} = [\sqrt{\pi_1}, \dots, \sqrt{\pi_k}]^T$ was unknown.
This setting may lead to misleading clustering results when the dataset is significantly imbalanced.
To analyze the performance gap of LoRD and B-LoRD between $\mu_0$ and $\mu^{\ast}$ on imbalanced datasets, we define the imbalance rate (IBR) as: 
\begin{equation}
    \mathrm{IBR} = 1 - \mathcal{H}(\p) / \log k,
    \label{eq:IBR}
\end{equation}
where $\mathcal{H}(\p) = - \sum_{i=1}^{k} \p_i \log \p_i$ is the entropy of $\p$, and the normalization factor $\log k$ ensures that $\mathrm{IBR} \in [0, 1]$.

As shown in Table \ref{tab:ACC_gap}, the performance gap generally increases as IBR increases.
Meanwhile, B-LoRD is more robust to IBR than LoRD, because the block diagonality of $VV^T$ can be adapted by tuning $\tau \in [0, 1]$ to alleviate this effect.
Please see the detailed discussion in Appendix \ref{sec:add_hp}.
\begin{table}[h]
    \centering
    \captionof{table}{The gap of ACC on imbalanced datasets.}
    \label{tab:ACC_gap}
    \begin{scriptsize}
    \begin{tabular}{c|ccccc}
        \toprule
        Datasets & Semeion & MNIST-2000 & Wine & Yeast & Ecoli  \\
        IBR & $3 \times 10^{-5}$ & $0.0014$ & $0.0114$ & $0.2503$ & $0.2704$ \\
        \midrule
        LoRD-$\mu_0$       & $0.755$ & $0.657$ & $0.944$ & $0.303$ & $0.455$ \\
        LoRD-$\mu^{\ast}$  & $0.755$ & $0.669$ & $0.916$ & $0.385$ & $0.717$ \\
        LoRD-gap           & $0$ & $0.008$ & $\!\!-0.028\;\,$ & $0.082$ & $0.262$ \\
        \midrule
        B-LoRD-$\mu_0$     & $0.783$ & $0.745$ & $0.955$ & $0.412$ & $0.741$ \\
        B-LoRD-$\mu^{\ast}$& $0.783$ & $0.742$ & $0.933$ & $0.470$ & $0.801$ \\
        B-LoRD-gap         & $0$ & $\!\!-0.003\;\,$ & $\!\!-0.022\;\,$ & $0.058$ & $0.060$ \\
        \bottomrule
    \end{tabular}
    \end{scriptsize}
\end{table}

\subsection{Convergence Analysis} \label{sec:add_convergence}
The convergence behaviors of the proposed Alg. \ref{alg:main} are shown in Fig. \ref{fig:loss_LoRD} and Fig. \ref{fig:loss_B-LoRD}, with Fig. \ref{fig:loss_B-LoRD} representing the results obtained using the optimal hyperparameter.
From these results, we observe that the objective function value decreases monotonically in Fig. \ref{fig:loss_LoRD} and increases monotonically in Fig. \ref{fig:loss_B-LoRD}, typically reaching convergence within a few hundred iterations.
\begin{figure}[htbp]
    \centering
    \includegraphics[width=\linewidth]{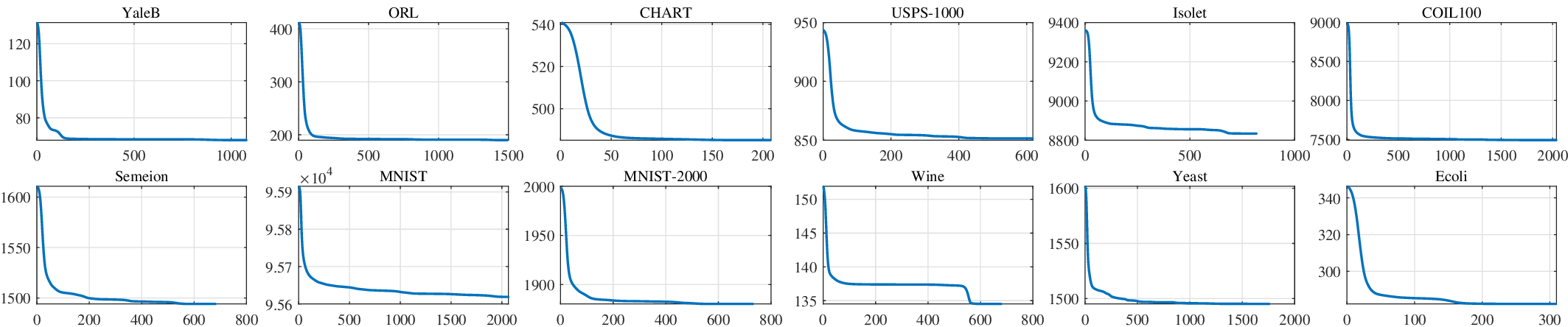}
    \caption{Convergence curves of LoRD on ten datasets. For each dataset, the $x$-axis represents the iteration count, and the $y$-axis represents the objective function of Eq. \eqref{eq:LoRD}.}
    \label{fig:loss_LoRD}
\end{figure}
\begin{figure}[htbp]
    \centering
    \includegraphics[width=\linewidth]{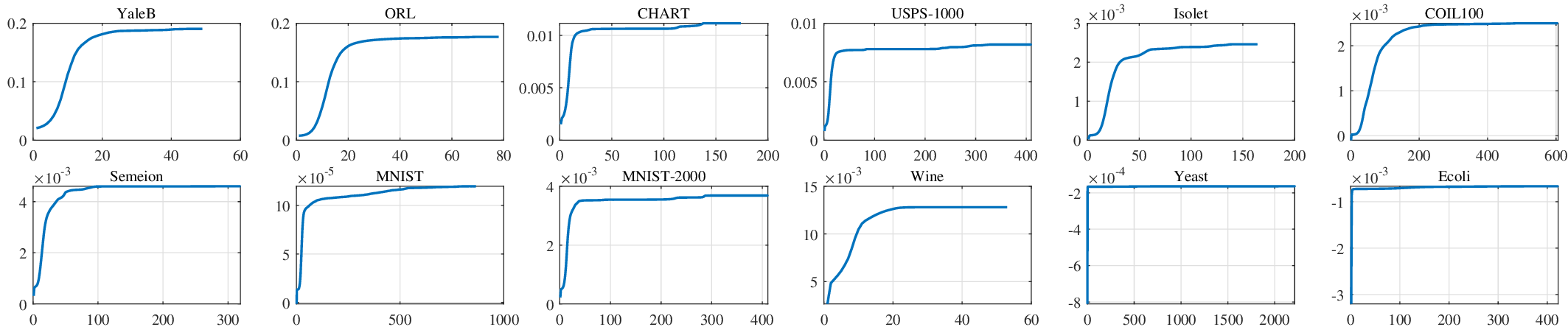}
    \caption{Convergence curves of B-LoRD on ten datasets. For each dataset, the $x$-axis represents the iteration count, and the $y$-axis represents the objective function of Eq. \eqref{eq:B-LoRD}.}
    \label{fig:loss_B-LoRD}
\end{figure}

\subsection{Synthetic Experiment} \label{sec:synthetic}
We generated $200$ samples from four Gaussian distributions: 
$\mathcal{N}\left(\begin{bsmallmatrix} -2 \\ 2 \end{bsmallmatrix}, \begin{bsmallmatrix} 0.25 & 0 \\ 0 & 0.25 \end{bsmallmatrix} \right)$, 
$\mathcal{N}\left(\begin{bsmallmatrix} 2 \\ 2 \end{bsmallmatrix}, \begin{bsmallmatrix} 1 & 0 \\ 0 & 1 \end{bsmallmatrix} \right)$, 
$\mathcal{N}\left(\begin{bsmallmatrix} -2 \\ -2 \end{bsmallmatrix}, \begin{bsmallmatrix} 1 & 0 \\ 0 & 1 \end{bsmallmatrix} \right)$ and
$\mathcal{N}\left(\begin{bsmallmatrix} 2 \\ -2 \end{bsmallmatrix}, \begin{bsmallmatrix} 2.25 & 0 \\ 0 & 2.25 \end{bsmallmatrix} \right)$.
We set five different values of $\p$ to obtain different IBRs, as shown in Table \ref{tab:synthetic_ACC_part}.
\begin{table}[htbp]
    \centering
    \tabcolsep=2.5mm
    \caption{Clustering ACC of synthetic experiment.}
    \label{tab:synthetic_ACC_part}
    \begin{scriptsize}
    \begin{tabular}{cc|cccccc}
    \toprule
    \multirow{2}{*}{$n \p$} & \multirow{2}{*}{IBR} & \multicolumn{2}{c}{GMM} & \multicolumn{2}{c}{LoRD} & \multicolumn{2}{c}{B-LoRD} \\
    & & $\mu^{\ast}$ & $\mu_0$ & $\mu^{\ast}$ & $\mu_0$ & $\mu^{\ast}$ & $\mu_0$ \\
    \midrule
    $[50, 50, 50, 50]$  & $0$      & $0.935$ & $0.935$ & $0.940$ & $0.940$ & \pmb{$0.960$} & \pmb{$0.960$} \\
    $[40, 50, 50, 60]$  & $0.0073$ & \pmb{$0.960$} & $0.945$ & $0.945$ & $0.920$ & $0.945$ & $0.935$ \\
    $[30, 45, 55, 70]$  & $0.0315$ & \pmb{$0.925$} & $0.895$ & $0.860$ & $0.800$ & $0.865$ & $0.880$ \\
    $[20, 40, 60, 80]$  & $0.0768$ & $0.780$ & $0.835$ & $0.870$ & $0.740$ & \pmb{$0.890$} & $0.790$ \\
    $[10, 30, 60, 100]$ & $0.1761$ & $0.840$ & $0.745$ & $0.885$ & $0.620$ & \pmb{$0.900$} & $0.700$ \\
    \bottomrule
    \end{tabular}
    \end{scriptsize}
\end{table}

The clustering results for the case of $n \p = [20, 40, 60, 80]$ are plotted in Fig. \ref{fig:synthetic_res}, where the similarity matrix $S = X^T X$.
\begin{figure}[htbp]
    \centering
    \includegraphics[width=0.8\linewidth]{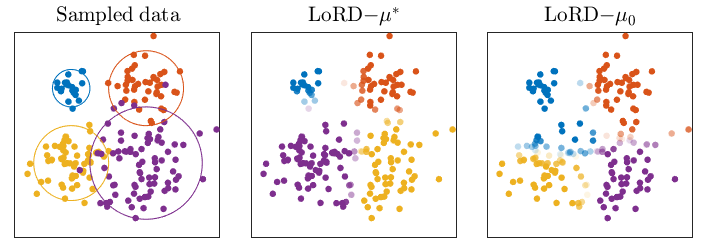}
    \captionof{figure}{The sampled data and the clustering result of LoRD for the case of $n \p = [20, 40, 60, 80]$. 
    The color and opacity of a data point represent its cluster and clustering probability, respectively.}
    \label{fig:synthetic_res}
\end{figure}

From Table \ref{tab:synthetic_ACC_part} and Fig. \ref{fig:synthetic_res}, it can be observed that:
\begin{itemize}
    \item Regardless of whether $\mu^{\ast}$ or $\mu_0$ are provided in LoRD, samples close to the cluster center have high clustering probabilities, while those at the intersection of multiple clusters show low clustering probabilities.
    \item When $\mu_0$ deviates from $\mu^{\ast}$, LoRD fails to find a suitable uniform partition, resulting in a low clustering probability for a large number of samples.
    \item As the IBR increases, the performance gap between LoRD and B-LoRD with given $\mu^{\ast}$ and $\mu_{0}$ becomes more pronounced.
\end{itemize}

Please see the detailed results and analyses in Appendix \ref{sec:add_synthetic}.

\section{Conclusion and Discussion} \label{sec:conclusion}

In this paper, we introduced LoRD, a novel graph-based clustering model, by relaxing the least crucial orthonormal constraint of kernel $k$-means, which is further enhanced by integrating adjustable block diagonality, leading to B-LoRD. 
To tackle numerical challenges, we theoretically elucidated how the non-convex doubly stochastic constraint can be reduced to a convex constraint by introducing the class probability parameter $\mu$. Additionally, leveraging the gradient Lipschitz continuity property, we devised a projected gradient descent algorithm for the effective resolution of LoRD and B-LoRD, which theoretically ensures global convergence.

Despite the effectiveness of LoRD and B-LoRD, a practical hurdle remains as $\mu^{\ast}$ is typically unknown in real-world applications.
Moving forward, our research will delve into methods for accurately estimating $\mu^{\ast}$ to reduce the impact of estimated biases on model performance.


\newpage

\appendix
\section{Additional Experimental Results}
\subsection{Additional Synthetic Experiment} \label{sec:add_synthetic}
In Tables \ref{tab:synthetic_GMM}, \ref{tab:synthetic_LoRD} and \ref{tab:synthetic_B-LoRD}, we list the clustering performances of GMM, LoRD and B-LoRD in the synthetic experiment, respectively.
Moreover, the clustering results of LoRD are shown in Fig. \ref{fig:synthetic_clusterView_full}.

\begin{table}[htbp]
    \centering
    \tabcolsep=1.5mm
    \caption{Clustering performance of GMM in synthetic experiment.}
    \label{tab:synthetic_GMM}
    \begin{scriptsize}
    \begin{tabular}{cc|ccc|ccc|ccc|ccc}
    \toprule
    \multirow{2}{*}{$n \p$} & \multirow{2}{*}{IBR} & \multicolumn{3}{c|}{ACC} & \multicolumn{3}{c|}{NMI} & \multicolumn{3}{c|}{PUR} & \multicolumn{3}{c}{F1} \\
    & & $\mu^{\ast}$ & $\mu_0$ & gap & $\mu^{\ast}$ & $\mu_0$ & gap & $\mu^{\ast}$ & $\mu_0$ & gap & $\mu^{\ast}$ & $\mu_0$ & gap \\
    \midrule
    $[50, 50, 50, 50]$  & 0      & 0.935 & 0.935 & 0      & 0.832 & 0.832 & 0      & 0.935 & 0.935 & 0      & 0.879 & 0.879 & 0     \\
    $[40, 50, 50, 60]$  & 0.0073 & 0.960 & 0.945 & 0.015  & 0.877 & 0.847 & 0.030  & 0.960 & 0.945 & 0.015  & 0.916 & 0.887 & 0.029 \\
    $[30, 45, 55, 70]$  & 0.0315 & 0.925 & 0.895 & 0.030  & 0.783 & 0.739 & 0.044  & 0.925 & 0.895 & 0.030  & 0.847 & 0.791 & 0.056 \\
    $[20, 40, 60, 80]$  & 0.0768 & 0.780 & 0.835 & -0.055 & 0.672 & 0.708 & -0.036 & 0.820 & 0.835 & -0.015 & 0.747 & 0.739 & 0.078 \\
    $[10, 30, 60, 100]$ & 0.1761 & 0.840 & 0.745 & 0.095  & 0.657 & 0.506 & 0.151  & 0.840 & 0.745 & 0.095  & 0.713 & 0.623 & 0.090 \\
    \bottomrule
    \end{tabular}
    \end{scriptsize}
\end{table}
\begin{table}[htbp]
    \centering
    \tabcolsep=1.5mm
    \caption{Clustering performance of LoRD in synthetic experiment.}
    \label{tab:synthetic_LoRD}
    \begin{scriptsize}
    \begin{tabular}{cc|ccc|ccc|ccc|ccc}
    \toprule
    \multirow{2}{*}{$n \p$} & \multirow{2}{*}{IBR} & \multicolumn{3}{c|}{ACC} & \multicolumn{3}{c|}{NMI} & \multicolumn{3}{c|}{PUR} & \multicolumn{3}{c}{F1} \\
    & & $\mu^{\ast}$ & $\mu_0$ & gap & $\mu^{\ast}$ & $\mu_0$ & gap & $\mu^{\ast}$ & $\mu_0$ & gap & $\mu^{\ast}$ & $\mu_0$ & gap \\
    \midrule
    $[50, 50, 50, 50]$  & 0      & 0.940 & 0.940 & 0     & 0.838 & 0.838 & 0     & 0.940 & 0.940 & 0     & 0.887 & 0.887 & 0     \\
    $[40, 50, 50, 60]$  & 0.0073 & 0.945 & 0.920 & 0.025 & 0.844 & 0.786 & 0.058 & 0.945 & 0.920 & 0.025 & 0.887 & 0.845 & 0.042 \\
    $[30, 45, 55, 70]$  & 0.0315 & 0.860 & 0.800 & 0.060 & 0.721 & 0.590 & 0.131 & 0.860 & 0.800 & 0.060 & 0.735 & 0.659 & 0.076 \\
    $[20, 40, 60, 80]$  & 0.0768 & 0.870 & 0.740 & 0.130 & 0.724 & 0.552 & 0.172 & 0.870 & 0.740 & 0.130 & 0.749 & 0.597 & 0.152 \\
    $[10, 30, 60, 100]$ & 0.1761 & 0.885 & 0.620 & 0.265 & 0.651 & 0.496 & 0.155 & 0.885 & 0.775 & 0.110 & 0.802 & 0.552 & 0.250 \\
    \bottomrule
    \end{tabular}
    \end{scriptsize}
\end{table}
\begin{table}[htbp]
    \centering
    \tabcolsep=1.5mm
    \caption{Clustering performance of B-LoRD in synthetic experiment.}
    \label{tab:synthetic_B-LoRD}
    \begin{scriptsize}
    \begin{tabular}{cc|ccc|ccc|ccc|ccc}
    \toprule
    \multirow{2}{*}{$n \p$} & \multirow{2}{*}{IBR} & \multicolumn{3}{c|}{ACC} & \multicolumn{3}{c|}{NMI} & \multicolumn{3}{c|}{PUR} & \multicolumn{3}{c}{F1} \\
    & & $\mu^{\ast}$ & $\mu_0$ & gap & $\mu^{\ast}$ & $\mu_0$ & gap & $\mu^{\ast}$ & $\mu_0$ & gap & $\mu^{\ast}$ & $\mu_0$ & gap \\
    \midrule
    $[50, 50, 50, 50]$  & 0      & 0.960 & 0.960 & 0     & 0.884 & 0.884 & 0     & 0.960 & 0.960 & 0     & 0.923 & 0.923 & 0     \\
    $[40, 50, 50, 60]$  & 0.0073 & 0.945 & 0.935 & 0.010 & 0.861 & 0.840 & 0.021 & 0.945 & 0.935 & 0.010 & 0.888 & 0.871 & 0.017 \\
    $[30, 45, 55, 70]$  & 0.0315 & 0.865 & 0.880 & -0.015 & 0.740 & 0.711 & 0.029 & 0.865 & 0.880 & -0.015 & 0.743 & 0.770 & -0.027 \\
    $[20, 40, 60, 80]$  & 0.0768 & 0.890 & 0.790 & 0.100 & 0.752 & 0.631 & 0.121 & 0.890 & 0.820 & 0.070 & 0.787 & 0.682 & 0.105 \\
    $[10, 30, 60, 100]$ & 0.1761 & 0.900 & 0.700 & 0.200 & 0.681 & 0.535 & 0.146 & 0.900 & 0.805 & 0.095 & 0.823 & 0.634 & 0.189 \\
    \bottomrule
    \end{tabular}
    \end{scriptsize}
\end{table}

\begin{figure}[htbp]
    \centering
    \includegraphics[width=1.0\linewidth]{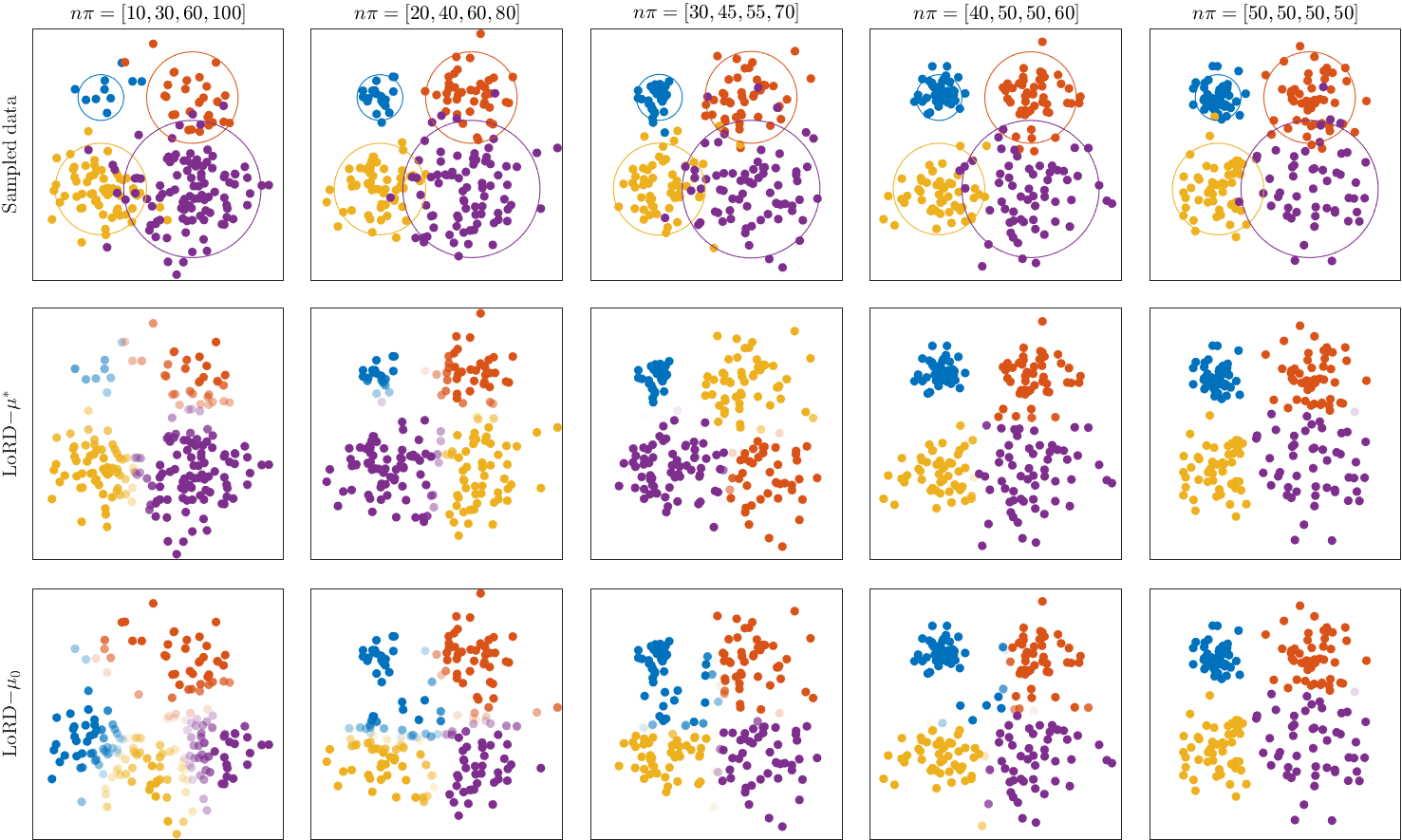}
    \caption{Clustering result of LoRD in synthetic experiment. The four clusters are marked with different colors, and the opacity is set to the cluster probability $\P(y_j | x_i)$.}
    \label{fig:synthetic_clusterView_full}
\end{figure}

Additionally, from Tables \ref{tab:synthetic_GMM}, \ref{tab:synthetic_LoRD} and \ref{tab:synthetic_B-LoRD}, it can be seen that B-LoRD is more robust to the deviation between $\mu_0$ and $\mu^{\ast}$.
To study its mechanism, we provide the hyper-parameter analysis of B-LoRD in the synthetic experiment, as shown in Fig. \ref{fig:hp_synthetic}, and the clustering result under the optimal hyper-parameter is shown in Fig. \ref{fig:synthetic_clusterView_B-LoRD}.
From these results, we observe that:
\begin{itemize}
    \item When $\mu^{\ast}$ is known, B-LoRD achieves high ACC when $\tau$ is large, i.e., the learned $VV^T$ exhibits high $k$-block diagonality.
    \item When $\mu_{0}$ deviates from $\mu^{\ast}$, B-LoRD cannot find a suitable uniform partition, resulting in better performance for a smaller $\tau$. This is because, when the $k$-block diagonality of $VV^T$ is weakened, the learned partition ratios do not strictly obey $\mu_0$. For example, as shown in the third row of Fig. \ref{fig:synthetic_clusterView_B-LoRD}, in the cases of $n\pi = [20, 40, 60, 80]$ and $n\pi = [30, 45, 55, 70]$, the partition corresponding to the blue-colored cluster is almost correct. Moreover, for $n\pi = [10, 30, 60, 100]$, the blue-colored cluster vanishes. Therefore, by tuning $\tau$, B-LoRD can enhance or weaken the block diagonality, and thereby reduce the impact of the deviation between $\mu_0$ and $\mu^{\ast}$.
\end{itemize}

\begin{figure}[t]
    \centering
    \includegraphics[width=\linewidth]{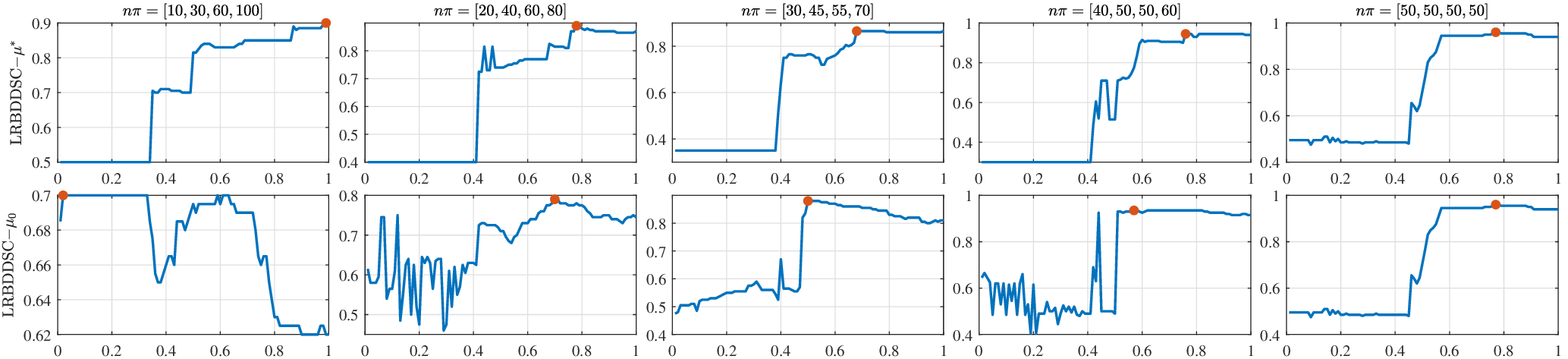}
    \caption{Values of ACC ($y$-axis) of B-LoRD with different values of $\tau$ ($x$-axis) in synthetic experiments. The optimal $\tau$ corresponding to highest ACC is marked with an orange point.}
    \label{fig:hp_synthetic}
\end{figure}

\begin{figure}[htbp]
    \centering
    \includegraphics[width=1.0\linewidth]{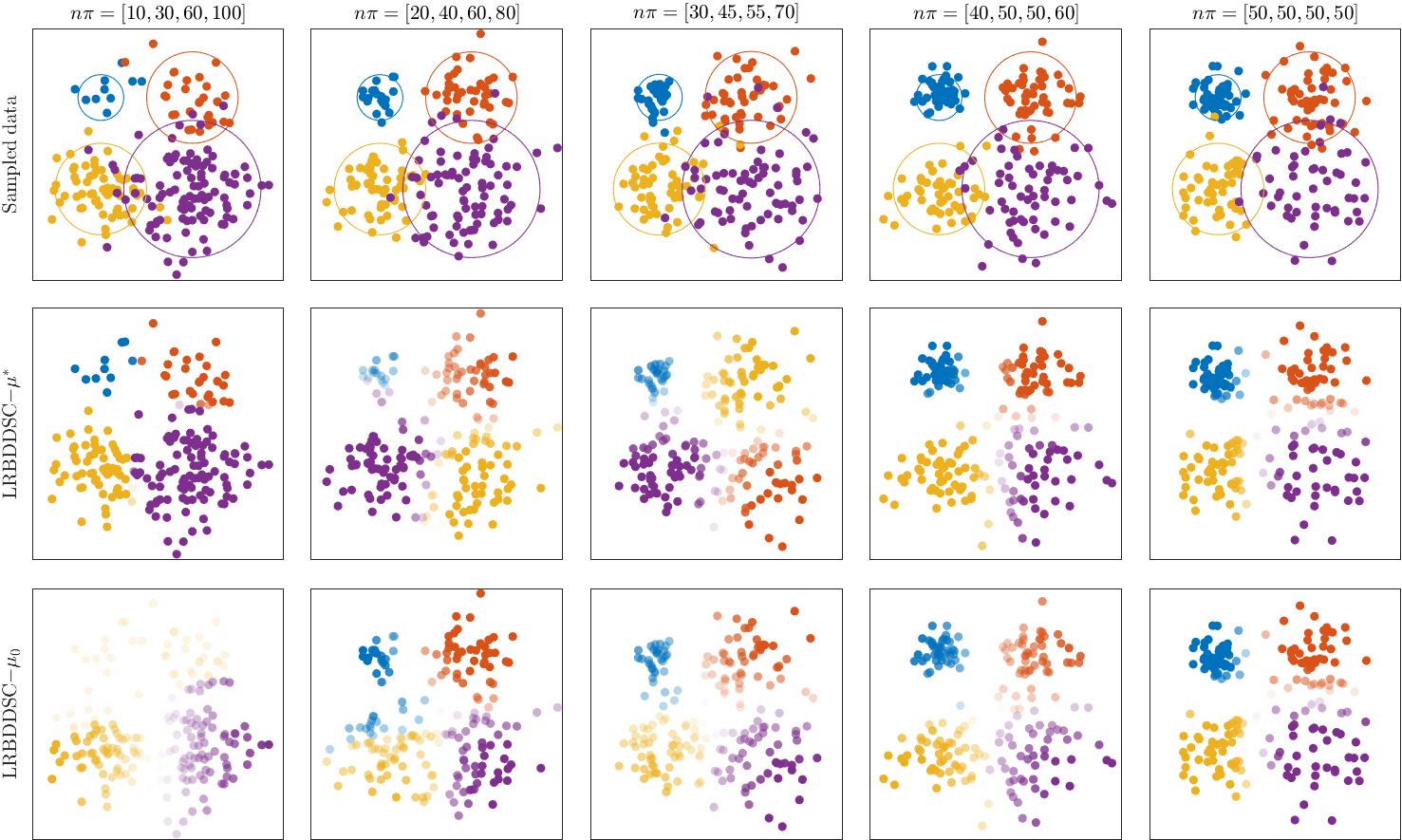}
    \caption{Clustering results of B-LoRD in synthetic datasets. The four clusters are marked with different colors, and the opacity is set to the cluster probability $\P(y_j | x_i)$.}
    \label{fig:synthetic_clusterView_B-LoRD}
\end{figure}

\subsection{Clustering Results} \label{sec:add_clustering_result}
The clustering NMI, PUR and F1 scores of all methods on each dataset are shown in Table \ref{tab:clustering_NMI}, Table \ref{tab:clustering_PUR} and Table \ref{tab:clustering_F1}, respectively.
The analyses of these results are consistent with the discussions in Sec. \ref{sec:clustering}.

\begin{table}[htbp]
    \centering
    \tabcolsep=1.3mm
    \caption{NMI on each dataset.}
    \label{tab:clustering_NMI}
    \begin{scriptsize}
    \begin{tabular}{c|cccccccccccc|c}
    \toprule
    \textbf{NMI} & D1 & D2 & D3 & D4 & D5 & D6 & D7 & D8 & D9 & D10 & D11 & D12 & Avg.  \\
    \midrule
    KKM     & $0.534$ &	$0.765$ & $0.794$ & $0.525$ & $0.741$ & $0.762$ & $0.578$ & $0.582$ & $0.561$ & $0.799$ & $0.239$ & $0.592$ & $0.599$ \\
    GKKM    & $0.415$ &	$0.692$ & $0.788$ & $0.543$ & $0.763$ & $0.622$ & $0.534$ & $-$ & $0.565$ & $0.822$ & $0.239$ & $0.618$ & $0.580$ \\
    SDP    & $0.555$ &	$0.807$ & $0.819$ & $0.576$ & $-$ & $-$ & $0.590$ & $-$ & $0.612$ & $0.763$ & $0.260$ & $0.550$ & $-$ \\
    DCD    & $0.526$ &	$0.798$ & $0.806$ & $0.602$ & $0.758$ & $0.822$ & $0.620$ & $0.744$ & $0.671$ & $0.851$ & $0.256$ & $0.627$ & $0.640$ \\
    SC  & $0.537$ &	$0.798$ & $0.795$ & $0.587$ & $0.764$ & \underline{$0.823$} & $0.660$ & $0.755$ & $0.670$ & $0.825$ & $0.265$ & $0.588$ & $0.636$ \\
    SR  & $0.534$ &	$0.794$ & $0.795$ & $0.579$ & $0.765$ & $0.814$ & $0.607$ & $0.743$ & $0.665$ & $0.825$ & $0.278$ & $0.632$ & $0.634$ \\
    NCut  & $0.532$ & $0.798$ & $0.795$ & $0.569$ & \underline{$0.766$} & $0.793$ & $0.607$ & $0.751$ & $0.663$ & $0.813$ & \pmb{$0.283$} & $0.617$ & $0.631$ \\
    DBSC        & $0.547$ &	$0.801$ & $0.814$ & $0.558$ & $0.758$ & $0.792$ & $0.647$ & $-$ & $0.590$ & $0.828$ & \underline{$0.282$} & $0.581$ & $0.628$ \\
    DirectSC    & $0.518$ &	$0.792$ & $0.809$ & $0.522$ & $0.637$ & $-$ & $0.495$ & $-$ & $0.423$ & $0.718$ & $0.257$ & $0.620$ & $0.573$ \\
    SymNMF  & $0.532$ &	$0.806$ & $0.814$ & $0.539$ & $0.726$ & $0.774$ & $0.614$ & $0.532$ & $0.573$ & $0.781$ & $0.266$ & $0.574$ & $0.611$ \\
    PHALS  & $0.534$ &	$0.803$ & $0.767$ & $0.528$ & $0.755$ & $0.798$ & $0.647$ & $0.689$ & $0.609$ & $0.798$ & $0.279$ & $0.575$ & $0.616$ \\
    S$^{3}$NMF  & $0.525$ &	$0.791$ & $0.816$ & $0.562$ & $-$ & $-$ & $0.665$ & $-$ & $0.636$ & $0.819$ & $0.260$ & $0.592$ & $0.630$ \\
    NLR  & $0.536$ & $0.800$ & $0.591$ & $0.517$ & $0.541$ & $-$ & $0.550$ & $-$ & $0.485$ & $0.802$ & $0.240$ & \pmb{$0.800$} & $0.591$ \\
    DSN  & $0.540$ & $0.799$ & $0.795$ & $0.589$ & $0.765$ & $0.822$ & $0.660$ & $-$ & $0.669$ & $0.825$ & $0.264$ & $0.589$ & $0.637$ \\
    SDS  & \pmb{$0.580$} & \pmb{$0.835$} & \underline{$0.845$} & \underline{$0.651$} & $-$ & $-$ & $0.671$ & $-$ & $0.665$ & \underline{$0.865$} & $0.270$ & \underline{$0.637$} & \underline{$0.669$} \\
    DvD  & $0.499$ & $0.783$ & $0.696$ & $0.478$ & $-$ & $-$ & $0.570$ & $-$ & $0.527$ & \pmb{$0.877$} & $0.232$ & $0.573$ & $0.582$ \\
    DSNI  & $0.540$ & $0.806$ & $0.805$ & $0.647$ & $-$ & $-$ & $0.655$ & $-$ & \pmb{$0.697$} & $0.696$ & $0.263$ & $0.610$ & $0.635$ \\
    DSDC  & $0.487$ & $0.759$ & $0.796$ & $0.453$ & $0.733$ & $0.779$ & $0.560$ & $0.500$ & $0.522$ & $0.713$ & $0.247$ & $0.585$ & $0.569$ \\
    \midrule
    LoRD (ours) & $0.518$ & $0.798$ & $0.827$ & $0.579$ & $0.758$ & $0.774$ & \underline{$0.681$} & \underline{$0.883$} & $0.614$ & $0.807$ & $0.256$ & $0.530$ & $0.623$ \\
    B-LoRD (ours) & \underline{$0.564$} & \underline{$0.824$} & \pmb{$0.850$} & \pmb{$0.678$} & \pmb{$0.794$} & \pmb{$0.831$} & \pmb{$0.696$} & \pmb{$0.910$} & \underline{$0.683$} & $0.853$ & $0.279$ & $0.621$ & \pmb{$0.672$} \\
    \bottomrule
    \end{tabular}
    \end{scriptsize}
\end{table}

\begin{table}[htbp]
    \centering
    \tabcolsep=1.3mm
    \caption{PUR on each dataset.}
    \label{tab:clustering_PUR}
    \begin{scriptsize}
    \begin{tabular}{c|cccccccccccc|c}
    \toprule
    \textbf{PUR} & D1 & D2 & D3 & D4 & D5 & D6 & D7 & D8 & D9 & D10 & D11 & D12 & Avg.  \\
    \midrule
    KKM     & $0.491$ &	$0.628$ & $0.835$ & $0.547$ & $0.607$ & $0.530$ & $0.618$ & $0.657$ & $0.651$ & $0.944$ & $0.513$ & $0.801$ & $0.670$ \\
    GKKM    & $0.364$ &	$0.568$ & $0.667$ & $0.560$ & \underline{$0.643$} & $0.339$ & $0.553$ & $-$ & $0.646$ & $0.944$ & $0.497$ & $0.813$ & $0.624$ \\
    SDP    & \underline{$0.516$} &	$0.687$ & $0.762$ & $0.604$ & $-$ & $-$ & $0.646$ & $-$ & $0.677$ & $0.916$ & $0.526$ & $0.797$ & $-$ \\
    DCD    & $0.461$ &	$0.670$ & $0.667$ & $0.623$ & $0.604$ & $0.641$ & $0.648$ & $0.736$ & $0.728$ & $0.955$ & $0.521$ & $0.836$ & $0.679$ \\
    SC  & $0.467$ &	$0.669$ & $0.667$ & $0.615$ & $0.608$ & \underline{$0.648$} & $0.714$ & $0.750$ & $0.732$ & $0.949$ & $0.519$ & $0.826$ & $0.684$ \\
    SR  & $0.467$ &	$0.670$ & $0.667$ & $0.609$ & $0.601$ & $0.636$ & $0.635$ & $0.737$ & $0.727$ & $0.949$ & $0.514$ & \underline{$0.839$} & $0.675$ \\
    NCut  & $0.467$ & $0.683$ & $0.667$ & $0.593$ & $0.602$ & $0.596$ & $0.638$ & $0.741$ & $0.726$ & $0.949$ & $0.517$ & $0.836$ & $0.675$ \\
    DBSC        & $0.473$ &	$0.685$ & $0.842$ & $0.594$ & $0.612$ & $0.599$ & $0.689$ & $-$ & $0.663$ & $0.944$ & $0.549$ & $0.824$ & $0.696$ \\
    DirectSC    & $0.452$ &	$0.647$ & $0.667$ & $0.497$ & $0.489$ & $-$ & $0.505$ & $-$ & $0.480$ & $0.899$ & $0.520$ & $0.827$ & $0.610$ \\
    SymNMF  & $0.479$ &	$0.690$ & $0.842$ & $0.588$ & $0.587$ & $0.556$ & $0.670$ & $0.586$ & $0.674$ & $0.916$ & $0.530$ & $0.824$ & $0.690$ \\
    PHALS  & $0.479$ &	$0.683$ & $0.800$ & $0.567$ & $0.616$ & $0.588$ & $0.690$ & $0.676$ & $0.690$ & $0.927$ & \underline{$0.553$} & $0.819$ & $0.690$ \\
    S$^{3}$NMF  & $0.485$ &	$0.664$ & $0.817$ & $0.629$ & $-$ & $-$ & $0.712$ & $-$ & $0.678$ & $0.935$ & $0.539$ & $0.827$ & $0.698$ \\
    NLR  & $0.503$ & $0.687$ & $0.645$ & $0.502$ & $0.430$ & $-$ & $0.560$ & $-$ & $0.489$ & $0.938$ & $0.458$ & $0.687$ & $0.608$ \\
    DSN  & $0.470$ & $0.669$ & $0.667$ & $0.619$ & $0.609$ & $0.647$ & $0.713$ & $-$ & $0.732$ & $0.949$ & $0.520$ & $0.826$ & $0.685$ \\
    SDS  & $0.515$ & \pmb{$0.710$} & $0.847$ & $0.638$ & $-$ & $-$ & $0.740$ & $-$ & $0.719$ & \underline{$0.961$} & $0.545$ & \pmb{$0.843$} & \underline{$0.724$} \\
    DvD  & $0.455$ & $0.668$ & $0.680$ & $0.510$ & $-$ & $-$ & $0.602$ & $-$ & $0.522$ & \pmb{$0.966$} & $0.449$ & $0.816$ & $0.630$ \\
    DSNI  & $0.472$ & $0.669$ & $0.667$ & \underline{$0.649$} & $-$ & $-$ & $0.708$ & $-$ & \pmb{$0.747$} & $0.899$ & $0.528$ & $0.817$ & $0.684$ \\
    DSDC  & $0.418$ & $0.600$ & $0.695$ & $0.489$ & $0.609$ & $0.559$ & $0.644$ & $0.601$ & $0.626$ & $0.888$ & $0.521$ & $0.817$ & $0.633$ \\
    \midrule
    LoRD (ours) & $0.479$ & $0.683$ & \underline{$0.878$} & $0.613$ & $0.610$ & $0.538$ & \underline{$0.755$} & \underline{$0.943$} & $0.657$ & $0.944$ & $0.522$ & $0.783$ & $0.702$ \\
    B-LoRD (ours) & \pmb{$0.521$} & \underline{$0.703$} & \pmb{$0.905$} & \pmb{$0.748$} & \pmb{$0.658$} & \pmb{$0.659$} & \pmb{$0.783$} & \pmb{$0.964$} & \pmb{$0.747$} & $0.955$ & \pmb{$0.561$} & $0.833$ & \pmb{$0.751$} \\
    \bottomrule
    \end{tabular}
    \end{scriptsize}
\end{table}

\begin{table}[htbp]
    \centering
    \tabcolsep=1.3mm
    \caption{F1-score on each dataset.}
    \label{tab:clustering_F1}
    \begin{scriptsize}
    \begin{tabular}{c|cccccccccccc|c}
    \toprule
    \textbf{F1} & D1 & D2 & D3 & D4 & D5 & D6 & D7 & D8 & D9 & D10 & D11 & D12 & Avg.  \\
    \midrule
    KKM     & $0.325$ &	$0.453$ & $0.752$ & $0.413$ & $0.518$ & $0.404$ & $0.471$ & $0.540$ & $0.486$ & $0.888$ & $0.243$ & $0.502$ & $0.504$ \\
    GKKM    & $0.165$ &	$0.216$ & $0.688$ & $0.409$ & \underline{$0.566$} & $0.109$ & $0.436$ & $-$ & $0.478$ & $0.889$ & $0.249$ & $0.557$ & $0.454$ \\
    SDP    & $0.339$ &	$0.538$ & $0.718$ & $0.450$ & $-$ & $-$ & $0.478$ & $-$ & $0.513$ & $0.838$ & $0.267$ & $0.445$ & $-$ \\
    DCD    & $0.308$ &	$0.521$ & $0.695$ & $0.496$ & $0.524$ & $0.489$ & $0.497$ & $0.668$ & $0.600$ & $0.909$ & $0.250$ & $0.524$ & $0.533$ \\
    SC  & $0.317$ &	$0.520$ & $0.691$ & $0.481$ & $0.520$ & \underline{$0.528$} & $0.567$ & $0.667$ & $0.600$ & $0.898$ & $0.264$ & $0.476$ & $0.535$ \\
    SR  & $0.317$ &	$0.512$ & $0.691$ & $0.470$ & $0.500$ & $0.446$ & $0.482$ & $0.653$ & $0.592$ & $0.898$ & $0.296$ & $0.541$ & $0.533$ \\
    NCut  & $0.313$ & $0.531$ & $0.691$ & $0.455$ & $0.500$ & $0.313$ & $0.483$ & $0.658$ & $0.591$ & $0.887$ & $0.291$ & $0.523$ & $0.529$ \\
    DBSC        & $0.328$ &	$0.538$ & $0.765$ & $0.447$ & $0.525$ & $0.475$ & $0.562$ & $-$ & $0.528$ & $0.887$ & $0.281$ & $0.467$ & $0.534$ \\
    DirectSC    & $0.293$ &	$0.460$ & $0.699$ & $0.380$ & $0.369$ & $-$ & $0.388$ & $-$ & $0.343$ & $0.811$ & $0.286$ & $0.548$ & $0.468$ \\
    SymNMF  & $0.313$ &	$0.540$ & $0.765$ & $0.436$ & $0.493$ & $0.418$ & $0.536$ & $0.454$ & $0.525$ & $0.838$ & $0.268$ & $0.465$ & $0.521$ \\
    PHALS  & $0.313$ &	$0.534$ & $0.708$ & $0.411$ & $0.523$ & $0.461$ & $0.570$ & $0.605$ & $0.549$ & $0.857$ & $0.281$ & $0.461$ & $0.520$ \\
    S$^{3}$NMF  & $0.320$ &	$0.511$ & $0.757$ & $0.482$ & $-$ & $-$ & $0.605$ & $-$ & $0.583$ & $0.880$ & $0.268$ & $0.509$ & $0.546$ \\
    NLR  & $0.315$ & $0.470$ & $0.514$ & $0.313$ & $0.152$ & $-$ & $0.320$ & $-$ & $0.264$ & $0.877$ & \underline{$0.322$} & $0.470$ & $0.429$ \\
    DSN  & $0.321$ & $0.521$ & $0.691$ & $0.484$ & $0.517$ & $-$ & $0.566$ & $-$ & $0.599$ & $0.898$ & $0.262$ & $0.477$ & $0.535$ \\
    SDS  & \pmb{$0.368$} & \underline{$0.551$} & $0.771$ & $0.521$ & $-$ & $-$ & $0.601$ & $-$ & $0.587$ & \underline{$0.923$} & \pmb{$0.398$} & \underline{$0.673$} & \underline{$0.599$} \\
    DvD  & $0.251$ & $0.302$ & $0.587$ & $0.342$ & $-$ & $-$ & $0.323$ & $-$ & $0.338$ & \pmb{$0.931$} & $0.318$ & $0.533$ & $0.436$ \\
    DSNI  & $0.321$ & $0.514$ & $0.694$ & \underline{$0.523$} & $-$ & $-$ & $0.567$ & $-$ & \underline{$0.622$} & $0.809$ & $0.255$ & $0.541$ & $0.538$ \\
    DSDC  & $0.264$ & $0.414$ & $0.697$ & $0.364$ & $0.516$ & $0.474$ & $0.484$ & $0.451$ & $0.465$ & $0.793$ & $0.243$ & $0.491$ & $0.468$ \\
    \midrule
    LoRD (ours) & $0.308$ & $0.535$ & \underline{$0.802$} & $0.491$ & $0.560$ & $0.436$ & \underline{$0.638$} & \underline{$0.893$} & $0.548$ & $0.888$ & $0.243$ & $0.430$ & $0.543$ \\
    B-LoRD (ours) & \underline{$0.355$} & \pmb{$0.593$} & \pmb{$0.837$} & \pmb{$0.633$} & \pmb{$0.608$} & \pmb{$0.576$} & \pmb{$0.656$} & \pmb{$0.930$} & \pmb{$0.640$} & $0.913$ & $0.365$ & \pmb{$0.744$} & \pmb{$0.637$} \\
    \bottomrule
    \end{tabular}
    \end{scriptsize}
\end{table}

\subsection{Complexity of Dykstra Algorithm \ref{alg:Dykstra}} \label{sec:add_bavg}
As analyzed in Sec. \ref{sec:complexity}, the complexity of the Dykstra Algorithm \ref{alg:Dykstra} is $\mathcal{O}(nk b_{\text{avg}})$, where $b_{\text{avg}}$ is the average iteration count.
In this subsection, we summarize $b_{\text{avg}}$ for LoRD and B-LoRD on each dataset, as shown in Fig. \ref{fig:b_n_k}.
From Fig. \ref{fig:b_n_k}, we observed that:
The $b_{\text{avg}}$ of LoRD is approximately $50$, independent of $n$ but proportional to $k$.
The $b_{\text{avg}}$ of B-LoRD varies more significantly, ranging from approximately $50$ to $500$, and appears to be independent of both $n$ and $k$.
\begin{figure}[htbp]
    \centering
    \subfloat{\includegraphics[width=0.245\linewidth]{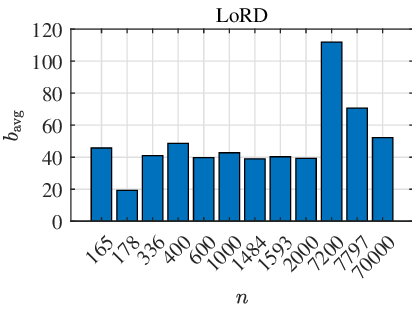}} \hfill
    \subfloat{\includegraphics[width=0.245\linewidth]{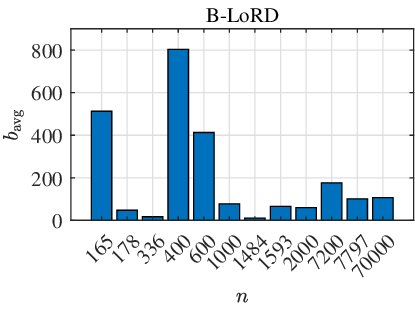}} \hfill
    \subfloat{\includegraphics[width=0.245\linewidth]{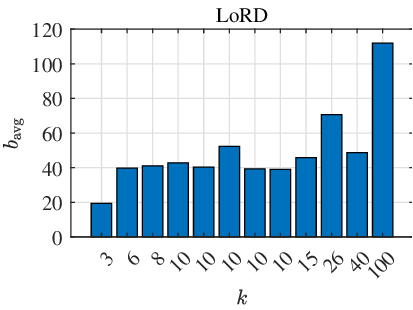}} \hfill
    \subfloat{\includegraphics[width=0.245\linewidth]{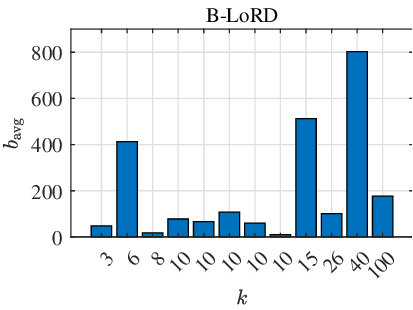}}
    \caption{The correlation between $b_{\text{avg}}$ ($y$-axis) and number of samples $n$ and classes $k$ ($x$-axis, sorted in ascending order).}
    \label{fig:b_n_k}
\end{figure}

\subsection{Correlation between Objective Function Value and ACC}

The relationship between the objective function value and the clustering ACC of SR, SymNMF, PHALS, LoRD and B-LoRD are described in Fig. \ref{fig:KKM_loss_ACC} to Fig. \ref{fig:B-LoRD_loss_ACC}, respectively.
In general, the correlation (measured by $\mathrm{R}^2$) in KKM, LoRD and B-LoRD is stronger than SR, SymNMF and PHALS, because the doubly stochastic constraint is relaxed in SR, SymNMF and PHALS.
\begin{figure}[htbp]
    \centering
    \includegraphics[width=\linewidth]{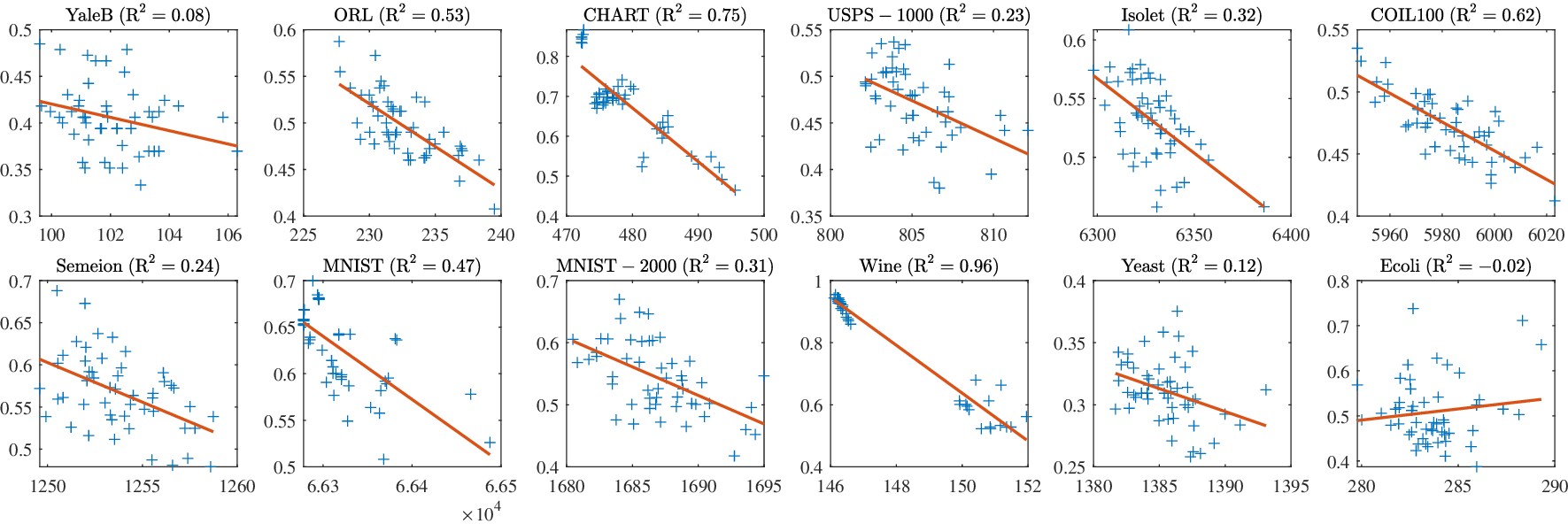}
    \caption{Correlation of objective function and clustering ACC in KKM.}
    \label{fig:KKM_loss_ACC}
\end{figure}
\begin{figure}[htbp]
    \centering
    \includegraphics[width=\linewidth]{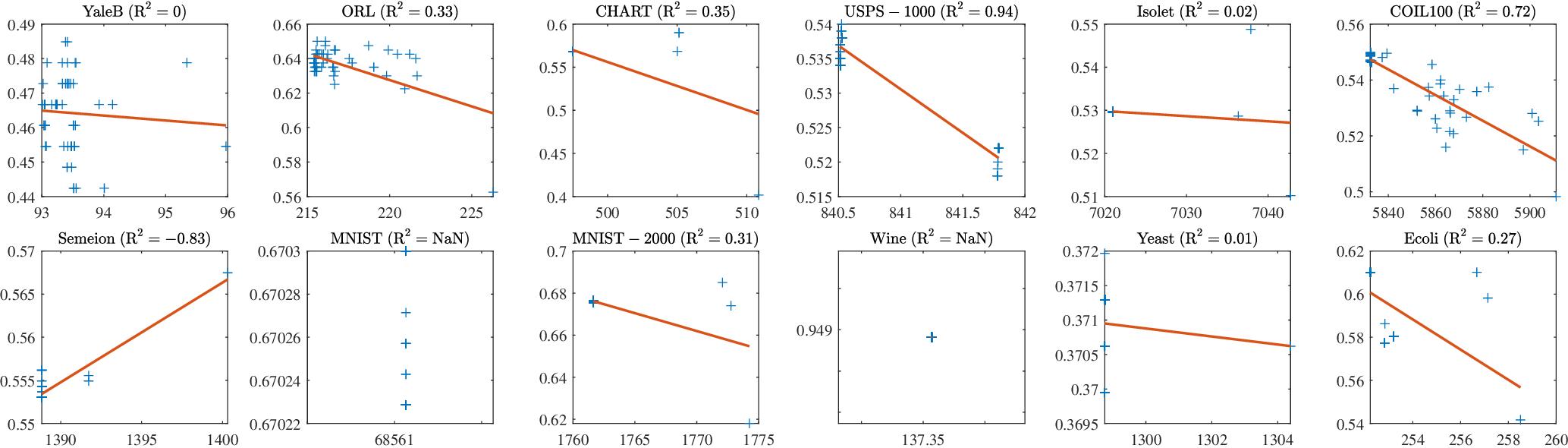}
    \caption{Correlation of objective function and clustering ACC in SR.}
    \label{fig:SR_loss_ACC}
\end{figure}
\begin{figure}[htbp]
    \centering
    \includegraphics[width=\linewidth]{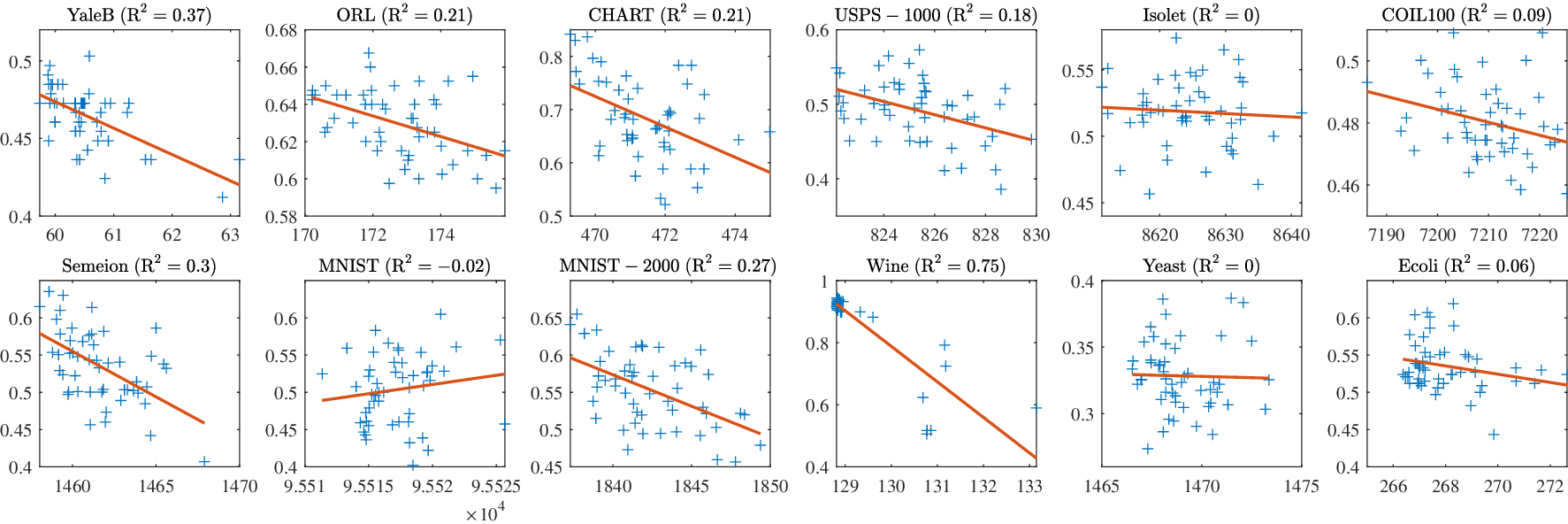}
    \caption{Correlation of objective function and clustering ACC in SymNMF.}
    \label{fig:SymNMF_loss_ACC}
\end{figure}
\begin{figure}[htbp]
    \centering
    \includegraphics[width=\linewidth]{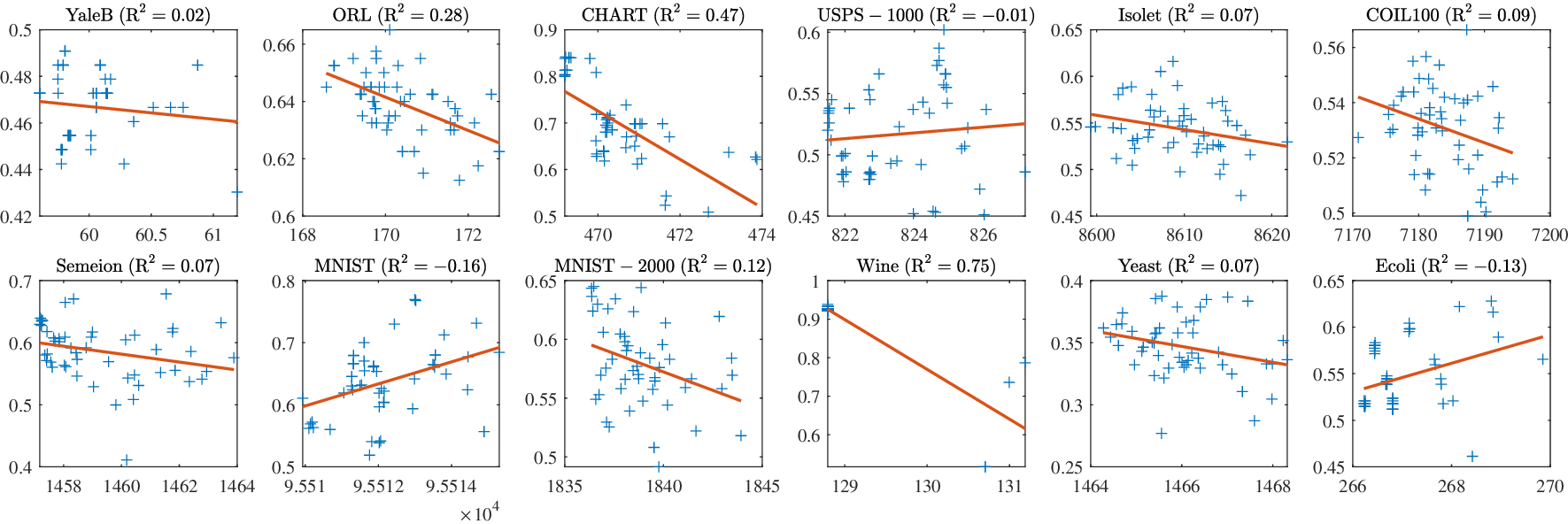}
    \caption{Correlation of objective function and clustering ACC in PHALS.}
    \label{fig:PHALS_loss_ACC}
\end{figure}
\begin{figure}[htbp]
    \centering
    \includegraphics[width=\linewidth]{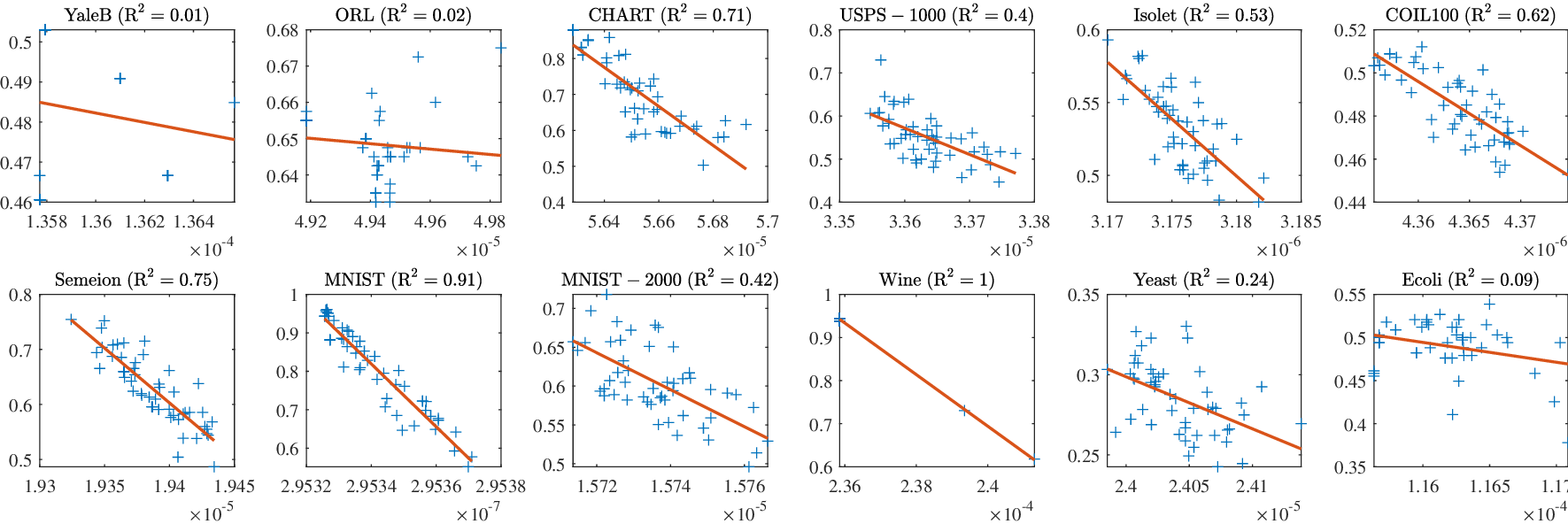}
    \caption{Correlation of objective function and clustering ACC in LoRD.}
    \label{fig:LoRD_loss_ACC}
\end{figure}
\begin{figure}[htbp]
    \centering
    \includegraphics[width=\linewidth]{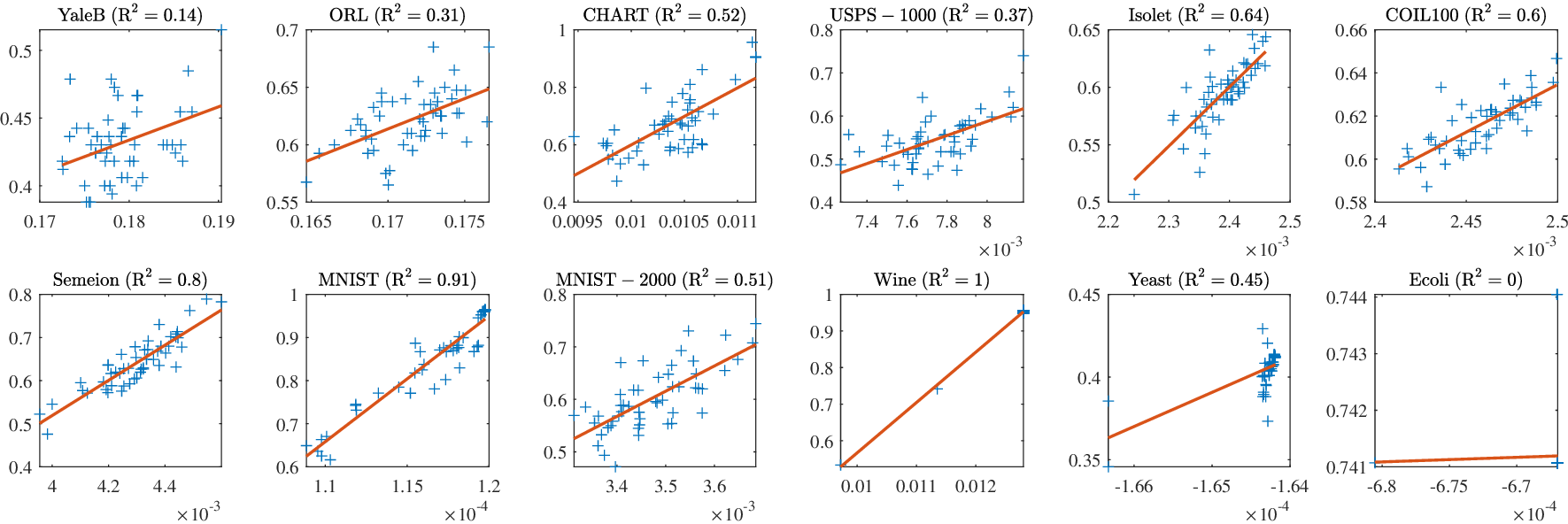}
    \caption{Correlation of objective function and clustering ACC in B-LoRD.}
    \label{fig:B-LoRD_loss_ACC}
\end{figure}

\newpage
\section{Proofs} \label{sec:proofs}
\subsection{Proof of Theorem \ref{theorem:partition-space}} \label{prooftheorem1}
\begin{proof}
    The proof is straightforward; the three conditions in Theorem \ref{theorem:partition-space} are proven as follows: \begin{itemize}
        \item First, for all $\mu \in \mathbb{S}_{+}^{k}$, we can construct $1_n \mu^T / n \in \Omega(\mu)$, which shows that $\Omega(\mu) \ne \emptyset$.
        \item Second, $\Omega(\mu)$ is a subspace of $\Omega$, which shows that $\bigcup_{\mu \in \mathbb{S}_{+}^{k}} \Omega(\mu) \subseteq \Omega$.
        Moreover, for all $V \in \Omega$, we have $1_n^T VV^T 1_n = 1 \Rightarrow V^T 1_n \in \mathbb{S}_{+}^{k}$, thus $V \in \Omega(V^T 1_n)$, which implies that $\Omega \subseteq \bigcup_{\mu \in \mathbb{S}_{+}^{k}} \Omega(\mu)$.
        Thus, $\Omega = \bigcup_{\mu \in \mathbb{S}_{+}^{k}} \Omega(\mu)$ holds.
        \item Third, suppose $V \in \Omega(\mu)$ and $V \in \Omega(\nu)$, we have $V^T 1_n = \mu = \nu$, which contradicts the condition $\mu \ne \nu$.
        Therefore, $\Omega(\mu) \cap \Omega(\nu) = \emptyset$.
    \end{itemize}
\end{proof}

\subsection{Proof of Theorem \ref{theorem:proba}}
\label{prooftheorem2}
\begin{proof}
    Let $\mu = [\sqrt{\P(c_1)}, \dots, \sqrt{\P(c_j)}]^T$, $\P(x_i) = 1_n / n$ and $V_{ij} = \frac{\P(y_i = j | x_i) \P(x_i)}{\sqrt{\P(c_j)}}$.
    According to $\P(y_i = j | x_i) \P(x_i) = \P(x_i | y_i = j) \P(c_j)$, we have:
    \begin{equation}
    \begin{aligned}
        (V \mu)_i &= \P(x_i)\sum_{j=1}^{k} \P(y_i = j | x_i) =\P(x_i) = \frac{1}{n}. \\
        (V^T 1_n)_{j} &= \sum_{i=1}^{n} \frac{\P(y_i = j | x_i) \P(x_i)}{\sqrt{\P(c_j)}}
        = \sqrt{\P(c_j)} \sum_{i=1}^{n} \P(x_i | y_i = j) 
        = \sqrt{\P(c_j)} = \mu_j,
    \end{aligned}
    \end{equation}
    which indicate that $V \in \Omega(\mu)$.
    
    Moreover, let $Z = n^2 V \mathrm{Diag}(\mu \odot \mu) V^T$.
    Under the conditional independence assumption, i.e., $\P(y_i | x_i) = \P(y_i | x_i, x_j)$ and $\P(y_i, y_j | x_i, x_j) = \P(y_i | x_i, x_j) \P(y_j | x_i, x_j)$, we have
    \begin{equation}
        Z_{ij} = \sum_{a=1}^{k} \P(y_i = a | x_i) \P(y_j = a | x_j)
        = \sum_{a=1}^{k} \P(y_i = a, y_j = a | x_i, x_j)
        = \P(y_i = y_j | x_i, x_j).
    \end{equation}
\end{proof}

\subsection{Proof of Theorem \ref{theorem:blk-diag}}
\label{prooftheorem4}
\begin{proof}
    Given $V \in \Omega$, we have the Laplacian of $VV^T$ is:
    \begin{equation}
        L_{VV^T} := \mathrm{Diag}(VV^T 1_n) - VV^T = 1_n / n - VV^T.
    \end{equation}
    Therfore, the first $n - k$ largest eigenvalues of $L_{VV^T}$ are all $1 / n$, and the last $k$ eiganvalues are:
    \begin{equation}
        \lambda_{n - i + 1}(L_{VV^T}) = \frac{1}{n} - \lambda_{i}(VV^T) = \frac{1}{n} - \sigma_i^2(V), \quad i = 1, \dots, k.
    \end{equation}
    Accordingly, $\|VV^T \|_{\Boxed{k}}$ can be simplified as:
    \begin{equation}
    \begin{aligned}
        \sum_{i=1}^{k} \lambda_{n \!-\! i \!+\! 1}(L_{VV^T}) = \frac{k}{n} \!-\! \sum_{i=1}^{k} \sigma_i^{2}(V) \!=\! \frac{k}{n} - \|V \|_F^2.
    \end{aligned}
    \end{equation}
    Moreover, according to $\proj_{\Omega_0(\mu)}(U)$ given in Lemma \ref{lemma:proj}, the least $k$-block diagonal case is
    $1_n \mu^T / n = \proj_{\Omega_0(\mu)}(0_{n \times k}) = \arg\min_{V \in \Omega(\mu)} \|V \|_F^2$,
    where $0_{n \times k}$ is an $n \times k$ matrix with all zeros. 
    For all $\mu \in \mathbb{S}_{+}^{k}$, the $k$-block diagonality of $1_n \mu^T / n$ are equal, i.e., $\|1_n \mu^T / n \|_F^2 = 1 / n$.

    The fully $k$-block diagonal case occurs when $V$ is orthogonal.
    For example, given a partition $G_1, \dots, G_k$, let $V_{ij} = 1 / \sqrt{n \times n_j}$ if $x_i \in G_j$ and zero otherwise.
    Then, $\|V \|_F^2 = \frac{k}{n}$, which implies that $\|VV^T \|_{\Boxed{k}} = 0$.
\end{proof}

\subsection{Proof of Theorem \ref{theorem:lipschitz}}
\label{prooftheorem5}
\begin{proof}
    To show that $\nabla_1 = 4 (VV^T - S)V$ and $\nabla_2 = -2(SV + \gamma V)$ are $L_1$- and $L_2$-Lipschitz continuous on $\Omega$, respectively, we need to prove:
    \begin{equation}
        \forall V, U \in \Omega(\mu), \quad \begin{cases}
            \|\nabla_1(V) - \nabla_1(U) \|_F \leq 4\left(3 / n + \|S\|_{\mathrm{op}} \right) \|V - U \|_F \\
            \|\nabla_2(V) - \nabla_2(U) \|_F \leq 2\left\|S + \gamma I_n \right\|_{\mathrm{op}} \|V - U \|_F
        \end{cases},
    \end{equation}
    where $\|\cdot \|_{\mathrm{op}}$ denotes the operator norm, i.e., the largest singular value of matrix.
    For $\nabla_2$, the proof is straightforward:
    \begin{equation}
        \|\nabla_2(V) - \nabla_2(U) \|_F 
        = \|2(S + \gamma I_n)(V - U) \|_F
        \leq 2\| S + \gamma I_n\|_{\mathrm{op}} \|V - U \|_F.
    \end{equation}
    For $\nabla_1$, we have:
    \begin{equation}
    \begin{aligned}
        \|\nabla_1(V) - \nabla_1(U) \|_F 
        &= \|4 (VV^T V - UU^T U) - 4S(V - U) \|_F \\
        &\leq 4 \|VV^T V - UU^T U \|_F + 4 \|S\|_{\mathrm{op}} \|V - U \|_F.
    \end{aligned}
    \label{aeq:upper_1}
    \end{equation}
    The upper bound of $\|VV^T V - UU^T U \|_F$ can be derived as follows:
    \begin{equation}
        \begin{aligned}
            \|VV^T V - UU^T U\|_F &= \|VV^T(V - U) + V(V - U)^T U + (V - U)UU^T\|_F \\
            &\leq \|VV^T (V - U) \|_F + \|V(V - U)^T U \|_F + \|(V - U)UU^T \|_F \\
            &\leq \left(\|VV^T \|_{\mathrm{op}} + \|V \|_{\mathrm{op}} \|U \|_{\mathrm{op}} + \|UU^T \|_{\mathrm{op}} \right) \|V - U \|_F \\
            &\leq 3 \|VV^T \|_{\mathrm{op}} \|V - U \|_F \\
            &= \frac{3}{n} \|V - U \|_F,
        \end{aligned}
        \label{aeq:upper_VU}
    \end{equation}
    where $\|VV^T \|_{\mathrm{op}} = 1 / n$ because $VV^T 1_n = 1_n/n$.
    Substituting Eq. \eqref{aeq:upper_VU} into Eq. \eqref{aeq:upper_1}, we finally obtain:
    \begin{equation}
        \|\nabla_1(V) - \nabla_1(U) \|_F \leq 4\left(3 / n + \|S \|_{\mathrm{op}} \right) \|V - U \|_F.
    \end{equation}
\end{proof}

\subsection{Proof of Lemma \ref{theorem:convergence}}
\label{prooflemma7}
\begin{proof}
    Suppose $\Omega(\mu) \subseteq \R^{n \times k}$ is colsed, convex and nonempty.
    The projector $\proj_{\Omega(\mu)}(U) = \mathop{\arg\min}_{V \in \Omega(\mu)} \|V - U \|_F^2$ satisfies the following important property:
    \begin{equation}
        \forall V \in \Omega(\mu), U \in \R^{n \times k}, \quad
        \left\langle \proj_{\Omega(\mu)}(U) - U, V - \proj_{\Omega(\mu)}(U) \right\rangle \geq 0.
        \label{aeq:proj_inner}
    \end{equation}
    Given $f(V)$ with a gradient $\nabla(V)$ that is $L$-Lipschitz continuous, $f(V)$ has a quadratic upper bound:
    \begin{equation}
    \begin{aligned}
        f(V^{t+1}) &\leq f(V^{t}) + \langle \nabla(V^{t}), V^{t+1} - V^{t} \rangle + \frac{L}{2} \|V^{t+1} - V^{t} \|_F^2 \\
        &= f(V^{t}) - \frac{L}{2} \|V^{t+1} - V^{t} \|_F^2 + L\langle V^{t+1} - (V^{t} - \nabla(V^{t})) / L, V^{t+1} - V^{t} \rangle
    \end{aligned}
    \end{equation}
    Recall that $V^{t+1} = \proj_{\Omega(\mu)}(V^{t} - \nabla(V^t) / L)$. 
    Substituting $V^{t} - \nabla(V^t) / L$ and $V^{t}$ into $U$ and $V$ in Eq. \eqref{aeq:proj_inner}, we have:
    \begin{equation}
        \langle V^{t+1} - (V^{t} - \nabla(V^{t})) / L, V^{t+1} - V^{t} \rangle \leq 0.
    \end{equation}
    Therefore, we get:
    \begin{equation}
    \begin{aligned}
        f(V^{t}) - f(V^{t+1}) &\geq \frac{L}{2} \|V^{t+1} - V^{t} \|_F^2 \\
        \Longrightarrow \quad f(V^{0}) - f(V^{t}) = \sum_{i=0}^{t} f(V^{i}) - f(V^{i+1}) &\geq  \frac{L}{2} \sum_{i=0}^{t} \|V^{i+1} - V^{i} \|_F^2.
    \end{aligned}
    \end{equation}
    Additionally, by applying $f(V^{0}) - f(V^{\ast}) \geq f(V^{0}) - f(V^{t})$ and $\sum_{i=0}^{t} \|V^{i+1} - V^{i} \|_F^2 \geq (t+1) \min\limits_{0 \leq i \leq t} \|V^{i+1} - V^{i} \|_F^2$, we finally get:
    \begin{equation}
        \min_{0 \leq i \leq t} \|V^{i+1} - V^{i} \|_F \leq \sqrt{\frac{2}{L} \frac{f(V^{0}) - f(V^{\ast})}{t+1}}.
    \end{equation}
\end{proof}

\subsection{Proof of Lemma \ref{lemma:proj}}
\label{prooflemma6}
\begin{proof}
    The projection problem of $U \in \R^{n \times k}$ onto $\Omega_0(\mu)$ is formulated as:
    \begin{equation}
        \proj_{\Omega_0(\mu)}(U) = \mathop{\arg\min}_{V^T 1_n = \mu, V\mu = 1_n/n} \frac{1}{2} \|V - U \|_F^2.
    \end{equation}
    Let $\alpha \in \R^{k}$ and $\beta \in \R^{n}$ be the lagrange multiplier for constraint $V^T 1_n = \mu$ and $V \mu = 1_n /n$, respectively, the Lagrangian $\mathcal{L}(V, \alpha, \beta)$ is:
    \begin{equation}
        \mathcal{L}(V, \alpha, \beta) = \frac{1}{2} \|V - U \|_F^2 + \alpha^T(V^T 1_n - \mu) + \beta^T (V\mu - 1_n / n).
    \end{equation}
    The partial derivative of $\mathcal{L}$ w.r.t. $V$ satisfies:
    \begin{equation}
        \frac{\partial \mathcal{L}}{\partial V} = V - U + 1_n \alpha^T + \beta \mu^T = 0
        \Longrightarrow
        V = U - 1_n \alpha^T - \beta \mu^T.
        \label{aeq:form_V}
    \end{equation}
    By applying the constraint conditions, we have:
    \begin{equation}
        \begin{cases}
            V^T 1_n = \mu \\
            V\mu = 1_n / n 
        \end{cases}
        \Longrightarrow
        \begin{cases}
            U^T 1_n - n\alpha - \mu 1_n^T \beta = \mu \\
            U\mu - 1_n \mu^T \alpha - \beta = 1_n/n
        \end{cases}
        \Longrightarrow
        \begin{bmatrix} n I_k & \mu 1_n^T \\ 1_n \mu^T & I_n \end{bmatrix}
        \begin{bmatrix} \alpha \\ \beta \end{bmatrix}
        =
        \begin{bmatrix} U^T 1_n - \mu \\ U \mu - 1_n / n \end{bmatrix}.
        \label{aeq:lin_alpha_beta}
    \end{equation}
    Therefore, $\alpha$ and $\beta$ can be obtained by solving the linear equation in Eq. \eqref{aeq:lin_alpha_beta}.
    By applying the LDU decomposition of the block matrix, we have:
    \begin{equation}
        \begin{bmatrix} n I_k & \mu 1_n^T \\ 1_n \mu^T & I_n \end{bmatrix} 
        \begin{bmatrix} \alpha \\ \beta \end{bmatrix}
        = 
        \begin{bmatrix} I_k & \mu 1_n^T \\ 0 & I_n \end{bmatrix}
        \begin{bmatrix} n I_k - n \mu \mu^T & 0 \\ 0 & I_n \end{bmatrix}
        \begin{bmatrix} I_k & 0 \\ 1_n \mu^T & I_n \end{bmatrix}
        \begin{bmatrix} \alpha \\ \beta \end{bmatrix}
        =
        \begin{bmatrix} U^T 1_n - \mu \\ U \mu - 1_n / n \end{bmatrix},
    \end{equation}
    which can be further simplified as:
    \begin{equation}
    \begin{aligned}
        \begin{bmatrix} n I_k - n \mu \mu^T & 0 \\ 0 & I_n \end{bmatrix}
        \begin{bmatrix} \alpha \\ 1_n \mu^T \alpha + \beta \end{bmatrix}
        &=
        \begin{bmatrix} I_k & \mu 1_n^T \\ 0 & I_n \end{bmatrix}^{-1}
        \begin{bmatrix} U^T 1_n - \mu \\ U \mu - 1_n / n \end{bmatrix}
        \\
        \Longrightarrow \quad
        \begin{bmatrix} n (I_k - \mu \mu^T)\alpha \\ 1_n \mu^T \alpha + \beta \end{bmatrix} 
        &=
        \begin{bmatrix} (I_k - \mu \mu^T) U^T 1_n \\ U \mu - 1_n / n \end{bmatrix}.
    \end{aligned}
    \end{equation}
    
    Note that $\mathrm{rank}(I_k - \mu \mu^T) = k - 1$, and the null space of $I_k - \mu \mu^T$ is spaned by the vector $\mu$.
    Therefore, the solution of the linear equation Eq. \eqref{aeq:lin_alpha_beta} is:
    \begin{equation}
        \begin{cases}
            \alpha = U^T 1_n / n + c \mu\\
            \beta = (I_n - 1_n 1_n^T / n)U \mu - 1_n / n - c 1_n
        \end{cases},
        \label{aeq:arg_alpha_beta}
    \end{equation}
    where $c \in \R$ is arbitrary.
    Nevertheless, $c$ does not appeared in the expression of $V$.
    To see this, by substituting Eq. \eqref{aeq:arg_alpha_beta} into Eq. \eqref{aeq:form_V}, we finally get:
    \begin{equation}
        V = U + \frac{1_n^T U \mu + 1}{n} 1_n \mu^T - \frac{1_n 1_n^T}{n} U - U \mu \mu^T.
    \end{equation}
\end{proof}

\vskip 0.2in
\bibliography{ref}

\begin{thebibliography}{47}
\providecommand{\natexlab}[1]{#1}
\providecommand{\url}[1]{\texttt{#1}}
\expandafter\ifx\csname urlstyle\endcsname\relax
  \providecommand{\doi}[1]{doi: #1}\else
  \providecommand{\doi}{doi: \begingroup \urlstyle{rm}\Url}\fi

\bibitem[Aloise et~al.(2009)Aloise, Deshpande, Hansen, and Popat]{aloise2009np}
Daniel Aloise, Amit Deshpande, Pierre Hansen, and Preyas Popat.
\newblock Np-hardness of euclidean sum-of-squares clustering.
\newblock \emph{Machine learning}, 75:\penalty0 245--248, 2009.

\bibitem[Beck(2017)]{Beck2017FirstOrderMI}
Amir Beck.
\newblock \emph{First-Order Methods in Optimization}.
\newblock MOS-SIAM Series on Optimization, 2017.

\bibitem[Berahmand et~al.(2025)Berahmand, Saberi-Movahed, Sheikhpour, Li, and Jalili]{berahmand2025comprehensive}
Kamal Berahmand, Farid Saberi-Movahed, Razieh Sheikhpour, Yuefeng Li, and Mahdi Jalili.
\newblock A comprehensive survey on spectral clustering with graph structure learning.
\newblock \emph{arXiv preprint arXiv:2501.13597}, 2025.

\bibitem[Boyle and Dykstra(1986)]{boyle1986method}
James~P Boyle and Richard~L Dykstra.
\newblock A method for finding projections onto the intersection of convex sets in hilbert spaces.
\newblock In \emph{Advances in Order Restricted Statistical Inference: Proceedings of the Symposium on Order Restricted Statistical Inference held in Iowa City, Iowa, September 11--13, 1985}, pages 28--47. Springer, 1986.

\bibitem[Cappellini et~al.(2009)Cappellini, Sommers, Bruzda, and {\.Z}yczkowski]{cappellini2009random}
Valerio Cappellini, Hans-J{\"u}rgen Sommers, Wojciech Bruzda, and Karol {\.Z}yczkowski.
\newblock Random bistochastic matrices.
\newblock \emph{Journal of Physics A: Mathematical and Theoretical}, 42\penalty0 (36):\penalty0 365209, 2009.

\bibitem[Chen and Yang(2021)]{chen2021cutoff}
Xiaohui Chen and Yun Yang.
\newblock Cutoff for exact recovery of gaussian mixture models.
\newblock \emph{IEEE Transactions on Information Theory}, 67\penalty0 (6):\penalty0 4223--4238, 2021.

\bibitem[Chowdhury and Needham(2021)]{chowdhury2021generalized}
Samir Chowdhury and Tom Needham.
\newblock Generalized spectral clustering via gromov-wasserstein learning.
\newblock In \emph{International Conference on Artificial Intelligence and Statistics}, pages 712--720. PMLR, 2021.

\bibitem[Combettes and Pesquet(2009)]{Combettes2009ProximalSM}
Patrick~L. Combettes and Jean-Christophe Pesquet.
\newblock Proximal splitting methods in signal processing.
\newblock In \emph{Fixed-Point Algorithms for Inverse Problems in Science and Engineering}, 2009.

\bibitem[Dhillon et~al.(2004)Dhillon, Guan, and Kulis]{dhillon2004kernel}
Inderjit~S Dhillon, Yuqiang Guan, and Brian Kulis.
\newblock Kernel k-means: spectral clustering and normalized cuts.
\newblock In \emph{Proceedings of the tenth ACM SIGKDD international conference on Knowledge discovery and data mining}, pages 551--556, 2004.

\bibitem[Ding et~al.(2005)Ding, He, and Simon]{ding2005equivalence}
Chris Ding, Xiaofeng He, and Horst~D Simon.
\newblock On the equivalence of nonnegative matrix factorization and spectral clustering.
\newblock In \emph{Proceedings of the 2005 SIAM international conference on data mining}, pages 606--610. SIAM, 2005.

\bibitem[Feng et~al.(2014)Feng, Lin, Xu, and Yan]{feng2014robust}
Jiashi Feng, Zhouchen Lin, Huan Xu, and Shuicheng Yan.
\newblock Robust subspace segmentation with block-diagonal prior.
\newblock In \emph{Proceedings of the IEEE conference on computer vision and pattern recognition}, pages 3818--3825, 2014.

\bibitem[Giraud and Verzelen(2019)]{giraud2019partial}
Christophe Giraud and Nicolas Verzelen.
\newblock Partial recovery bounds for clustering with the relaxed $ k $-means.
\newblock \emph{Mathematical Statistics and Learning}, 1\penalty0 (3):\penalty0 317--374, 2019.

\bibitem[He and Zhang(2023)]{he2023doubly}
Li~He and Hong Zhang.
\newblock Doubly stochastic distance clustering.
\newblock \emph{IEEE Transactions on Circuits and Systems for Video Technology}, 33\penalty0 (11):\penalty0 6721--6732, 2023.

\bibitem[Hou et~al.(2022)Hou, Chu, and Liao]{hou2022progressive}
Liangshao Hou, Delin Chu, and Li-Zhi Liao.
\newblock A progressive hierarchical alternating least squares method for symmetric nonnegative matrix factorization.
\newblock \emph{IEEE Transactions on Pattern Analysis and Machine Intelligence}, 45\penalty0 (5):\penalty0 5355--5369, 2022.

\bibitem[Huang et~al.(2013)Huang, Nie, and Huang]{huang2013spectral}
Jin Huang, Feiping Nie, and Heng Huang.
\newblock Spectral rotation versus k-means in spectral clustering.
\newblock In \emph{Proceedings of the AAAI Conference on Artificial Intelligence}, volume~27, pages 431--437, 2013.

\bibitem[Jia et~al.(2021)Jia, Liu, Hou, Kwong, and Zhang]{jia2021self}
Yuheng Jia, Hui Liu, Junhui Hou, Sam Kwong, and Qingfu Zhang.
\newblock Self-supervised symmetric nonnegative matrix factorization.
\newblock \emph{IEEE Transactions on Circuits and Systems for Video Technology}, 32\penalty0 (7):\penalty0 4526--4537, 2021.

\bibitem[Julien(2022)]{julien2022Learning}
Ah-Pine Julien.
\newblock Learning doubly stochastic and nearly idempotent affinity matrix for graph-based clustering.
\newblock \emph{European Journal of Operational Research}, 299\penalty0 (3):\penalty0 1069--1078, 2022.
\newblock ISSN 0377-2217.

\bibitem[Kang et~al.(2021)Kang, Peng, Cheng, Liu, Peng, Xu, and Tian]{kang2021structured}
Zhao Kang, Chong Peng, Qiang Cheng, Xinwang Liu, Xi~Peng, Zenglin Xu, and Ling Tian.
\newblock Structured graph learning for clustering and semi-supervised classification.
\newblock \emph{Pattern Recognition}, 110:\penalty0 107627, 2021.

\bibitem[Kuang et~al.(2012)Kuang, Ding, and Park]{kuang2012symmetric}
Da~Kuang, Chris Ding, and Haesun Park.
\newblock Symmetric nonnegative matrix factorization for graph clustering.
\newblock In \emph{Proceedings of the 2012 SIAM international conference on data mining}, pages 106--117. SIAM, 2012.

\bibitem[Kuang et~al.(2015)Kuang, Yun, and Park]{kuang2015symnmf}
Da~Kuang, Sangwoon Yun, and Haesun Park.
\newblock Symnmf: nonnegative low-rank approximation of a similarity matrix for graph clustering.
\newblock \emph{Journal of Global Optimization}, 62:\penalty0 545--574, 2015.

\bibitem[Kulis et~al.(2007)Kulis, Surendran, and Platt]{kulis2007fast}
Brian Kulis, Arun~C Surendran, and John~C Platt.
\newblock Fast low-rank semidefinite programming for embedding and clustering.
\newblock In \emph{Artificial Intelligence and Statistics}, pages 235--242. PMLR, 2007.

\bibitem[Long et~al.(2007)Long, Zhang, Wu, and Yu]{long2007relational}
Bo~Long, Zhongfei Zhang, Xiaoyun Wu, and Philip~S Yu.
\newblock Relational clustering by symmetric convex coding.
\newblock In \emph{Proceedings of the 24th international conference on Machine learning}, pages 569--576, 2007.

\bibitem[Lu et~al.(2018)Lu, Feng, Lin, Mei, and Yan]{lu2018subspace}
Canyi Lu, Jiashi Feng, Zhouchen Lin, Tao Mei, and Shuicheng Yan.
\newblock Subspace clustering by block diagonal representation.
\newblock \emph{IEEE transactions on pattern analysis and machine intelligence}, 41\penalty0 (2):\penalty0 487--501, 2018.

\bibitem[Montesuma et~al.(2024)Montesuma, Mboula, and Souloumiac]{montesuma2024recent}
Eduardo~Fernandes Montesuma, Fred Maurice~Ngole Mboula, and Antoine Souloumiac.
\newblock Recent advances in optimal transport for machine learning.
\newblock \emph{IEEE Transactions on Pattern Analysis and Machine Intelligence}, 2024.

\bibitem[Nie et~al.(2014)Nie, Wang, and Huang]{nie2014clustering}
Feiping Nie, Xiaoqian Wang, and Heng Huang.
\newblock Clustering and projected clustering with adaptive neighbors.
\newblock In \emph{Proceedings of the 20th ACM SIGKDD international conference on Knowledge discovery and data mining}, pages 977--986, 2014.

\bibitem[Nie et~al.(2024)Nie, Liu, Wang, and Li]{nie2024novel}
Feiping Nie, Chaodie Liu, Rong Wang, and Xuelong Li.
\newblock A novel and effective method to directly solve spectral clustering.
\newblock \emph{IEEE Transactions on Pattern Analysis and Machine Intelligence}, 2024.

\bibitem[Park and Kim(2017)]{park2017learning}
Jiwoong Park and Taejeong Kim.
\newblock Learning doubly stochastic affinity matrix via davis-kahan theorem.
\newblock In \emph{2017 IEEE International Conference on Data Mining (ICDM)}, pages 377--384. IEEE, 2017.

\bibitem[Peng and Wei(2007)]{peng2007approximating}
Jiming Peng and Yu~Wei.
\newblock Approximating k-means-type clustering via semidefinite programming.
\newblock \emph{SIAM journal on optimization}, 18\penalty0 (1):\penalty0 186--205, 2007.

\bibitem[Schaeffer(2007)]{schaeffer2007graph}
Satu~Elisa Schaeffer.
\newblock Graph clustering.
\newblock \emph{Computer science review}, 1\penalty0 (1):\penalty0 27--64, 2007.

\bibitem[Shi and Malik(2000)]{shi2000normalized}
Jianbo Shi and Jitendra Malik.
\newblock Normalized cuts and image segmentation.
\newblock \emph{IEEE Transactions on pattern analysis and machine intelligence}, 22\penalty0 (8):\penalty0 888--905, 2000.

\bibitem[Sinkhorn(1964)]{sinkhorn1964relationship}
Richard Sinkhorn.
\newblock A relationship between arbitrary positive matrices and doubly stochastic matrices.
\newblock \emph{The annals of mathematical statistics}, 35\penalty0 (2):\penalty0 876--879, 1964.

\bibitem[Sun et~al.(2020)Sun, Toh, Yuan, and Zhao]{sun2020sdpnal+}
Defeng Sun, Kim-Chuan Toh, Yancheng Yuan, and Xin-Yuan Zhao.
\newblock Sdpnal+: A matlab software for semidefinite programming with bound constraints (version 1.0).
\newblock \emph{Optimization Methods and Software}, 35\penalty0 (1):\penalty0 87--115, 2020.

\bibitem[Tzortzis and Likas(2009)]{tzortzis2009global}
Grigorios~F Tzortzis and Aristidis~C Likas.
\newblock The global kernel $ k $-means algorithm for clustering in feature space.
\newblock \emph{IEEE transactions on neural networks}, 20\penalty0 (7):\penalty0 1181--1194, 2009.

\bibitem[Van~Assel et~al.(2024)Van~Assel, Vincent-Cuaz, Courty, Flamary, Frossard, and Vayer]{van2024distributional}
Hugues Van~Assel, C{\'e}dric Vincent-Cuaz, Nicolas Courty, R{\'e}mi Flamary, Pascal Frossard, and Titouan Vayer.
\newblock Distributional reduction: Unifying dimensionality reduction and clustering with gromov-wasserstein projection.
\newblock \emph{arXiv preprint arXiv:2402.02239}, 2024.

\bibitem[Von~Luxburg(2007)]{von2007tutorial}
Ulrike Von~Luxburg.
\newblock A tutorial on spectral clustering.
\newblock \emph{Statistics and computing}, 17:\penalty0 395--416, 2007.

\bibitem[Wang et~al.(2023)Wang, Chen, Lu, Zhang, Nie, and Li]{wang2023discrete}
Rong Wang, Huimin Chen, Yihang Lu, Qianrong Zhang, Feiping Nie, and Xuelong Li.
\newblock Discrete and balanced spectral clustering with scalability.
\newblock \emph{IEEE Transactions on Pattern Analysis and Machine Intelligence}, 2023.

\bibitem[Wang et~al.(2016)Wang, Nie, and Huang]{wang2016structured}
Xiaoqian Wang, Feiping Nie, and Heng Huang.
\newblock Structured doubly stochastic matrix for graph based clustering: Structured doubly stochastic matrix.
\newblock In \emph{Proceedings of the 22nd ACM SIGKDD International conference on Knowledge discovery and data mining}, pages 1245--1254, 2016.

\bibitem[Wu et~al.(2022)Wu, Nie, Lu, Wang, and Li]{wu2022effective}
Danyang Wu, Feiping Nie, Jitao Lu, Rong Wang, and Xuelong Li.
\newblock Effective clustering via structured graph learning.
\newblock \emph{IEEE Transactions on Knowledge and Data Engineering}, 35\penalty0 (8):\penalty0 7909--7920, 2022.

\bibitem[Xie et~al.(2017)Xie, Guo, Liu, and Wang]{xie2017implicit}
Xingyu Xie, Xianglin Guo, Guangcan Liu, and Jun Wang.
\newblock Implicit block diagonal low-rank representation.
\newblock \emph{IEEE Transactions on Image Processing}, 27\penalty0 (1):\penalty0 477--489, 2017.

\bibitem[Xu et~al.(2019)Xu, Luo, and Carin]{xu2019scalable}
Hongteng Xu, Dixin Luo, and Lawrence Carin.
\newblock Scalable gromov-wasserstein learning for graph partitioning and matching.
\newblock \emph{Advances in neural information processing systems}, 32, 2019.

\bibitem[Xue et~al.(2024)Xue, Xing, Wang, Fan, Kong, Zhang, Nie, and Li]{xue2024comprehensive}
Jingjing Xue, Liyin Xing, Yuting Wang, Xinyi Fan, Lingyi Kong, Qi~Zhang, Feiping Nie, and Xuelong Li.
\newblock A comprehensive survey of fast graph clustering.
\newblock \emph{Vicinagearth}, 1\penalty0 (1):\penalty0 7, 2024.

\bibitem[Yang and Oja(2012)]{yang2012clustering}
Zhirong Yang and Erkki Oja.
\newblock Clustering by low-rank doubly stochastic matrix decomposition.
\newblock \emph{arXiv preprint arXiv:1206.4676}, 2012.

\bibitem[Yang et~al.(2016)Yang, Corander, and Oja]{yang2016low}
Zhirong Yang, Jukka Corander, and Erkki Oja.
\newblock Low-rank doubly stochastic matrix decomposition for cluster analysis.
\newblock \emph{Journal of Machine Learning Research}, 17\penalty0 (187):\penalty0 1--25, 2016.

\bibitem[Zass and Shashua(2005)]{zass2005unifying}
Ron Zass and Amnon Shashua.
\newblock A unifying approach to hard and probabilistic clustering.
\newblock In \emph{Tenth IEEE International Conference on Computer Vision (ICCV'05) Volume 1}, volume~1, pages 294--301. IEEE, 2005.

\bibitem[Zass and Shashua(2006)]{zass2006doubly}
Ron Zass and Amnon Shashua.
\newblock Doubly stochastic normalization for spectral clustering.
\newblock \emph{Advances in neural information processing systems}, 19, 2006.

\bibitem[Zelnik-Manor and Perona(2004)]{zelnik2004self}
Lihi Zelnik-Manor and Pietro Perona.
\newblock Self-tuning spectral clustering.
\newblock \emph{Advances in neural information processing systems}, 17, 2004.

\bibitem[Zhuang et~al.(2024)Zhuang, Chen, Yang, and Zhang]{zhuang2024statistically}
Yubo Zhuang, Xiaohui Chen, Yun Yang, and Richard~Y. Zhang.
\newblock Statistically optimal \$k\$-means clustering via nonnegative low-rank semidefinite programming.
\newblock In \emph{The Twelfth International Conference on Learning Representations}, 2024.
\newblock URL \url{https://openreview.net/forum?id=v7ZPwoHU1j}.

\end{thebibliography}

\end{document}